\documentclass[10pt,twocolumn,letterpaper]{article}

\usepackage[pagenumbers]{cvpr} 

\usepackage{amsmath,amsfonts,bm}

\def\eqref#1{equation~\ref{#1}}

\def\1{\bm{1}}

\def\va{{\bm{a}}}
\def\vb{{\bm{b}}}

\def\mA{{\bm{A}}}
\def\mB{{\bm{B}}}

\def\mS{{\bm{S}}}

\def\mU{{\bm{U}}}
\def\mV{{\bm{V}}}
\def\mW{{\bm{W}}}
\def\mX{{\bm{X}}}

\def\mSigma{{\bm{\Sigma}}}

\DeclareMathAlphabet{\mathsfit}{\encodingdefault}{\sfdefault}{m}{sl}
\SetMathAlphabet{\mathsfit}{bold}{\encodingdefault}{\sfdefault}{bx}{n}

\usepackage{url}

\usepackage{booktabs}
\usepackage{multirow}
\usepackage{graphicx}
\usepackage{colortbl}
\usepackage{xcolor}

\usepackage{amsthm}
\usepackage{thm-restate}

\usepackage{adjustbox}
\usepackage{wrapfig}

\usepackage{xspace}
\newcommand{\method}{\textsc{TARA}\xspace}

\usepackage{enumitem}

\usepackage{subcaption}

\newlength{\tightvspace}
\definecolor{cvprblue}{rgb}{0.21,0.49,0.74}
\usepackage[pagebackref,breaklinks,colorlinks,allcolors=cvprblue]{hyperref}

\def\paperID{} 
\def\confName{CVPR}
\def\confYear{2026}

\title{Preference-Aligned LoRA Merging: Preserving Subspace Coverage and Addressing Directional Anisotropy}

\author{
    Wooseong Jeong$^*$ \\
    KAIST \\
    {\tt\small stk14570@kaist.ac.kr}
    \and
    Wonyoung Lee$^*$ \\
    KAIST \\
    {\tt\small wylee@kaist.ac.kr}
    \and
    Kuk-Jin Yoon \\
    KAIST \\
    {\tt\small kjyoon@kaist.ac.kr}
}

\begin{document}
\maketitle

\renewcommand{\thefootnote}{\fnsymbol{footnote}}
\footnotetext[1]{Equal contribution to this work.}

\begin{abstract}
	Merging multiple Low-Rank Adaptation (LoRA) modules  is promising for constructing general-purpose systems, 
    yet challenging because LoRA update directions span different subspaces and contribute unevenly.
    When merged naively, such mismatches can weaken the directions most critical to certain task losses while overemphasizing relatively less important ones, ultimately reducing the model’s ability to represent all tasks faithfully. We revisit this problem through two perspectives: subspace coverage, which captures how broadly LoRA directions cover diverse representational directions, and anisotropy, which reflects the imbalance of influence across those directions. 
    We propose TARA-Merging (\emph{Task-Rank Anisotropy Alignment}), which aligns merging weights using a preference-weighted cross-entropy pseudo-loss while preserving task-relevant LoRA subspaces. This ensures broad subspace coverage and mitigates anisotropy via direction-wise reweighting.
    Across eight vision and six NLI benchmarks, TARA-Merging consistently outperforms vanilla and LoRA-aware baselines, demonstrating strong robustness and generalization, and highlighting the importance of addressing both subspace coverage and anisotropy in LoRA merging. 
Code is available at \url{https://github.com/wooseong97/TARA-Merge}.
\end{abstract}

\section{Introduction}

Low-Rank Adaptation (LoRA)~\cite{hu2022lora} is widely used to adapt large foundation models to downstream tasks, including large language models and vision-language models such as CLIP~\cite{radford2021learning}. By introducing lightweight low-rank adapters, LoRA enables efficient fine-tuning with far fewer trainable parameters and reduced memory overhead, while mitigating the risk of overfitting when only limited data are available. This efficiency has encouraged training separate LoRA adapters on many different datasets, each capturing distinct aspects of vision or language understanding. Combining task-specific LoRA modules into a single network falls under the paradigm of \emph{model merging}~\cite{yang2024model}, which provides an attractive alternative to costly multi-task training~\cite{jeong2024quantifying,jeong2025selective,jeong2025resolving,jeong2025synchronizing} and enables the construction of general-purpose models that flexibly integrate knowledge across tasks and reflect different task preferences.

Recent research has explored various model merging. Early baselines such as task arithmetic~\cite{ilharco2022editing, ortiz2024task, jin2024fine} treat fine-tuned models as linear updates and combine them through a linear combination. Ties-merging~\cite{yadav2023ties} and DARE~\cite{yu2024language} reduces interference by dropping parameters, while AdaMerging~\cite{yang2023adamerging} adapts merging coefficients using gradients derived from an entropy-based surrogate loss. Recent work has targeted model merging specifically for LoRA. KnOTS~\cite{stoica2025knots} aligns adapters using a shared SVD basis, and LoRA-LEGO~\cite{zhao2025loralego} clusters rank-wise units to preserve modularity.
Focusing on LoRA adapters is promising, as LoRA fine-tuning has become standard practice for modern LLMs and VLMs.
Moreover, operating at the adapter level substantially reduces memory and compute overhead compared to merging full model weights, enabling even gradient-based approaches such as AdaMerging to scale to foundation models.
In addition, While several approaches have been proposed, they often overlook at least one of the two fundamental challenges we emphasize in this work, namely preserving subspace coverage and addressing directional anisotropy.

LoRA produces a low-rank update that we decompose into rank-wise components for analysis. We collectively term these directions as \emph{LoRA directions}. With this setup, two aspects are essential for successful merging: \emph{subspace coverage} and \emph{anisotropy}. Subspace coverage captures how widely the LoRA directions collectively span the representational subspace, so preserving coverage ensures access to diverse task-relevant information. Anisotropy refers to directional sensitivity imbalance. Even with sufficient coverage, task losses can be unequally sensitive to different LoRA directions, which can distort trade-offs if ignored. Furthermore, when task preferences are specified, effective merging must preserve coverage while aligning merging weights with those preferences. Most merging methods address only one side of the problem. Task arithmetic and pruning reduce interference but erode subspace coverage and ignore direction-wise sensitivity. AdaMerging tunes global weights yet misses per-direction anisotropy. LoRA-aware methods like KnOTS and LoRA-LEGO improve structure but can shrink effective coverage and still lack sensitivity weighting. In short, they do not jointly preserve coverage and model anisotropy over LoRA directions.

Building on these observations, we propose \method, short for \emph{Task-Rank Anisotropy Alignment}. Our method introduces a structured merging framework that explicitly incorporates task preferences while addressing the two properties above. It preserves LoRA directions to maintain subspace coverage and accounts for anisotropy by direction-wise reweighting according to sensitivity. It then aligns the merging weights with a preference-weighted entropy pseudo loss, yielding merged models that faithfully approximate fine-tuned models across tasks while improving robustness and generalization. We validate our approach across eight vision datasets and six natural language inference tasks, where \method consistently outperforms vanilla baselines and recent LoRA-aware methods in joint-task evaluation and transfer to unseen tasks.

Our main contributions are summarized as follows:
\begin{itemize}[leftmargin=*, itemsep=2pt, topsep=2pt]
	\item Identification of \textbf{subspace coverage} and \textbf{anisotropy} as two key properties for analyzing and guiding LoRA merging, highlighting their roles in subspace preservation and direction-wise sensitivity.
	\item Introduction of \textbf{\method}, a framework that aligns both task-level and rank-level preferences, ensuring coverage of LoRA subspaces while mitigating anisotropy-induced imbalance.
	\item Extensive empirical validation on vision and language benchmarks, demonstrating state-of-the-art performance and strong robustness in various experimental settings.
\end{itemize}

\section{Related work}
\textbf{Model Merging.} Model merging integrates knowledge from separately fine-tuned models without joint multi-task training~\cite{}, reducing computational cost while preserving task performance \cite{yang2024model, gargiulo2025task, marczak2025no}.
Core baselines include \emph{Task Arithmetic}, which adds or subtracts task vectors \cite{ilharco2022editing}, \emph{RegMean}, which requires inner-product matrices of layer inputs for data-free fusion \cite{jin2022dataless}, \emph{TIES}, which resolves sign conflicts and prunes small deltas \cite{yadav2023ties}, and \emph{DARE}, which sparsifies update deltas \cite{yu2024language}.
\emph{AdaMerging} learns layer-wise merge coefficients via entropy minimization to improve multi-task performance \cite{yang2023adamerging}. For models trained on different data or from different initializations, permutation alignment preserves linear mode connectivity \cite{entezari2021role,ainsworth2022git}.
\emph{ZipIt} matches intermediate features to re-basin networks and align neurons across models, enabling training-free layer alignment \cite{stoica2023zipit}.
Uncertainty-aware combinations mitigate mismatch through Fisher-weighted averaging \cite{matena2022merging} and uncertainty or gradient matching \cite{daheim2023model}.
Preference-aware settings motivate multi-objective formulations such as Pareto merging \cite{chen2024pareto} and smooth scalarization \cite{lin2024smooth}.
Overall, recent work advances simple averaging into alignment- and uncertainty-aware schemes with explicit trade-off control.

\vspace{3pt}
\noindent \textbf{LoRA-aware Merging.}
For large networks including foundation models such as LLMs~\cite{dubey2024llama}, Low-Rank Adaptation (LoRA)~\cite{hu2022lora} is a widely used fine-tuning scheme. Conventional parameter-space merging methods~\cite{jin2022dataless,ilharco2022editing,yadav2023ties,yu2024language} transfer poorly to LoRA adapters, motivating LoRA-aware approaches~\cite{stoica2025knots,zhao2025loralego,tang2025lora}. In particular, \citet{stoica2025knots} argue that LoRA-updated models exhibit weaker cross-model representation alignment than full-rank finetunes, and propose \textsc{KnOTS}, which concatenates adapter updates and applies an SVD to align them in a shared subspace before merging principal components. \citet{zhao2025loralego} introduce \emph{minimal semantic units} (MSUs) and perform clustering to assemble a merged adapter with an adjustable effective rank. Related mixtures of LoRAs have also been explored for image generation, including multi-LoRA composition and concept mixing~\cite{zhong2024multi,gandikota2024concept,zhuang2025timestep}. Additional related work and connections to our approach are discussed in \Cref{sec:additional-related-work}.

General model merging ignore the factorized structure of LoRA and often suffer from strong misalignment when adapters are folded into the base model before merging. In contrast, LoRA-aware methods operate on the low-rank factors, which is both efficient and more effective for combining updates. In this work we explicitly account for \emph{subspace coverage} and \emph{anisotropy}, two critical aspects of LoRA merging that prior methods largely overlook.

\section{Subspace Coverage and Anisotropy}
\subsection{Problem Setting and Notation}
\label{subsec:notation}
Modern foundation models in language and vision are increasingly adapted with LoRA rather than full fine-tuning, which motivates principled ways to combine many LoRA adapters into a single model. In practice, directly merging adapters by folding them into the base weights and applying model-merging often underperforms due to strong cross-task misalignment~\cite{stoica2025knots}. These factors call for a merging approaches that is aware of the LoRA structure. We therefore study \emph{LoRA merging}, where multiple LoRA fine-tuned adapters are combined into one model that balances task performance according to a user-specified preference.

\vspace{3pt}
\noindent\textbf{Base model and LoRA adapters.}
Let $\bm W_{0}\in\mathbb{R}^{d\times m}$ be the frozen pretrained weight matrix.
For each task $i \in \{1,\dots,N\}$, we attach a LoRA adapter of rank $r_i$, parameterized as
\begin{equation}
	\Delta\bm W_{i} \;=\; \bm B_{i}\bm A_{i}^{\!\top},
	\qquad
	\bm B_{i}\in\mathbb{R}^{d\times r_i},\;
	\bm A_{i}\in\mathbb{R}^{m\times r_i}.
\end{equation}
Thus, $\operatorname{rank}(\Delta\bm W_{i}) \le r_i \ll \min(d,m)$.
We further decompose each as
$\Delta\bm W_{i} = \sum_{j=1}^{r_i}\vb_{ij}\va_{ij}^{\!\top}$,
where $\vb_{ij}$ and $\va_{ij}$ denote the $j$-th columns of $\bm B_{i}$ and $\bm A_{i}$.
Each outer product $\vb_{ij} \va_{ij}^{\!\top}$ is referred to as a \emph{rank-1 LoRA direction}.

\vspace{3pt}
\noindent\textbf{User preferences.}
We assume that the relative importance of tasks is specified by a vector
$\bm\rho=[\rho_1,\dots,\rho_N]^{\top}\in\Delta_{N-1}$,
where $\Delta_{N-1}$ denotes the probability simplex.
A larger $\rho_i$ indicates higher priority for task $i$.

\vspace{3pt}
\noindent\textbf{Merging as Multi-Objective Optimization.}
Without fixing preferences, LoRA merging can be cast as a multi-objective optimization (MOO) problem \cite{miettinen1999nonlinear} that seeks a set of solutions. We minimize the objective vector $\bm f(\bm W)=[f_1(\bm W),\dots,f_N(\bm W)]^{\top}$ over merged models $\bm W$ obtained by linearly combining available LoRA adapters on top of a fixed base $\bm W_0$. A solution $\bm W^a$ \emph{dominates} $\bm W^b$ if $f_k(\bm W^a)\le f_k(\bm W^b)$ for all $k\in\{1,\dots,N\}$ and $f_j(\bm W^a)<f_j(\bm W^b)$ for some $j$. A solution is \emph{Pareto optimal} if no other feasible solution dominates it. The \emph{Pareto front} is the collection of Pareto-optimal solutions.

\subsection{Preserving LoRA Subspace Coverage}
\begin{figure}[t]
	\centering
	\includegraphics[width=0.85\linewidth]{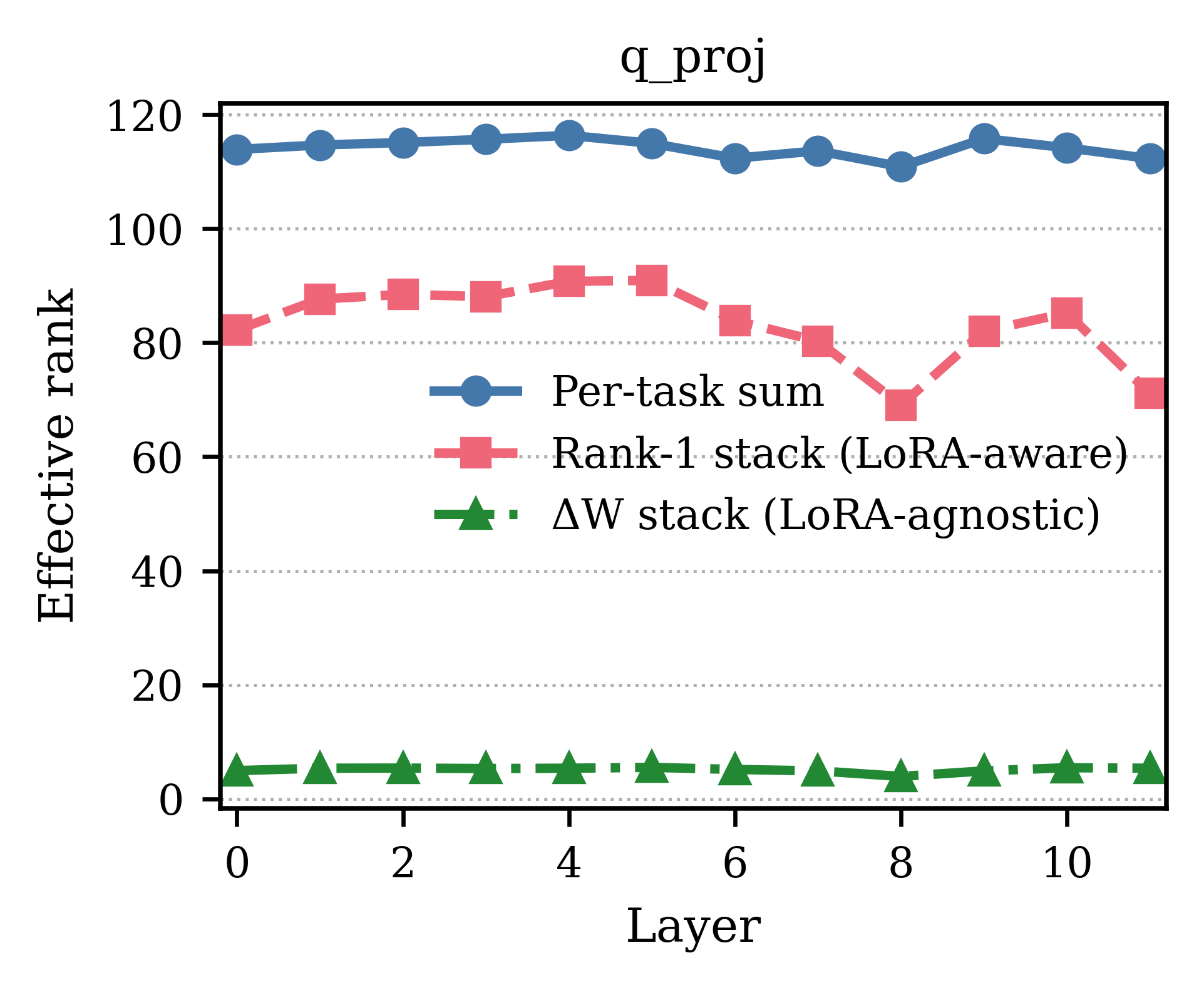}
	\caption{\textbf{Effective rank} across layers for an attention projection layer (e.g., query).
		The gap between \emph{$\Delta \bm X$ stack} and \emph{Rank-1 stack} reflects merge-induced collapse.}
	\label{fig:lines-qproj-raw}
\end{figure}

We quantify how much task-specific representational capacity is retained when LoRA adapters are merged.
We refer to this as \emph{subspace coverage}: it measures how broadly the task-specific LoRA directions span the parameter space and how well a merge preserves that span.
We measure subspace coverage using the entropy-based \emph{effective rank} (erank)~\cite{roy2007effective}.
Let $\bm X\in\mathbb{R}^{d\times m}$ have singular values $\{\sigma_k\}_{k=1}^{r_X}$ where $r_X=\mathrm{rank}(\bm X)$, and define $p_k=\sigma_k^2\big/\sum_{j=1}^{r_X}\sigma_j^2$.
Then the effective rank is
\begin{equation}
	\mathrm{erank}(\bm X)=\exp\!\Big(-\sum_{k=1}^{r_X} p_k \log p_k\Big).
\end{equation}
$\mathrm{erank}(\bm X)$ equals $r_X$ for a flat spectrum and decreases as energy concentrates, providing a basis-invariant measure of effective dimensionality.

We compare three quantities using effective rank, with all stacks formed by vectorizing matrices and placing one vector per row. First, \emph{per-task sum} computes \(\mathrm{erank}\) for each task \(i\) on \(\mX_i=[\operatorname{vec}(\vb_{i1}\va_{i1}^{\top}),\dots,\operatorname{vec}(\vb_{ir_i}\va_{ir_i}^{\top})]^{\top}\) and then sums \(\sum_{i=1}^{N}\mathrm{erank}(\mX_i)\). Second, the $\Delta \bm X$ stack (\emph{LoRA-agnostic}) uses \(\mX_{\text{agnostic}}=[\operatorname{vec}(\Delta \mW_1),\dots,\operatorname{vec}(\Delta \mW_N)]^{\top}\) and computes a single \(\mathrm{erank}(\mX_{\text{agnostic}})\). Third, the Rank-1 stack (\emph{LoRA-aware}) aggregates all rank-1 factors across tasks into \(\mX_{\text{aware}}=[\operatorname{vec}(\vb_{11}\va_{11}^{\top}),\dots,\operatorname{vec}(\vb_{Nr_N}\va_{Nr_N}^{\top})]^{\top}\) and computes \(\mathrm{erank}(\mX_{\text{aware}})\). Here we refer to the Rank-1 stack as \emph{LoRA-aware} because it preserves the adapter factorization and treats each rank-1 direction \(\vb_{ij}\va_{ij}^{\top}\) as a separate basis element, whereas the \(\Delta\bm W\) stack is \emph{LoRA-agnostic} since it collapses each adapter into a single update \(\Delta\bm W_i\) and ignores the internal rank-1 structure.

As shown in \Cref{fig:lines-qproj-raw}, the \emph{LoRA-aware} stack retains about \(70\%\) of the \emph{per-task sum}, indicating that most task-specific directions remain approximately independent after redundancy removal and thus cross-task alignment is weak. This agrees with prior work showing that LoRA fine-tuned models align less across tasks than full-rank fine-tuning~\cite{stoica2025knots}. The gap between the \emph{LoRA-aware} and \emph{LoRA-agnostic} stacks captures subspace collapse under interpolation-based merging~\cite{ilharco2022editing,yadav2023ties,yang2023adamerging}, because unlike \(\bm X_{\text{aware}}\) the interpolated updates in \(\bm X_{\text{agnostic}}\) can interfere destructively across tasks and lower the effective rank. Formal definitions and additional results appear in \Cref{append:subspace_coverage} of supplementary material.

\subsection{Anisotropy of LoRA Directions}
Let $\bm W$ be the current weights and $\{\,\bm S_k\,\}_{k=1}^{K}$ a set of \emph{LoRA directions}, where a LoRA adapter uses factor matrices $\bm B\in\mathbb{R}^{d\times r}$ and $\bm A\in\mathbb{R}^{m\times r}$ (for simplicity, the same rank $r$ for all adapters), with columns $\vb_k$ and $\va_k$, and we set $\bm S_k=\vb_k \va_k^{\top}$ (so $K=r$ for a single adapter, or the sum of ranks if multiple adapters are stacked). Consider a small update restricted to this span with \emph{direction-selection coefficients} $\bm{\phi}=(\phi_1,\dots,\phi_K)^\top\in\mathbb{R}^{K}$ (distinct from the task-preference vector $\bm\rho$), so that $\Delta \bm W=\sum_{k=1}^{K}\phi_k \bm S_k$. Using the Frobenius inner product, the first-order change of each task loss is
\begin{equation}
	\Delta f_i \;\approx\; \big\langle \nabla f_i(\bm W),\,\Delta \bm W\big\rangle_F \;=\; \sum_{k=1}^{K} \phi_k\,\big\langle \nabla f_i(\bm W),\,\bm S_k\big\rangle_F .
\end{equation}
Stacking $\Delta \bm f=[\Delta f_1,\dots,\Delta f_N]^\top$ yields the linearization
\begin{equation}
	\Delta \bm f \;\approx\; \bm J\,\bm{\phi},
	\qquad
	\bm J_{i,k} \;=\; \big\langle \nabla f_i(\bm W),\,\bm S_k\big\rangle_F,
\end{equation}
so $\bm J(\bm W)$ is the task-loss Jacobian restricted to $\operatorname{span}\{\bm S_k\}$. Directional imbalance is captured by the singular values of $\bm J$, which give the following bound from coefficients to loss changes.

\begin{restatable}[Anisotropy Bounds]{proposition}{propone}\label{prop:anisotropy-bounds}
	Let $\sigma_{\max}(\bm J)$ and $\sigma_{\min}(\bm J)$ denote the largest and smallest singular values of $\bm J$. Then, for any coefficient vector $\bm\phi$,
	\begin{equation}
		\sigma_{\min}(\bm J)\,\|\bm{\phi}\|_2
		\;\le\;
		\|\bm J\bm{\phi}\|_2
		\;=\;
		\|\Delta\bm f\|_2
		\;\le\;
		\sigma_{\max}(\bm J)\,\|\bm{\phi}\|_2.
	\end{equation}
	When \(\kappa(\bm J)=\sigma_{\max}(\bm J)/\sigma_{\min}(\bm J)\) is large, the map \(\bm\phi\!\mapsto\!\Delta \bm f\) is \emph{anisotropic}, and equal-norm LoRA-direction updates need not yield proportional task-loss changes.
\end{restatable}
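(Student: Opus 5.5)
The plan is to reduce both inequalities to the spectral theorem for the symmetric positive semidefinite matrix $\bm J^{\top}\bm J\in\mathbb{R}^{K\times K}$ by means of a Rayleigh-quotient argument. First I would fix what is being proved. The middle equality $\|\bm J\bm\phi\|_2=\|\Delta\bm f\|_2$ is not an analytic fact; it holds by definition of the linearized model $\Delta\bm f:=\bm J\bm\phi$ introduced just before the proposition. So the only content is the two-sided bound on $\|\bm J\bm\phi\|_2$, and I will treat the equality as the first-order identification made in the text.

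The key step is to write $\|\bm J\bm\phi\|_2^2=\bm\phi^{\top}\bm J^{\top}\bm J\,\bm\phi$. Then take an orthonormal eigenbasis $\{\bm v_k\}_{k=1}^{K}$ of $\bm J^{\top}\bm J$, with eigenvalues $\lambda_1\ge\dots\ge\lambda_K\ge 0$. Each $\lambda_k$ equals $\sigma_k(\bm J)^2$, where $\sigma_k(\bm J)$ are the singular values of $\bm J$ from the SVD $\bm J=\mU\mSigma\mV^{\top}$, padded with zeros if $K>N$. Expanding $\bm\phi=\sum_k c_k\bm v_k$ gives
\begin{equation*}
\|\bm J\bm\phi\|_2^2=\sum_{k=1}^{K}\lambda_k c_k^2 .
\end{equation*}
This sum lies between $\lambda_K\sum_k c_k^2$ and $\lambda_1\sum_k c_k^2$, and $\sum_k c_k^2=\|\bm\phi\|_2^2$ by orthonormality. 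Taking square roots yields both inequalities, with $\sigma_{\max}(\bm J)=\sqrt{\lambda_1}$ and $\sigma_{\min}(\bm J)=\sqrt{\lambda_K}$. Equality is attained at $\bm\phi=\bm v_1$ and $\bm\phi=\bm v_K$ respectively, so both bounds are tight. The upper bound is equivalently the statement that the operator $2$-norm of $\bm J$ equals $\sigma_{\max}(\bm J)$.

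The main subtlety is the lower bound, specifically the convention for $\sigma_{\min}$. In the typical regime $K=\sum_i r_i>N$, the matrix $\bm J$ has a nontrivial kernel. If $\sigma_{\min}$ meant the smallest of the $\min(N,K)$ nonzero-count singular values, the inequality would be false for $\bm\phi\in\ker\bm J$. I would therefore state explicitly that $\sigma_{\min}(\bm J)$ denotes $\sqrt{\lambda_{\min}(\bm J^{\top}\bm J)}$, the smallest of the $K$ singular values counted on the domain side. This quantity is $0$ whenever $K>N$, in which case the lower bound holds trivially and $\kappa(\bm J)=\infty$. I would also add a remark on the alternative reading: if one wants a nontrivial lower bound, restrict $\bm\phi$ to $(\ker\bm J)^{\perp}$, the row space of $\bm J$. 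The same expansion, now over only the eigenvectors with $\lambda_k>0$, gives the bound with the smallest positive singular value.

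Finally, the qualitative claim about anisotropy follows from tightness. Two unit-norm direction updates $\bm v_1$ and $\bm v_K$ produce loss changes of norms $\sigma_{\max}(\bm J)$ and $\sigma_{\min}(\bm J)$, whose ratio is $\kappa(\bm J)$. So equal-norm updates can change the task losses by amounts differing by a factor as large as $\kappa(\bm J)$.
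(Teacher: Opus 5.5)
Your argument is correct and is essentially the paper's: the eigenvectors of $\bm J^{\top}\bm J$ are the right singular vectors of $\bm J$, so your expansion $\sum_k \lambda_k c_k^2$ is the paper's $\sum_i \sigma_i^2\tilde\phi_i^{\,2}$ with $\tilde{\bm\phi}=\bm V^{\top}\bm\phi$. There are two differences. First, you use the full $K$-dimensional eigenbasis, so your lower bound holds for every $\bm\phi$ with $\sigma_{\min}(\bm J)$ allowed to be $0$, whereas the paper works with the compact SVD and must restrict to $(\ker\bm J)^{\perp}$, the reading you give as a remark. Second, you add tightness at $\bm v_1$ and $\bm v_K$, which the paper omits but which is what actually justifies the $\kappa(\bm J)$ interpretation.
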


\vspace{2pt}
\noindent\textbf{Empirical evidence of anisotropy.}
To illustrate the qualitative effect anticipated by Proposition~\ref{prop:anisotropy-bounds}, we measure singular-value spectra and condition numbers of $\bm J$ across layers and modules. We observe a pronounced spectral spread and large condition numbers, indicating that a few directions dominate the task-loss response while many are weak. Full plots are deferred to \cref{append:anisotropy} of supplementary material. See the condition-number traces in \cref{fig:kappa-lines-raw,fig:kappa-lines-svd} and the scree curves in \cref{fig:grid-scree-raw-vs-svd} of supplementary material.

\vspace{2pt}
\noindent\textbf{Directional-sensitivity misalignment across preferences.}
\label{para:dsm}
With the scalarized gradient $g(\bm\rho;\bm W)
	=\sum_{i}\rho_i\,\nabla f_i(\bm W)$, define direction-wise \emph{loss sensitivities}
\begin{align}
	h_k(\bm\rho;\bm W)\;=\;\big\langle g(\bm\rho;\bm W),\,\bm S_k\big\rangle_F,
	\\
	\bm h(\bm\rho;\bm W)=\big(h_1,\dots,h_K\big)^{\!\top}.
\end{align}
When the preference $\bm\rho$ changes, the pattern of $\bm h(\bm\rho;\bm W)$ also changes as well. We quantify \emph{Directional Sensitivity Misalignment} by

\begin{equation}
	\xi(\bm\rho_1,\bm\rho_2;\bm W)\;=\;1-\frac{\left|\langle \bm h(\bm\rho_1;\bm W),\,\bm h(\bm\rho_2;\bm W)\rangle\right|}
	{\|\bm h(\bm\rho_1;\bm W)\|_2\,\|\bm h(\bm\rho_2;\bm W)\|_2}\;,
\end{equation}
so that $\xi\in[0,1]$. Here $\langle \cdot,\cdot\rangle$ denotes the dot product.
Smaller $\xi$ indicates weaker $\bm\rho$-dependence of the sensitivity profiles, whereas larger $\xi$ indicates stronger $\bm\rho$-dependence.
When measuring anisotropy, a \(180^\circ\) flip lies in the same one-dimensional span. We therefore treat sign flips as equivalent and focus on alignment along the direction axis.

\begin{figure}
	\centering
	\includegraphics[width=0.85\linewidth]{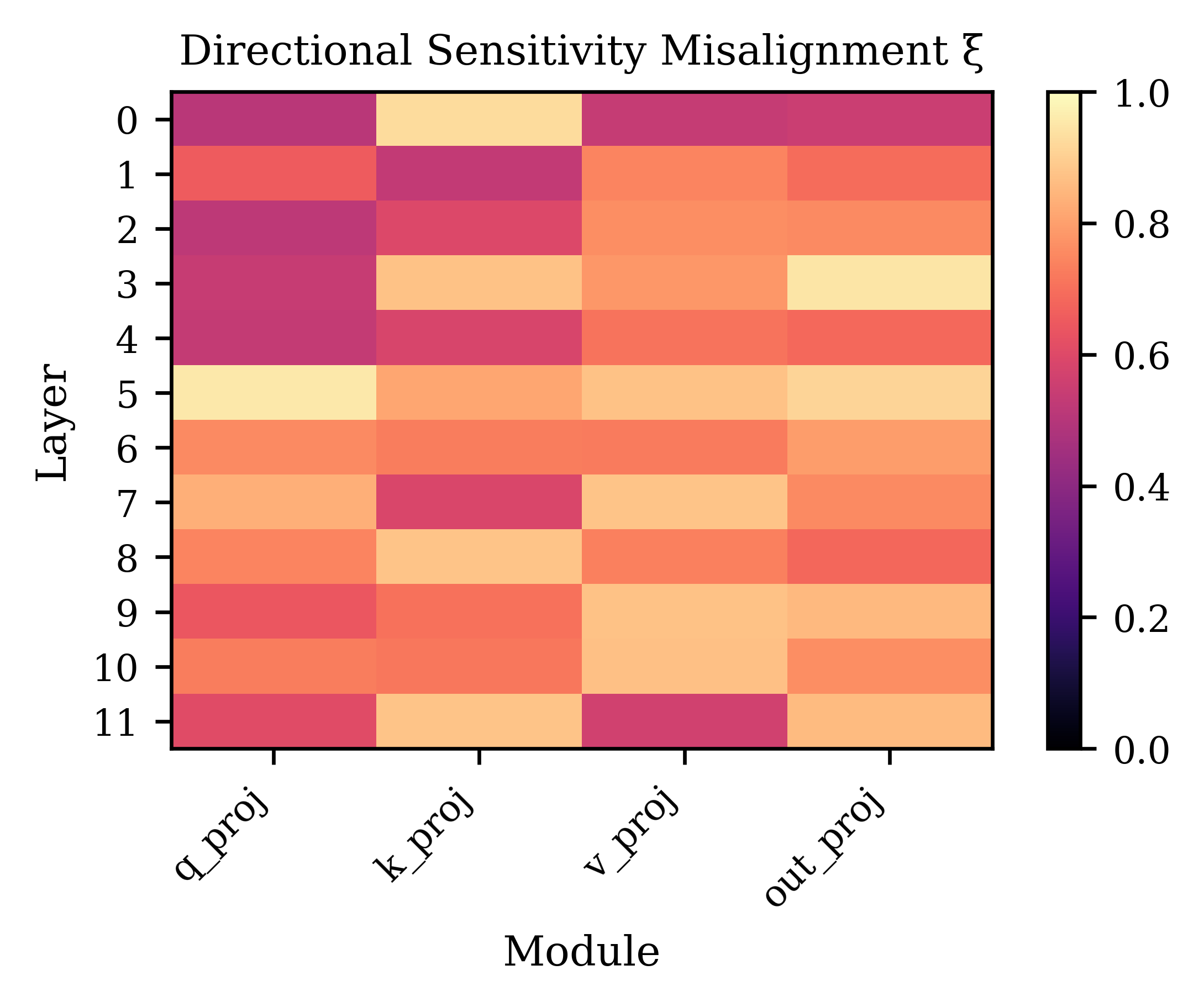}
	\caption{\textbf{Directional-sensitivity misalignment \(\boldsymbol{\xi(\rho_1,\rho_2)}\).}
		Larger $\xi$ indicates stronger change in loss-sensitive directions when switching preferences.
	}
	\label{fig:xi-heatmap-svd}
\end{figure}

We quantify how loss-sensitive directions within the LoRA span change when the task preference switches from $\bm\rho_1$ (uniform over tasks) to $\bm\rho_2$ (one-hot on a single task).
For each preference, we form a sensitivity profile over LoRA directions by projecting the scalarized gradient $g(\bm\rho;\bm W)=\sum_i \rho_i\,\nabla f_i(\mathbf X)$ onto the basis $\{\mS_k\}$.
Throughout this analysis, we set $\bm W$ to the Task Arithmetic merge $\mW_{\text{merge}}=\mW_{0}+\lambda\sum_{i=1}^{N}\Delta \mW_{i}$ with $\lambda=0.3$ following \cite{ilharco2022editing}.
We then summarize the distance between the two preference with a misalignment index $\xi(\bm\rho_1,\bm\rho_2)\in[0,1]$.
Figure~\ref{fig:xi-heatmap-svd} shows substantial misalignment across layers and modules, indicating that directional sensitivities within the LoRA span are highly preference-dependent and motivating our subsequent directional alignment.

\section{Method}
Based on the analysis, we propose \method, \emph{Task-Rank Anisotropy Alignment}, which addresses both \emph{subspace coverage} and \emph{anisotropy} in LoRA merging.
\method weights LoRA directions with an entropy-minimization term, as in AdaMerging~\cite{yang2023adamerging}.
We present two variants that specify how the directions are constructed and how the direction-selection weights are assigned.

\begin{table*}[t]
	\vspace{-10pt}
	\caption{
		Per-task accuracy on eight image-classification benchmarks.
		We merge eight ViT-B/32 checkpoints, each fine-tuned with LoRA.
		The upper panel shows the per-task absolute accuracy of the fine-tuned baselines.
		The lower panel shows the accuracy of the merged models, normalized by their corresponding fine-tuned baseline (\%).
	}
	\vspace{-5pt}
	\label{tab:8vis_knots}
	\centering
	\renewcommand{\arraystretch}{0.75}
	\resizebox{0.80\linewidth}{!}{
		\begin{tabular}{lcccccccc>{\columncolor[gray]{0.9}}c}
			\toprule
			\multirow{2}{*}{Method}                & \multicolumn{9}{c}{Dataset}                                                                                                                                                                                                     \\
			\cmidrule(lr){2-10}
			                                       & Cars                                                                                            & DTD           & EuroSAT       & GTSRB         & MNIST         & RESISC45      & SUN397        & SVHN          & Avg           \\
			\midrule
			                                       & \multicolumn{9}{c}{\textit{Per‑task absolute accuracies (\%)}}                                                                                                                                                                  \\ \cmidrule(lr){2-10}
			Finetuned                              & 74.0                                                                                            & 58.3          & 99.0          & 92.7          & 99.3          & 88.4          & 64.5          & 96.2          & 84.1          \\
			\midrule
			                                       & \multicolumn{9}{c}{\textit{Per‑task accuracies of merged models, normalized to finetuned (\%)}}                                                                                                                                 \\ \cmidrule(lr){2-10}
			\addlinespace[2pt]
			\textit{\textbf{Vanilla Merging}}      & \multicolumn{9}{l}{}                                                                                                                                                                                                            \\
			RegMean~\cite{jin2022dataless}         & 80.2                                                                                            & 71.3          & 37.9          & 47.3          & 43.1          & 70.5          & 93.9          & 43.0          & 60.9          \\
			TA~\cite{ilharco2022editing}           & 82.1                                                                                            & 74.3          & 48.7          & 41.8          & 53.4          & 71.5          & 96.6          & 42.0          & 63.8          \\
			TIES~\cite{yadav2023ties}              & 81.0                                                                                            & 72.5          & 53.8          & 37.4          & 69.0          & 65.3          & 94.8          & 45.3          & 64.9          \\
			DARE-TIES~\cite{yu2024language}        & 81.6                                                                                            & 74.5          & 51.0          & 37.2          & 59.2          & 66.6          & 96.0          & 38.8          & 63.1          \\
			EMR-Merging~\cite{huang2024emr}        & 82.6                                                                                            & 73.9          & 48.2          & 40.2          & 54.6          & 70.6          & 95.7          & 45.9          & 64.0          \\
			FR-Merging~\cite{zheng2025free}        & 79.9                                                                                            & 71.0          & 24.2          & 34.2          & 46.6          & 65.4          & 92.2          & 47.6          & 57.6          \\
			Iso-C~\cite{marczak2025no}             & 82.0                                                                                            & 82.1          & 55.2          & 73.0          & 68.9          & 80.3          & 97.7          & 49.6          & 73.6          \\
			Iso-CTS~\cite{marczak2025no}           & 80.8                                                                                            & \textbf{82.3} & 56.2          & \textbf{74.8} & 68.5          & 79.6          & 97.4          & 48.5          & 73.5          \\
			AdaMerging~\cite{yang2023adamerging}   & 79.5                                                                                            & 73.5          & 70.9          & 39.7          & 63.0          & 69.0          & 97.8          & 66.6          & 70.0          \\ \hline
			\addlinespace[2pt]
			\textit{\textbf{LoRA‑aware Merging}}   & \multicolumn{9}{l}{}                                                                                                                                                                                                            \\
			SVD~\cite{tang2025lora}                & 81.8                                                                                            & 73.5          & 47.7          & 38.3          & 52.8          & 70.2          & 96.4          & 40.4          & 62.6          \\
			Linear~\cite{peft}                     & 81.4                                                                                            & 74.5          & 49.5          & 43.0          & 55.3          & 70.3          & 96.8          & 39.0          & 63.7          \\
			KnOTS‑TIES~\cite{stoica2025knots}      & 82.7                                                                                            & 73.7          & 49.3          & 48.9          & 68.9          & 70.9          & 95.5          & 53.8          & 68.0          \\
			KnOTS‑DARE‑TIES~\cite{stoica2025knots} & 81.8                                                                                            & 75.9          & 50.7          & 40.3          & 53.2          & 70.2          & 97.9          & 41.0          & 63.9          \\
			LoRA-LEGO~\cite{zhao2025loralego}      & 81.1                                                                                            & 73.0          & 54.4          & 40.3          & 48.6          & 71.5          & 97.3          & 37.1          & 62.9          \\
			RobustMerge~\cite{zeng2025parameter}   & 80.1                                                                                            & 76.8          & 54.8          & 38.2          & 55.4          & 69.5          & 96.5          & 35.2          & 63.3          \\ \hline
			\addlinespace[2pt]
			\method-Variant A                      & 82.2                                                                                            & 76.0          & 74.9          & 43.5          & 76.3          & 70.2          & 98.0          & \textbf{70.8} & 74.0          \\
			\method-Variant B                      & \textbf{86.2}                                                                                   & 78.4          & \textbf{76.8} & 42.9          & \textbf{82.7} & \textbf{75.4} & \textbf{98.6} & 69.7          & \textbf{76.3} \\
			\bottomrule
		\end{tabular}
	}
	\vspace{-5pt}
\end{table*}

\vspace{2pt}
\noindent\textbf{Variant A: Per-rank LoRA direction selection.}
This variant reweights individual rank-1 factors and focuses on anisotropy control within each adapter.
Let $\phi_{ij}\in\mathbb{R}$ be a \emph{direction-selection weight} for factor $(i,j)$ with $i\in\{1,\dots,N\}$ and $j\in\{1,\dots,r_i\}$.
The merged weight is
\begin{equation}
	\bm W_{A}(\bm\phi)
	\;=\;
	\bm W_{0}\;+\;\sum_{i=1}^{N}\sum_{j=1}^{r_i}\phi_{ij}\, \vb_{ij}\,\va_{ij}^{\top}.
	\label{eq:variant-a}
\end{equation}
Unless otherwise noted, we allow signed weights and learn their scale directly.

\vspace{2pt}
\noindent\textbf{Variant B: Shared singular-direction selection.}
This variant builds a shared orthonormal basis across tasks to preserve subspace coverage, reduce cross-task interference, and control anisotropy via weights.
Form a matrix by \emph{horizontally} concatenating adapters as follows:
\begin{equation}
	\bm X = \bigl[\Delta \bm W_1,\dots,\Delta \bm W_N\bigr]
	\;\in\;\mathbb{R}^{d\times (mN)},
\end{equation}
and compute the SVD: $\bm X = \bm U\bm \Sigma \bm V^{\top}$.
Select the top $R$ singular values and their associated singular vectors, yielding
\begin{align}
	\vspace{-3pt}
	\bm U      & = [\bm u_1, \dots, \bm u_R] \in \mathbb{R}^{d \times R},    \\
	\bm \Sigma & = \mathrm{diag}(\sigma_1, \dots, \sigma_R),                 \\
	\bm V      & = [\bm v_1, \dots, \bm v_R] \in \mathbb{R}^{(mN) \times R}.
\end{align}

Partition each $\bm v_k$ into $\bm v_k=[\,\bm v_{k1}^{\top},\dots,\bm v_{kN}^{\top}\,]^{\top}$ with each $\bm v_{ki}\in\mathbb{R}^{m}$ and define
\begin{equation}
	\bm S_{ik} \;=\; \bm u_k\,\bm v_{ki}^{\top}\;\in\mathbb{R}^{d\times m}
\end{equation}
Assign weights $\phi_{ik}\in\mathbb{R}$ to these directions and obtain

\begin{equation}
	\bm W_{B}(\bm\phi)
	\;=\;
	\bm W_{0}\;+\;\sum_{i=1}^{N}\sum_{k=1}^{R}\phi_{ik}\,\sigma_k\,\bm u_k\,\bm v_{ki}^{\top}.
	\label{eq:variant-b}
\end{equation}
Together, the shared orthonormal left directions $\{\bm u_k\}_{k=1}^{R}$ and the per-task partitions of the right singular vectors preserve subspace coverage.
The left basis $\{\bm u_k\}$ retains the top-$R$ column space of $\bm X$, and $\bm v_{ki}\in\mathbb{R}^{m}$ yields \emph{rank-1 components} $\bm S_{ik}=\bm u_k \bm v_{ki}^{\top}$.
Direction weights then adjust these rank-1 components preserving subspace coverage.

Variant B incurs extra cost to compute the shared SVD basis, whereas Variant A avoids this step. In return Variant B typically attains higher accuracy, reflecting a practical trade-off between compute and performance. We quantify this trade-off in \Cref{tab:cost} of the supplementary material.

Following \citet{lin2024smooth}, we optimize $\bm\phi$ under a preference vector $\bm\rho$.
Let $\mW(\bm\phi)$ denote either $\mW_{A}(\bm\phi)$ from \cref{eq:variant-a} or $\mW_{B}(\bm\phi)$ from \cref{eq:variant-b}.

\vspace{2pt}
\noindent\textbf{Smooth Tchebycheff Scalarization.}
With anchors $z_i=\min_{\bm W} f_i(\bm W)$, define
\begin{equation}
	\Psi(\bm\phi,\bm\rho)=
	\alpha \log\!\left(\sum_{i=1}^{N}
	\exp\!\left(\!\frac{\rho_i\,\big|f_i\!\big(\bm W(\bm\phi)\big) \! - \! z_i\big|}{\alpha}\!\right)\!\right),
	\label{eq:smooth_tch}
\end{equation}
Here $z_i$ acts as a task-wise anchor that normalizes scale differences across tasks.
The parameter $\alpha > 0$ controls smoothing: \emph{smaller} $\alpha$ concentrates the objective on the worst-performing task, whereas \emph{larger} $\alpha$ yields a smoother aggregation that spreads focus more evenly across tasks.
During merging, we use predictive entropy following AdaMerging~\cite{yang2023adamerging} since $f_i$ requires labels. For $z_i$, we use the entropy loss obtained when only task $i$’s adapter is applied.

Earlier learning-free mergers were attractive for full-parameter models due to lower memory use and simplicity. In LoRA-adapted settings, however, the memory gap between learning-free and learning-based approaches is small even at foundation-model scale, so there is less reason to insist on learning-free methods. We quantify this in \Cref{tab:cost} of the supplementary material.

\section{Experiments}

\textbf{Experimental Setup.}

For vision experiments, we follow \citet{stoica2025knots} and use CLIP~\cite{radford2021learning} with a ViT-B/32 backbone~\cite{dosovitskiy2020image} pre-trained on ImageNet-21k~\cite{deng2009imagenet}. Models are fine-tuned with rank-16 LoRA unless noted otherwise and evaluated on eight image classification benchmarks. We also test our method on six classification datasets with the LLaMA-3 8B model~\cite{dubey2024llama}. Dataset descriptions are in \Cref{append:datasets}, and LoRA fine-tuning details in \Cref{append:finetune} of supplementary material.

\begin{table*}[t]
	\vspace{-10pt}
	\caption{
		Per-task accuracy on six NLI benchmarks. We merge six LLaMA-3 8B checkpoints, each fine-tuned with LoRA. The upper panel shows the absolute per-task accuracy of the fine-tuned baselines. The lower panel shows the merged models’ accuracy, normalized by each task’s corresponding fine-tuned baseline (\%).
	}
	\vspace{-7pt}
	\label{tab:six_nli_benchmarks}
	\centering
	\renewcommand{\arraystretch}{0.85}
	\scriptsize
	\setlength{\tabcolsep}{10pt}
	\resizebox{0.85\linewidth}{!}{
		\begin{tabular}{lcccccc>{\columncolor[gray]{0.9}}c}
			\toprule
			\multirow{2}{*}{Method}                & \multicolumn{7}{c}{Dataset}                                                                                                                                                                       \\
			\cmidrule(lr){2-8}
			                                       & MNLI                                                                                            & QNLI          & SNLI          & RTE            & SICK          & SCITAIL        & Avg           \\
			\midrule
			                                       & \multicolumn{7}{c}{\textit{Per-task absolute accuracies (\%)}}                                                                                                                                    \\ \cmidrule(lr){2-8}
			Finetuned                              & 90.8                                                                                            & 95.3          & 92.1          & 84.8           & 91.3          & 87.3           & 90.2          \\
			\midrule
			                                       & \multicolumn{7}{c}{\textit{Per-task accuracies of merged models, normalized to finetuned (\%)}}                                                                                                   \\ \cmidrule(lr){2-8}
			\addlinespace[2pt]
			\textit{\textbf{Vanilla Merging}}      & \multicolumn{7}{l}{}                                                                                                                                                                              \\
			TA~\cite{ilharco2022editing}           & 67.3                                                                                            & 87.3          & 41.8          & 95.7           & 77.9          & 76.9           & 74.6          \\
			TIES~\cite{yadav2023ties}              & 61.3                                                                                            & 91.5          & 38.2          & 85.5           & 76.8          & 70.5           & 70.6          \\
			DARE-TIES~\cite{yu2024language}        & 42.0                                                                                            & 72.5          & 44.1          & 77.8           & 76.9          & 86.8           & 66.7          \\
			EMR-Merging~\cite{huang2024emr}        & 75.3                                                                                            & 87.5          & 41.2          & 88.0           & 70.3          & 73.9           & 72.7          \\
			Iso-C~\cite{marczak2025no}             & 35.0                                                                                            & 57.8          & 40.5          & 64.1           & 51.1          & \textbf{100.2} & 58.1          \\
			Iso-CTS~\cite{marczak2025no}           & 37.1                                                                                            & 57.1          & 39.8          & 59.4           & 49.8          & 95.3           & 56.4          \\
			FR-Merging~\cite{zheng2025free}        & 65.7                                                                                            & 86.9          & 41.9          & 93.2           & 78.3          & 77.6           & 73.9          \\
			AdaMerging~\cite{yang2023adamerging}   & 47.5                                                                                            & 92.9          & 41.3          & 102.6          & 93.8          & 94.2           & 78.7          \\ \hline
			\addlinespace[2pt]
			\textit{\textbf{LoRA-aware Merging}}   & \multicolumn{7}{l}{}                                                                                                                                                                              \\
			SVD~\cite{tang2025lora}                & 67.7                                                                                            & 87.2          & 41.7          & 95.7           & 77.6          & 76.6           & 74.4          \\
			Linear~\cite{peft}                     & 63.1                                                                                            & 86.2          & 39.5          & 90.6           & 77.5          & 74.5           & 71.9          \\
			KnOTS-TIES~\cite{stoica2025knots}      & 41.1                                                                                            & 83.4          & \textbf{56.6} & 87.2           & 87.9          & 94.8           & 75.2          \\
			KnOTS-DARE-TIES~\cite{stoica2025knots} & \textbf{76.4}                                                                                   & 88.2          & 39.9          & 99.2           & 79.6          & 75.6           & 76.5          \\
			LoRA-LEGO~\cite{zhao2025loralego}      & 62.5                                                                                            & 84.8          & 41.4          & 100.0          & 87.0          & 83.7           & 76.6          \\
			RobustMerge~\cite{zeng2025parameter}   & 66.4                                                                                            & 87.2          & 41.8          & 94.0           & 77.7          & 77.3           & 74.1          \\ \hline
			\addlinespace[2pt]
			\method-Variant A                      & 51.7                                                                                            & 92.6          & 41.4          & 102.6          & 95.3          & 94.4           & 79.7          \\
			\method-Variant B                      & 46.8                                                                                            & \textbf{94.1} & 41.4          & \textbf{103.4} & \textbf{98.1} & 97.8           & \textbf{80.3} \\
			\bottomrule
		\end{tabular}
	}
	\vspace{-5pt}
\end{table*}

\noindent\textbf{Baselines.}
We denote the pretrained weights by $\mW_0$, the task vector of the $i$-th task by $\Delta \mW_i$, and the merged weights by $\mW_\text{merge}$.
For LoRA task vectors, we write $\Delta \mW_i = \mB_i \mA_i^\top$.
The scaling coefficient for merging is denoted by $\lambda$.
Our evaluation spans two distinct groups of baselines.
\emph{Vanilla merging} comprises
(\romannumeral 1) \textbf{RegMean}~\cite{jin2022dataless} aligns the merged model with fine-tuned models by solving a layer-wise least-squares problem using small calibration sets from each task, thereby matching the activations of the merged model to those of the task-specific models.
(\romannumeral 2) \textbf{Task Arithmetic (TA)}~\cite{ilharco2022editing} adds a scaled sum of task vectors to the pretrained weights, $\mW_\text{merge}= \mW_0 + \lambda \sum_{i=1}^{N}\Delta \mW_i$.
(\romannumeral 3) \textbf{TIES}~\cite{yadav2023ties} prunes small-magnitude parameters, resolves sign conflicts, and then averages only parameters with consistent signs with scaling coefficient $\lambda$.
(\romannumeral 4) \textbf{DARE-TIES}~\cite{yu2024language} sparsifies each parameter with Bernoulli probability $p$ and rescales the retained parameters by $1/(1-p)$ to preserve the expectation.
(\romannumeral 5) \textbf{AdaMerging}~\cite{yang2023adamerging} learns merging coefficients by minimizing an output-entropy surrogate in the spirit of test-time adaptation~\cite{wang2020tent}.
\emph{LoRA-aware merging} exploits low-rank adapter structure:
(\romannumeral 6) \textbf{SVD} aggregates LoRA task vectors $\Delta \mW_\text{merge}=\lambda \sum_i \mB_i \mA_i^\top$ and applies truncated SVD $\Delta \mW_{\text{merge}} \approx \mU_r\mSigma_r \mV_r^{\top}$~\cite{tang2025lora} to recover rank of original LoRAs
(\romannumeral 7) \textbf{Linear}~\cite{peft} performs TA on $\left\{ \mB_i \right\}$ and $\left\{ \mA_i \right\}$ rather than whole task vector.
(\romannumeral 8) KnOTS~\cite{stoica2025knots} computes an SVD on concatenated task vectors
$\left[\Delta \mW_1;\dots;\Delta \mW_N \right]=\mU\mSigma \mV^\top$ and merge partitions of $\mV$ associated with each task vector.
TIES or DARE-TIES can be applied to the $\mV_i$, yielding the variants \textbf{KnOTS-TIES} and \textbf{KnOTS-DARE-TIES}.
(\romannumeral 9) \textbf{LoRA-LEGO}~\cite{zhao2025loralego} decomposes each adapter into minimal semantic units, clusters them rank-wise, and assembles a new LoRA from the cluster centroids. We present a reimplementation of LoRA-LEGO.
Finally, we also include the LoRA-aware (\romannumeral 10) \textbf{RobustMerge}~\cite{zeng2025parameter}, and recent vanilla-merging methods: (\romannumeral 11) \textbf{EMR-Merging}~\cite{huang2024emr}, (\romannumeral 12) \textbf{FR-Merging}~\cite{zheng2025free}, and (\romannumeral 13) \textbf{Iso-CTS}~\cite{marczak2025no}.
Implementation details are in \cref{append:imple}.

\vspace{2pt}
\noindent\textbf{Implementation Details.} For our method, we adopt AdamW~\cite{loshchilov2018decoupled} with a learning rate of 0.001. The direction-selection weights $\bm \phi$ are initialized to 0.4 and optimized with a batch size of 16 for 500 iterations similar to \citet{yang2023adamerging} to control training cost. For experiments, we set $\alpha=1$ in \cref{eq:smooth_tch}. The effects of $\alpha$ is analyzed in supple.

\vspace{2pt}
\noindent\textbf{Metrics.} Following prior work~\cite{ilharco2022editing, yadav2023ties, stoica2025knots}, we report the absolute accuracy of individually fine-tuned models on each dataset, and compare different merging methods using a normalized accuracy metric. The normalized accuracy is defined as $\frac{\text{Accuracy of merged model on task-$i$}}{\text{Accuracy of fine-tuned model on task-$i$}}$. This metric indicates how closely a merged model approaches the performance of the corresponding fine-tuned model for each task.

\subsection{Experimental Results}

\noindent\textbf{Per-Task Evaluation across Vision Tasks.}
In this experiments, we adopt the conventional per-task evaluation protocol~\cite{ilharco2022editing, yadav2023ties, stoica2025knots}, where the goal is to test how well different merging strategies preserve task-specific performance. Concretely, a collection of models fine-tuned independently on separate datasets are merged into a unified model, and its performance is measured on each dataset separately, relying solely on the dataset’s own samples and labels.
Table~\ref{tab:8vis_knots} reports normalized per-task accuracies and their averages across eight vision tasks with CLIP, where all models are fine-tuned with LoRA of rank 16. We categorize the baselines into two groups: vanilla merging methods and LoRA-aware methods that explicitly incorporate the LoRA structure.
LoRA-aware methods generally outperform their vanilla counterparts. For instance, KnOTS-TIES surpasses TIES, and KnOTS-DARE-TIES improves upon DARE-TIES. KnOTS-based variants highlight the benefits of accounting for LoRA structure. LoRA-LEGO does not provide significant advantages over vanilla methods, suggesting that merely preserving LoRA rank-1 directions is insufficient to retain task-specific information during merging. By comparison, \method consistently outperforms both vanilla and LoRA-aware baselines.
Variant~A shows steady gains with additional training iterations, leading to higher average accuracy.
Variant~B achieves the best overall trade-off, delivering the highest scores average of 76.3\% with only 250 iterations.
These results demonstrate that exploiting LoRA structure is essential for stable merging, and that \method further effectively capture task-sensitive subspaces. Experiments in \Cref{tab:8vis_knots} use the checkpoints released by KnOTS~\cite{stoica2025knots}. We further confirm the results on independently trained checkpoints, as shown in \Cref{tab:8vis_ours}.

\vspace{2pt}
\noindent\textbf{Per-Task Evaluation on 6 NLI Tasks with LLMs.}
In \Cref{tab:six_nli_benchmarks}, we report per-task normalized accuracies of merged models on six NLI benchmarks.
For this setting, we apply six LoRA adapters with rank 16 to LLaMA-3-8B~\cite{dubey2024llama}.
Each benchmark follows the natural language inference protocol, where a hypothesis is classified against a given premise into entailment, contradiction, or neutral.
Detailed descriptions of each dataset are provided in \Cref{append:datasets} of supplementary material..
The NLI results with LLMs reveal trends consistent with those observed in vision tasks.
Vanilla merging baselines, with the exception of AdaMerging, show limited ability to preserve task-specific information in large-scale models, while LoRA-aware methods still provide only moderate improvements.
In contrast, \method delivers significant and consistent gains. Variant~B achieving the best average normalized accuracy of 80.3\%, highlighting the effectiveness of \method on language tasks.

\begin{table}[t]
	\vspace{-5pt}
	\caption{\textbf{Joint-Task Evaluation Results.} All models share a ViT-B/32 backbone, are fine-tuned with LoRA on separate datasets, and are then merged for joint-task evaluation.}
	\vspace{-5pt}
	\label{tab:union_results}
	\centering
	\tiny
	\setlength{\tabcolsep}{8pt}
	\resizebox{0.99\linewidth}{!}{%
		\begin{tabular}{lccc}
			\toprule
			Method                                 & Hits@1        & Hits@3        & Hits@5        \\
			\midrule
			TA~\cite{ilharco2022editing}           & 43.5          & 65.2          & 74.0          \\
			TIES~\cite{yadav2023ties}              & 43.6          & 65.3          & 73.9          \\
			DARE-TIES~\cite{yu2024language}        & 44.0          & 66.4          & 75.1          \\
			AdaMerging~\cite{yang2023adamerging}   & 48.1          & 73.2          & 83.0          \\
			KnOTS-TIES ~\cite{stoica2025knots}     & 46.8          & 68.1          & 76.3          \\
			KnOTS-DARE-TIES~\cite{stoica2025knots} & 45.2          & 66.9          & 75.3          \\
			LoRA-LEGO~\cite{zhao2025loralego}      & 43.1          & 65.0          & 73.9          \\
			\hline
			\method-Variant A                      & \textbf{51.1} & \textbf{75.9} & 84.9          \\
			\method-Variant B                      & 49.3          & 74.9          & \textbf{85.1} \\
			\bottomrule
		\end{tabular}
	}
	\vspace{-10pt}
\end{table}

\begin{figure}[t]
	\vspace{-10pt}
	\centering
	\includegraphics[width=0.40\textwidth]{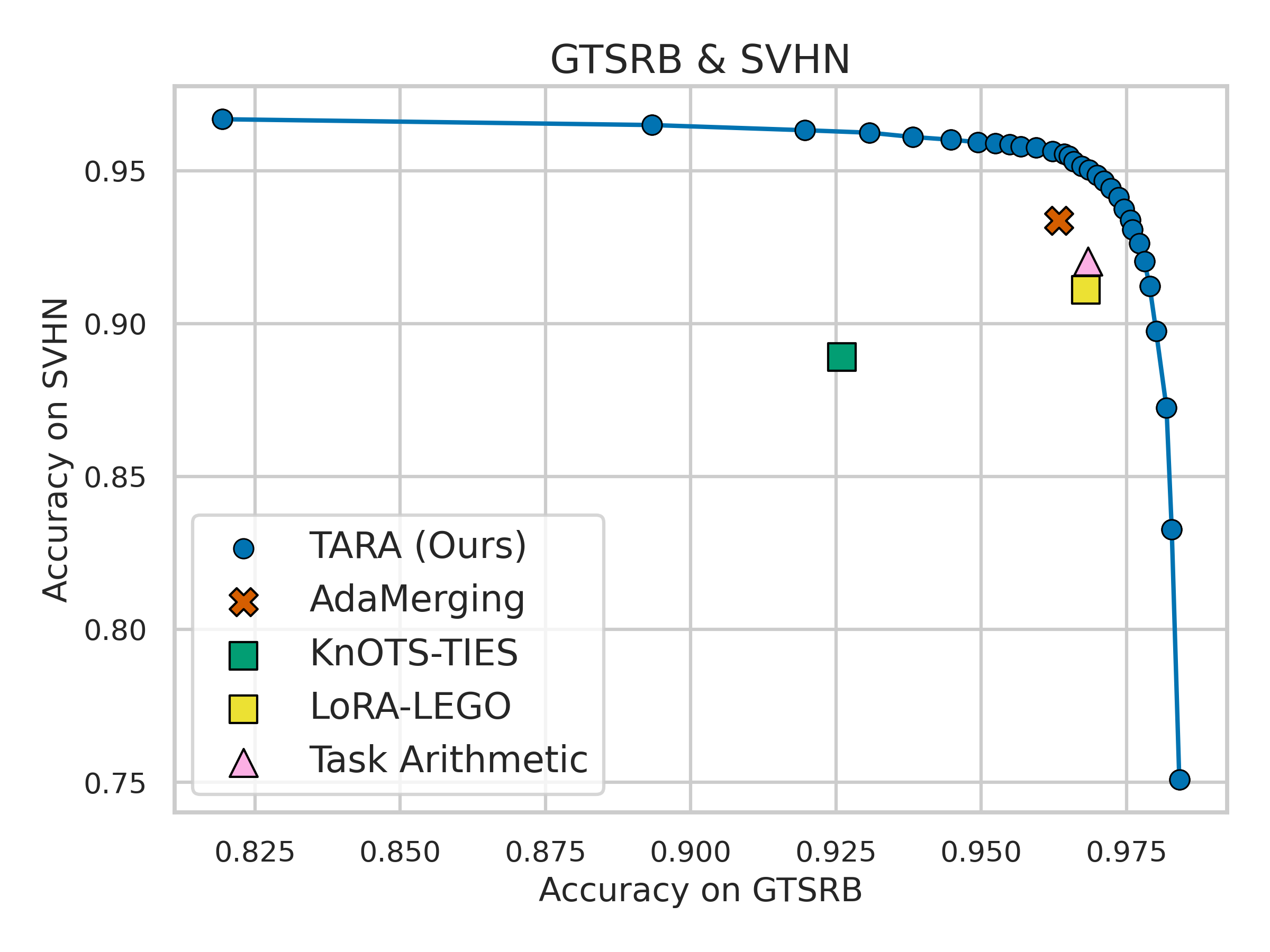}
	\vspace{-3pt}
	\captionof{figure}{\textbf{Two-Task Merging Results.} \label{fig:pareto_GTSRB_SVHN}}
	\vspace{7pt}
	\includegraphics[width=0.40\textwidth]{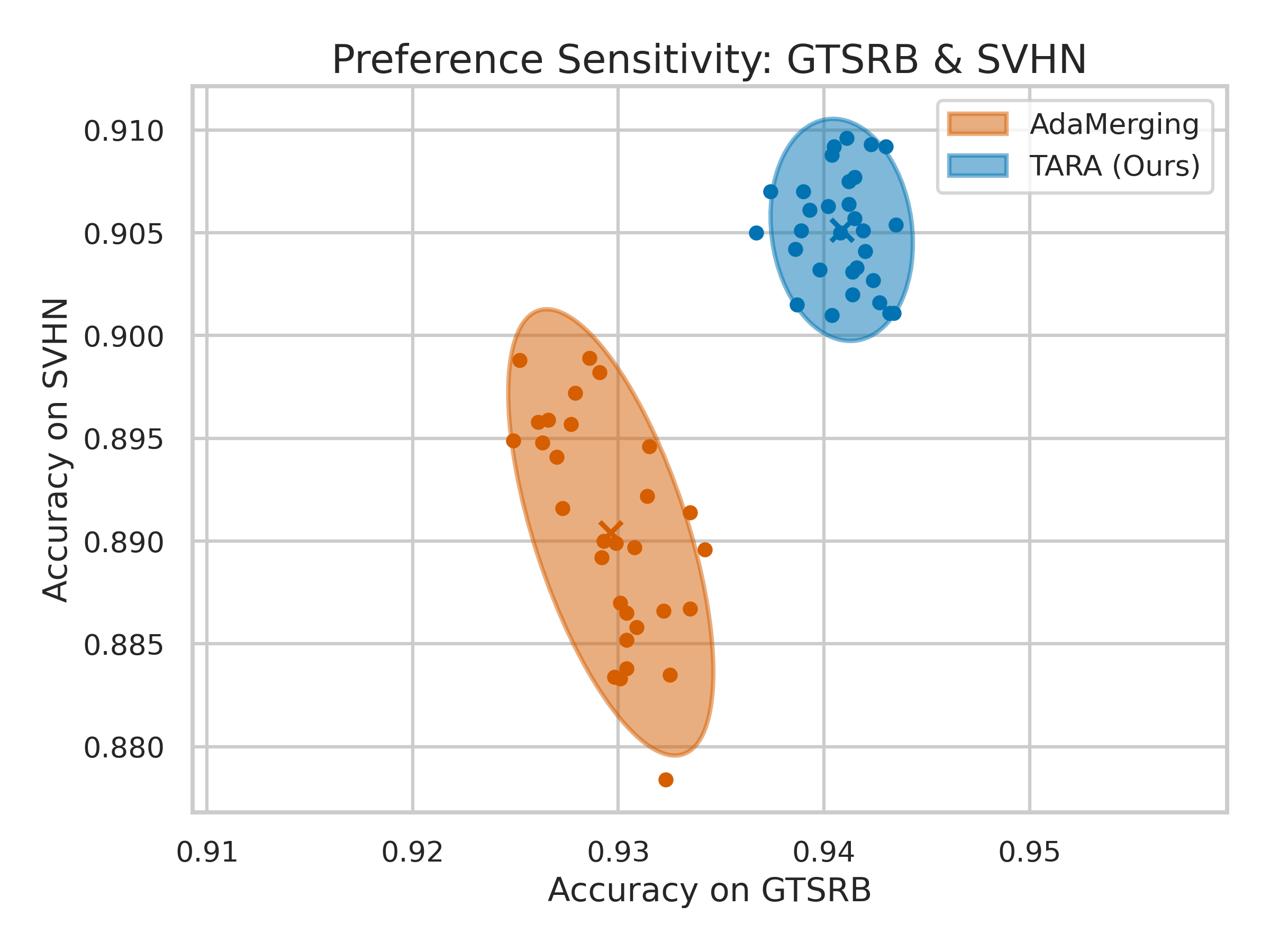}
	\vspace{-5pt}
	\captionof{figure}{\textbf{Preference Sensitivity} of \method\ and AdaMerging. \label{fig:sensitivity}}
	\vspace{-15pt}
\end{figure}

\begin{table*}[t]
	\vspace{-12pt}
	\centering
	\caption{
		Generalization results on two unseen tasks when merging ViT-B/32 models on six tasks.}
	\vspace{-5pt}
	\renewcommand{\arraystretch}{0.85}
	\resizebox{\linewidth}{!}{
		\begin{tabular}{l|ccccccc|ccc|c}
			\toprule
			                                         & \multicolumn{7}{c|}{{Seen Tasks}}                                                                & \multicolumn{3}{c|}{{Unseen Tasks}} & \multicolumn{1}{c}{{All Tasks}}                                                                                                                                          \\
			\midrule \midrule
			{Method}                                 & {Cars}                                                                                           & {DTD}                               & {GTSRB}                         & {RESISC45}    & {SUN397}      & {SVHN}        & \textbf{Avg Acc} & {EuroSAT}     & {MNIST}       & \textbf{Avg Acc} & \textbf{Avg Acc} \\
			\midrule
			                                         & \multicolumn{11}{c}{\textit{Per-task accuracies of merged models, normalized to finetuned (\%)}}                                                                                                                                                                                                                  \\ \cmidrule(lr){2-12}
			{TA}~\cite{ilharco2022editing}           & 82.0                                                                                             & \textbf{74.7}                       & 42.2                            & 71.1          & 96.8          & 42.2          & 68.2             & 37.5          & 48.2          & 42.9             & 61.8             \\
			{TIES}~\cite{yadav2023ties}              & 81.5                                                                                             & 73.2                                & 40.3                            & 64.4          & 95.4          & 45.7          & 66.7             & 37.2          & \textbf{61.8} & 49.5             & 62.4             \\
			{KnOTS-TIES~\cite{stoica2025knots}}      & 82.7                                                                                             & 74.4                                & 48.5                            & 72.5          & 95.3          & 50.2          & 70.6             & 33.2          & 50.5          & 41.8             & 63.4             \\
			{KnOTS-DARE-TIES}~\cite{stoica2025knots} & 81.7                                                                                             & 72.5                                & \textbf{49.4}                   & 71.9          & 94.6          & 52.5          & 70.4             & 31.6          & 51.2          & 41.4             & 63.2             \\
			{LoRA-LEGO}~\cite{zhao2025loralego}      & 82.2                                                                                             & 73.5                                & 46.9                            & 71.2          & 96.7          & 40.8          & 68.5             & 35.5          & 44.7          & 40.1             & 61.4             \\
			{AdaMerging}~\cite{yang2023adamerging}   & 79.7                                                                                             & 73.4                                & 37.7                            & 69.8          & 97.9          & 67.4          & 71.0             & 48.7          & 58.7          & 53.7             & 66.7             \\ \hline
			\rowcolor[gray]{0.9} {\method-Variant A} & 82.8                                                                                             & 73.9                                & 43.8                            & 70.5          & 98.2          & \textbf{73.4} & 73.8             & 48.4          & 61.1          & 54.8             & 69.1             \\
			\rowcolor[gray]{0.9} {\method-Variant B} & \textbf{86.3}                                                                                    & 73.8                                & 47.6                            & \textbf{73.8} & \textbf{99.1} & 71.6          & \textbf{75.4}    & \textbf{53.3} & 61.6          & \textbf{57.5}    & \textbf{70.9}    \\
			\midrule
			\addlinespace[5pt]
			\midrule
			{Method}                                 & {Cars}                                                                                           & {DTD}                               & {EuroSAT}                       & {GTSRB}       & {MNIST}       & {SUN397}      & \textbf{Avg Acc} & {RESISC45}    & {SVHN}        & \textbf{Avg Acc} & \textbf{Avg Acc} \\
			\midrule
			                                         & \multicolumn{11}{c}{\textit{Per-task accuracies of merged models, normalized to finetuned (\%)}}                                                                                                                                                                                                                  \\ \cmidrule(lr){2-12}
			{TA}~\cite{ilharco2022editing}           & 82.3                                                                                             & 75.1                                & 53.0                            & 40.7          & 52.4          & 96.7          & 66.7             & 68.6          & 34.7          & 51.7             & 63.0             \\
			{TIES}~\cite{yadav2023ties}              & 81.9                                                                                             & 74.8                                & 73.7                            & 33.2          & 63.4          & 95.6          & 70.4             & 68.9          & 31.6          & 50.3             & 65.4             \\
			{KnOTS-TIES}~\cite{stoica2025knots}      & 82.5                                                                                             & 72.9                                & 53.8                            & 47.8          & 61.7          & 94.9          & 69.0             & 67.5          & 35.1          & 51.3             & 64.5             \\
			{KnOTS-DARE-TIES}~\cite{stoica2025knots} & 82.7                                                                                             & 72.2                                & 53.1                            & \textbf{48.5} & 61.5          & 94.3          & 68.7             & 66.7          & 35.4          & 51.0             & 64.3             \\
			{LoRA-LEGO}~\cite{zhao2025loralego}      & 81.1                                                                                             & 72.3                                & 55.6                            & 42.8          & 62.6          & 94.7          & 68.2             & 65.2          & 33.9          & 49.5             & 63.5             \\
			{AdaMerging}~\cite{yang2023adamerging}   & 79.6                                                                                             & 74.2                                & 72.5                            & 36.5          & 60.0          & 97.9          & 70.1             & 69.2          & 41.1          & 55.2             & 66.4             \\ \hline
			\rowcolor[gray]{0.9} {\method-Variant A} & 84.9                                                                                             & 75.7                                & 76.3                            & 40.5          & \textbf{77.4} & 97.5          & 75.4             & 69.5          & 42.1          & 55.8             & 70.5             \\
			\rowcolor[gray]{0.9} {\method-Variant B} & \textbf{86.7}                                                                                    & \textbf{77.0}                       & \textbf{79.5}                   & 42.8          & 76.3          & \textbf{98.2} & \textbf{76.8}    & \textbf{70.5} & \textbf{45.6} & \textbf{58.1}    & \textbf{72.1}    \\
			\bottomrule
		\end{tabular}
	}
	\label{tab:general_knots}
	\vspace{-5pt}
\end{table*}

\vspace{2pt}
\noindent\textbf{Joint-Task Evaluation of General Models.}
The joint-task evaluation protocol, originally introduced by \citet{stoica2025knots}, differs from the per-task setup by evaluating merged models over the union of inputs and labels from all eight vision benchmarks. This setting is more suitable for assessing whether a merged model can serve as a general-purpose model across multiple target datasets. In constructing the joint task, labels from all benchmarks are pooled together and duplicates are removed, resulting in a unified label space. A detailed description of this setting is provided in \Cref{append:joint_settings} of supplementary material..
Table~\ref{tab:union_results} presents joint-task evaluation results in terms of top-1/3/5 accuracies. Vanilla merging baselines achieve limited performance with Hits@1 below 45\%. KnOTS-TIES and KnOTS-DARE-TIES obtain small but consistent gains, while AdaMerging achieves further improvements when additional optimization iterations are allowed. \method consistently outperforms all baselines. Both Variant~A and Variant~B benefit from longer training, with Variant~A (500 Iters) reaching the highest Hits@1 (51.1\%) and strong overall balance, while Variant~B delivers the best Hits@5 (85.1\%). These results highlight that our merging strategy effectively integrates task-specific knowledge across datasets and yields more general models under the challenging joint-task setting. The full results are provided in \Cref{tab:8vision_joint} of supplementary material.

\vspace{2pt}
\noindent\textbf{Two-Task Trade-Off Analysis.}
We examine whether preference-aware merging can trace an accuracy trade-off.
Figure~\ref{fig:pareto_GTSRB_SVHN} shows results on the CLIP ViT-B/32 backbone. For the GTSRB-SVHN pair, sweeping 30 preference values and merging the two adapters with \method yields a smooth trade-off curve (line). Baseline mergers (AdaMerging, KnOTS, LoRA-LEGO, TA) appear as isolated operating points. By jointly accounting for \emph{subspace coverage} (retaining effective rank) and \emph{anisotropy} (reweighting directional emphasis), \method produces merged models that reflect diverse preferences and in turn typically lie above the baseline points, offering stronger accuracy trade-offs at comparable performance levels and enabling selection of an operating point by adjusting the preference.

\vspace{2pt}
\noindent\textbf{Preference Sensitivity Analysis.}
To assess robustness to changes in the preference vector, we adopt the following setting. We fix the GTSRB and SVHN preferences to \(0.125\) each and randomly sample the remaining task preferences on the simplex so that all entries sum to \(1\). We sample 30 preference vectors and merge eight CLIP ViT-B/32 adapters with \method\ and with AdaMerging, then record accuracies on GTSRB and SVHN. In \Cref{fig:sensitivity} each point corresponds to one sampled preference and its resulting two-task accuracies. For AdaMerging we use a weighted sum of per-task pseudo-objectives.
Compared to AdaMerging, \method\ produces a tighter point cloud with lower empirical covariance and more stable two-task performance under preference perturbations, consistent with its direction-aware handling during LoRA merging.

\vspace{2pt}
\noindent\textbf{Evaluation of Generalization to Unseen Tasks.}
We assess the generalization performance of \method following the evaluation protocol of \citet{yang2023adamerging}. In this setting, \textit{seen tasks} are those for which LoRA fine-tuned checkpoints are available and included during merging, whereas \textit{unseen tasks} correspond to datasets without any associated LoRA checkpoints. As shown in \Cref{tab:general_knots}, models are merged on six seen tasks and subsequently evaluated on both the seen tasks themselves and two unseen tasks.
The results demonstrate that vanilla baselines and LoRA-aware methods struggle to achieve strong performance on unseen datasets. In contrast, \method consistently outperforms all baselines across both evaluation groups. Variant~B achieves the highest overall accuracy of 70.9\% in the first split, while Variant~B reaches 72.1\% in the second split. These results show that \method preserves in-domain performance and generalizes better to unseen benchmarks.

We provide more detailed analyses, including computational and memory costs, in~\cref{append:additional_exp} of the supplement.

\section{Conclusion}
We revisited LoRA merging through two complementary lenses: \emph{subspace coverage}, where LoRA updates collectively occupy a wide, low-redundancy subspace rather than collapsing onto a few shared directions, and \emph{anisotropy}, the directional imbalance in how changes in the scale of LoRA rank-1 directions translate into loss changes.
Ignoring either can collapse useful subspaces and bias loss-critical directions.
Building on effective-rank and directional-sensitivity analyses, we introduce \method-Merging to align direction weights with task preferences while preserving task-relevant subspaces.
Direction-wise reweighting maintains high subspace coverage and accounts for anisotropy, yielding consistent gains across per-task, joint, and transfer evaluations. This demonstrates that addressing both factors is key to robust, general-purpose LoRA merging.

\vspace{5pt}
\noindent\textbf{Acknowledgements}
This research was supported by the Challengeable Future Defense Technology Research and Development Program through the Agency For Defense Development(ADD) funded by the Defense Acquisition Program Administration(DAPA) in 2026(No.915102201)

{
		\small
		\bibliographystyle{ieeenat_fullname}
		\bibliography{main}
	}


\clearpage
\setcounter{page}{1}
\maketitlesupplementary

\setcounter{section}{0}
\renewcommand{\thesection}{\Alph{section}}

\setcounter{figure}{4}
\setcounter{table}{4}

\section{Proof of \cref{prop:anisotropy-bounds}}

\propone*

\begin{proof}
	Let $\bm J\in\mathbb{R}^{N\times K}$ be the restricted task-loss Jacobian with entries $\bm J_{i,k}=\langle\nabla f_i(\bm W),\,\bm S_k\rangle$, and take its SVD $\bm J=\bm U\bm\Sigma \bm V^{\!\top}$, where $\bm U\in\mathbb{R}^{N\times q}$ and $\bm V\in\mathbb{R}^{K\times q}$ have orthonormal columns ($\bm U^{\!\top}\bm U=\bm V^{\!\top}\bm V=\bm I_q$), $\bm\Sigma=\mathrm{diag}(\sigma_1,\dots,\sigma_q)$ with $\sigma_1\ge\dots\ge\sigma_q>0$, and $q=\mathrm{rank}(\bm J)$. For any coefficient vector $\bm\phi\in\mathbb{R}^{K}$, define $\tilde{\bm\phi}=\bm V^{\!\top}\bm\phi\in\mathbb{R}^{q}$. By orthogonal invariance of the Euclidean norm,
	\begin{align}
		\|\bm J\bm\phi\|_2
		 & = \|\bm U\bm\Sigma \bm V^{\!\top}\bm\phi\|_2
		= \|\bm\Sigma \bm V^{\!\top}\bm\phi\|_2
		= \|\bm\Sigma \tilde{\bm\phi}\|_2.
		\label{eq:orth-invariance-J}
	\end{align}
	The second equality holds since multiplication by $\bm U$ preserves the $\ell_2$ norm: for any $y\in\mathbb{R}^{q}$, $\|\bm U y\|_2=\|y\|_2$. Also, because $\bm V$ has orthonormal columns, $\|\bm V^{\!\top}\bm\phi\|_2\le\|\bm\phi\|_2$.

	Since $\bm\Sigma$ is diagonal,
	\begin{align}
		\|\bm J\bm\phi\|_2^2
		= \|\bm\Sigma \tilde{\bm\phi}\|_2^2
		= \sum_{i=1}^{q} \sigma_i^2\,\tilde\phi_i^{\,2}.
	\end{align}
	For the upper bound, use $\sigma_i\le\sigma_{\max}(\bm J)$ and $\|\tilde{\bm\phi}\|_2\le\|\bm\phi\|_2$ to obtain
	\begin{align}
		\|\bm J\bm\phi\|_2^2
		= \sum_{i=1}^{q} \sigma_i^2\,\tilde\phi_i^{\,2}
		\le \sigma_{\max}(\bm J)^2 \sum_{i=1}^{q} \tilde\phi_i^{\,2}
		\le \sigma_{\max}(\bm J)^2 \|\bm\phi\|_2^2,
	\end{align}
	hence $\|\bm J\bm\phi\|_2 \le \sigma_{\max}(\bm J)\,\|\bm\phi\|_2$.

	For the lower bound, decompose $\bm\phi=\bm\phi_{\mathrm{R}}+\bm\phi_{\mathrm{N}}$ with $\bm\phi_{\mathrm{R}}:=\bm V\bm V^{\!\top}\bm\phi\in\mathrm{range}(\bm V)$ and $\bm\phi_{\mathrm{N}}:=(\bm I-\bm V\bm V^{\!\top})\bm\phi\in\ker(\bm J)$. Equivalently, $\tilde{\bm\phi}=(\tilde\phi_1,\ldots,\tilde\phi_q)$ are the coordinates of $\bm\phi_{\mathrm{R}}$ in the $\bm V$-basis. Then
	\begin{align}
		\|\bm J\bm\phi\|_2^2
		= \sum_{i=1}^{q} \sigma_i^2\,\tilde\phi_i^{\,2}
		\ge \sigma_{\min}(\bm J)^2 \sum_{i=1}^{q}\tilde\phi_i^{\,2}
		= \sigma_{\min}(\bm J)^2 \|\bm\phi_{\mathrm{R}}\|_2^2,
	\end{align}
	which implies $\|\bm J\bm\phi\|_2 \ge \sigma_{\min}(\bm J)\,\|\bm\phi_{\mathrm{R}}\|_2$. If one restricts to the admissible subspace that excludes the nullspace (e.g., the LoRA span intersected with $\ker(\bm J)^\perp$), then $\bm\phi_{\mathrm{N}}=0$ and $\|\bm\phi_{\mathrm{R}}\|_2=\|\bm\phi\|_2$, yielding
	\[
		\sigma_{\min}(\bm J)\,\|\bm\phi\|_2 \;\le\; \|\bm J\bm\phi\|_2 \;\le\; \sigma_{\max}(\bm J)\,\|\bm\phi\|_2.
	\]
	(Equivalently, interpret $\sigma_{\min}(\bm J)$ as the smallest \emph{nonzero} singular value on the feasible subspace.)
\end{proof}

\section{Additional Related Work}
\label{sec:additional-related-work}

\paragraph{Pre-Merging and Additional Model Merging.}
Pre-merging studies~\cite{zhang2025lori,tang2023parameter,zhang2025unraveling,zhuang2025come} clarify and formalize the linear-composition behavior that \emph{Task Arithmetic} relies on: they analyze when parameter-space combinations approximate joint training via partial or tangent linearization \cite{liu2023tangent,tang2023parameter,jin2024fine,ortiz2024task} and connect these to NTK-style local linearization \cite{jacot2018neural}. Building on this foundation, recent formulations explore optimal-transport fusion for Transformers \cite{imfeld2023transformer}, cycle-consistent multi-model merging \cite{crisostomi2024c}, deriving layer-wise coefficients from model-internal statistics \cite{huang2024emr}, permutation with least-squares alignment \cite{nasery2025pleas}, constructing merging recipes from collections of fine-tuned models with diverse hyperparameters \cite{wortsman2022model} with early evidence on CLIP that averaging robustness-oriented fine-tunes can improve zero-shot robustness \cite{wortsman2022robust}, evolutionary search for recipe discovery \cite{akiba2025evolutionary}, sparsity- and magnitude-aware sampling to reduce interference \cite{deep2024della}, and dynamic Fisher weighting guided by Bayesian optimization \cite{lee2025dynamic}. AIM \cite{nobari2025activation} uses the information from the activation space of LLMs for merging. Another line of work leverages intermediate activations during inference or training and uses feature responses in specific layers to guide the merging process~\cite{nobari2025activation,liu2025sens,wu2025unlocking}.

\paragraph{Additional LoRA-targeted Merging.}
Within \emph{LoRA-targeted} methods, a training-free framework decouples direction and scale and orthogonalizes adapter directions before merging \cite{zheng2025decouple}. \citet{zhang2023composing} compose PEFT modules via weight-space arithmetic, enabling flexible transfer without extra training. \citet{zhang2025unraveling} enforce orthogonal LoRA subspaces pre-finetuning, reducing interference and preserving merge performance. \citet{liu2025sens} adjusts task and layer-wise merging coefficients using activation/gradient sensitivity and cross-task transferability. Several methods directly exploit low-rank structure for merging~\cite{prabhakar2024lora,gargiulo2025task,panariello2025accurate}. In particular, \citet{panariello2025accurate} propose a core space that can be efficiently combined with existing merging baselines.

\paragraph{Model Merging in LLMs and VLMs.}
More recent work on model merging directly targets LLMs and VLMs and proposes strategies tailored to these architectures~\cite{chen2025bring,du2025adamms,lu2024twin,wei2025unifying,zhou2024metagpt,zhu2025remedy,zeng2025parameter}. In particular, RobustMerge~\cite{zeng2025parameter} introduces a merging method for MLLMs that exploits directional robustness in a low-rank space. \citet{zhang2025beyond} leverage the rotation symmetry of self-attention layers, which substantially enlarges the equivalence set of Transformer models compared to permutation-based symmetries.

\paragraph{Model Merging for Multi-Task and Continual Learning.}
Another line of work explicitly connects model merging with conventional multi-task learning~\cite{yang2024surgeryv2,yang2024representation,shen2024efficient,wei2025modeling,lee2025interaction}, viewing merging as a mechanism for parameter sharing and representation consolidation across tasks. A complementary set of studies aims to localize task-specific information in the parameters or to quantify interference within linear layers~\cite{davari2024model,tam2023merging,wang2024localizing,cheng2025whoever}, while \citet{marczak2025no} further decompose the parameter space into shared and task-specific subspaces. MuDSC~\cite{xu2024training} also explores heterogeneous settings with diverse dense prediction tasks for evaluating merging performance. However, it is not directly comparable to our setting because it allows branch-like architectures such as ZipIt~\cite{stoica2023zipit}. Model merging has also been studied in continual learning scenarios~\cite{Dziadzio_2025_CVPR,tang2025merging,yang2025continual,wang2024lines,wang2025memoir,qiu2025null}, where merging is used to mitigate catastrophic forgetting and to accumulate knowledge across tasks over time.

\paragraph{LoRA Composition.}
Dynamic routing or mixture-of-experts compositions learn to combine multiple LoRAs at inference time \cite{wu2024mixture,liao2025hmora,li2024mixlora,tang2024smile}, and few-shot dynamic composition improves cross-task transfer \cite{huang2024lorahub,horoi2025less}. In vision, multi-LoRA merging for multi-task recognition demonstrates modularity \cite{kesim2024multi}. In diffusion, multi-LoRA composition and subject-style concept mixing further highlight modularity \cite{zhong2024multi,gandikota2024concept,zhuang2025timestep,zou2025cached,shah2024ziplora,yang2024lora}. These methods dynamically assign or route adapters per input or task in an MoE-like fashion, which alters the inference-time architecture and falls outside our scope.

\section{Experimental Settings}
\subsection{Datasets}
\label{append:datasets}

\subsubsection{Vision Benchmarks}
Following \citet{stoica2025knots}, we adopt a ViT-B/32 backbone pre-trained on ImageNet-21k~\cite{deng2009imagenet}, fine-tuned with LoRA rank 16. Evaluation is conducted on eight standard image-classification datasets. Brief summaries are provided below.

\noindent\textbf{SUN397~\cite{xiao2010sun}.}
Large-scale scene recognition dataset with 397 categories and 108,754 images, each class having at least 100 samples.

\noindent\textbf{Cars~\cite{krause20133d}.}
Fine-grained car recognition covering 196 categories with 16,185 images, evenly split into train and test.

\noindent\textbf{RESISC45~\cite{cheng2017remote}.}
Remote sensing benchmark with 45 scene classes and 31,500 images, about 700 per class.

\noindent\textbf{EuroSAT~\cite{helber2019eurosat}.}
Satellite image classification dataset of 27,000 images in 10 land-use categories across diverse regions.

\noindent\textbf{SVHN~\cite{yuval2011reading}.}
Street View House Numbers dataset with 10 digit classes, 73,257 train images, 26,032 test images, and 500k+ extra samples.

\noindent\textbf{GTSRB~\cite{stallkamp2011german}.}
Traffic sign recognition dataset with 43 categories and over 50,000 labeled images.

\noindent\textbf{MNIST~\cite{lecun1998mnist}.}
Classic handwritten digit dataset of 70,000 grayscale images evenly distributed across 10 classes (60k train / 10k test).

\noindent\textbf{DTD~\cite{cimpoi2014describing}.}
Texture classification dataset with 47 attributes and 5,640 images, about 120 per class.

\subsubsection{Language Benchmarks}
In addition to the main benchmarks used in the paper, we evaluate our method on six classification datasets to assess its applicability to large language models. These datasets are used in conjunction with the LLaMA-3 8B model. Below, we briefly summarize each dataset.

\noindent\textbf{QNLI~\cite{wang2018glue}.}
A question-answering dataset reformulated as sentence-pair classification. Each example is a (question, sentence) pair, and the label indicates whether the sentence contains the answer to the question (binary).

\noindent\textbf{MNLI~\cite{williams2018broad}.}
A large, crowdsourced collection of sentence pairs annotated for natural language inference. Given a premise and a hypothesis, the task is to predict one of three labels: \emph{entailment}, \emph{contradiction}, or \emph{neutral} (three-class).

\noindent\textbf{SNLI~\cite{bowman2015large}.}
A corpus of \(\sim\)570k human-written sentence pairs labeled for \emph{entailment}, \emph{contradiction}, or \emph{neutral}, supporting the standard NLI task (three-class).

\noindent\textbf{RTE~\cite{dagan2005pascal, haim2006second, giampiccolo2007third, bentivogli2009fifth}.}
A suite of textual entailment datasets compiled from the RTE1, RTE2, RTE3, and RTE5 challenges. Examples are drawn from news and Wikipedia. In the GLUE formulation, all RTE sets are converted to binary classification by collapsing \emph{neutral} and \emph{contradiction} into \emph{not-entailment} for consistency.

\noindent\textbf{SICK~\cite{marelli2014semeval}.}
A dataset of around 10k sentence pairs built from image and video captions. Each pair is labeled for semantic relatedness (1-5 scale) and for textual entailment with three classes: \emph{entailment}, \emph{contradiction}, and \emph{neutral}. It is widely used for evaluating compositional semantics, entailment, and similarity in a unified setting.

\noindent\textbf{SciTail~\cite{khot2018scitail}.}
An entailment dataset derived from multiple-choice science exams and web sentences. Each (premise, hypothesis) pair is labeled as \emph{entails} or \emph{neutral} (binary). The dataset contains 27,026 examples (10,101 \emph{entails} and 16,925 \emph{neutral}).

\begin{figure*}[t]
	\centering
	\begin{minipage}{0.80\linewidth}
		\centering
		\begin{subfigure}[t]{0.49\linewidth}
			\includegraphics[width=\linewidth]{figure/subspace_coverage_new2/fig_lines_q_proj_raw.png}
			\subcaption{Query}\label{fig:spread_q_norm}
		\end{subfigure}\hfill
		\begin{subfigure}[t]{0.49\linewidth}
			\includegraphics[width=\linewidth]{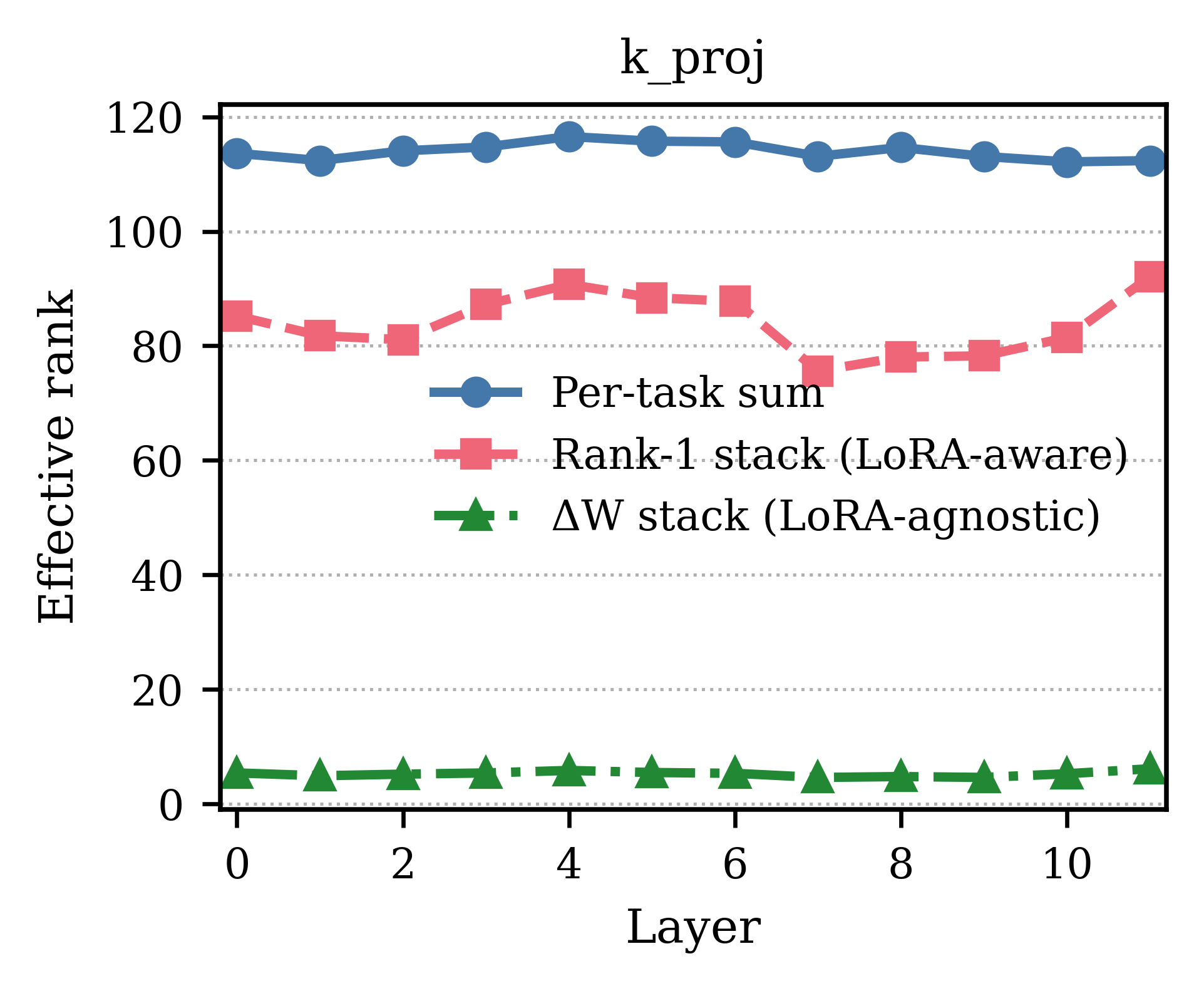}
			\subcaption{Key}\label{fig:spread_k_norm}
		\end{subfigure}

		\vspace{2pt}

		\begin{subfigure}[t]{0.49\linewidth}
			\includegraphics[width=\linewidth]{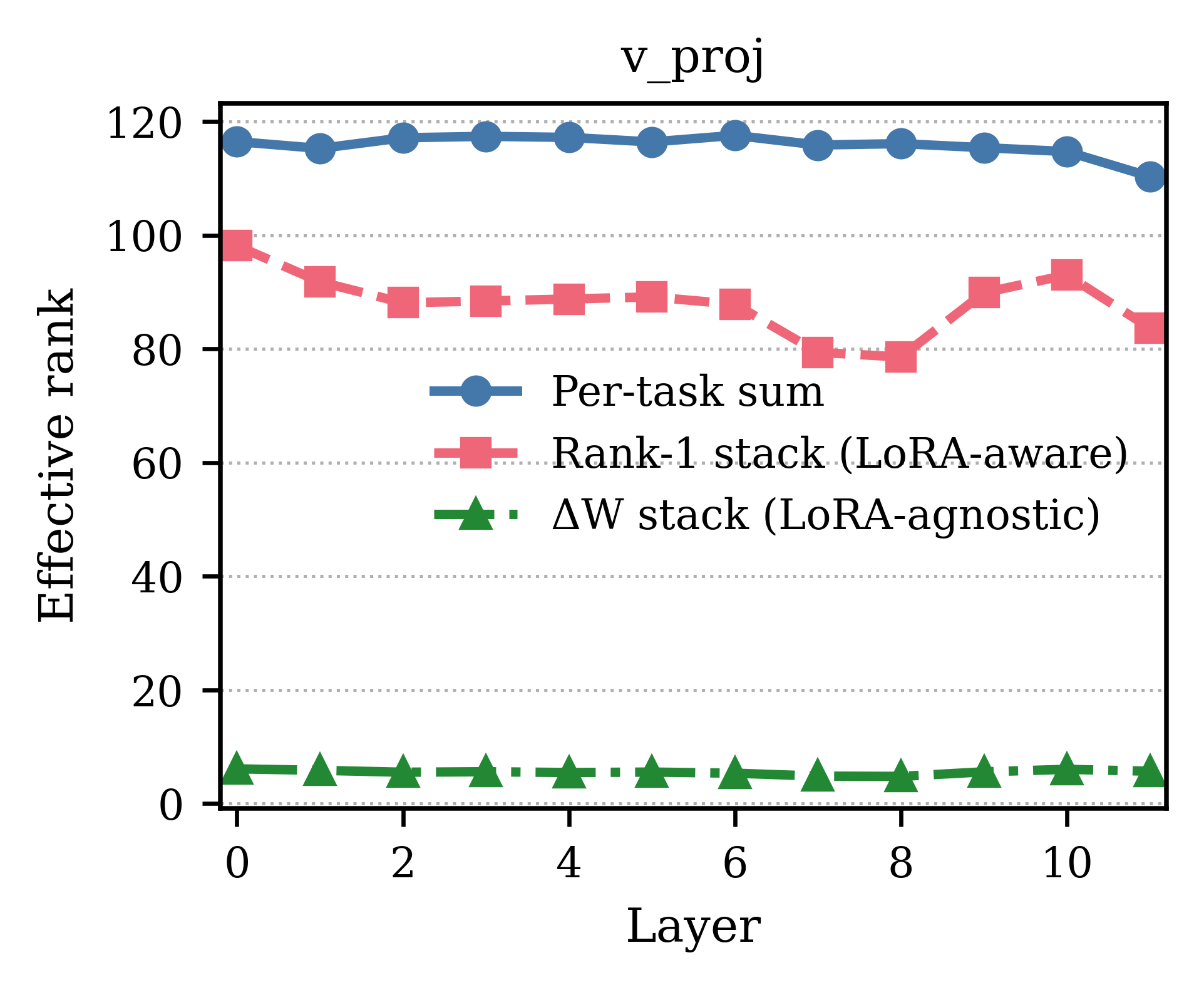}
			\subcaption{Value}\label{fig:spread_v_norm}
		\end{subfigure}\hfill
		\begin{subfigure}[t]{0.49\linewidth}
			\includegraphics[width=\linewidth]{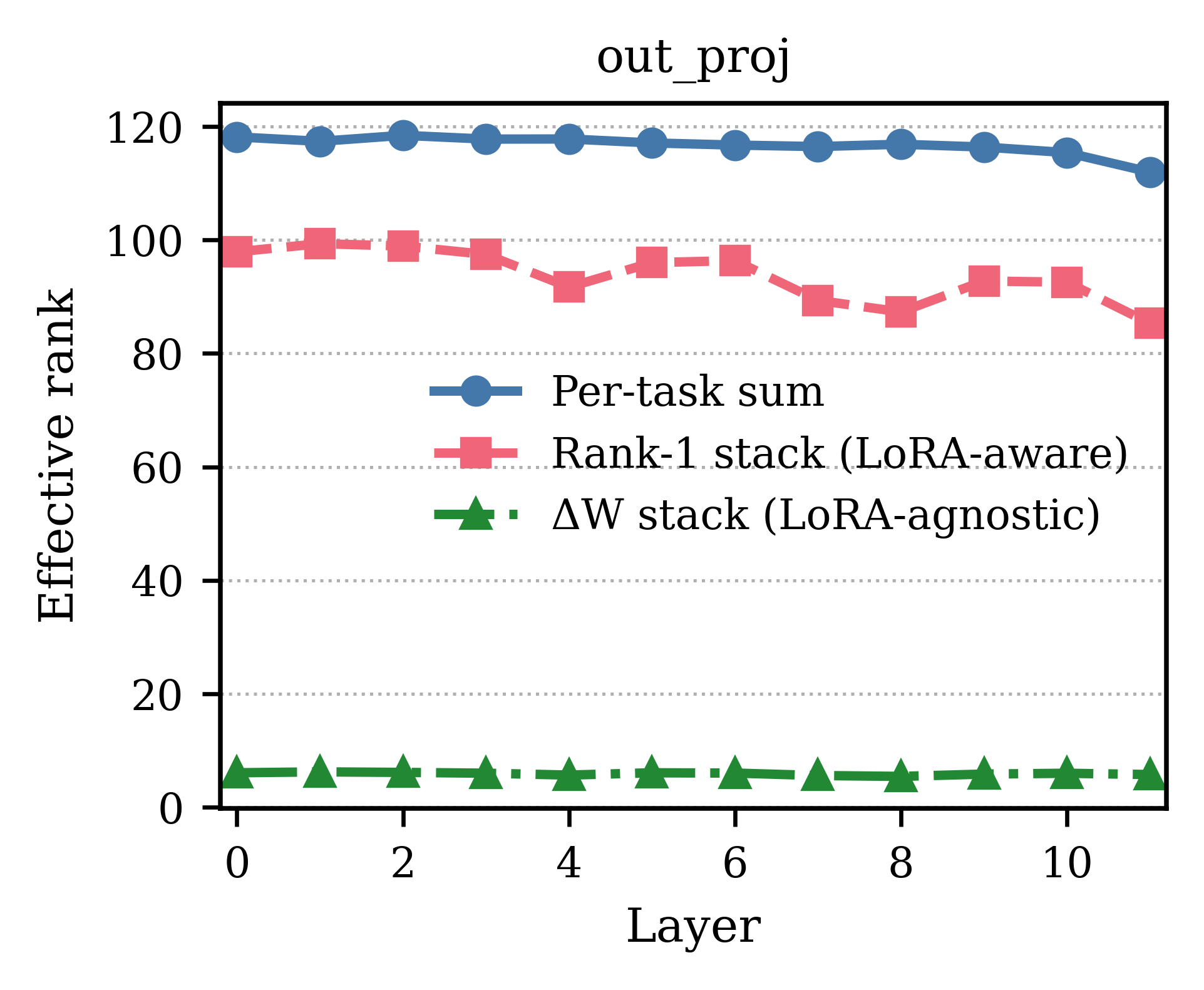}
			\subcaption{Output}\label{fig:spread_o_norm}
		\end{subfigure}
	\end{minipage}

	\caption{\textbf{Effective Rank} across layers for each module. Higher curves indicate broader subspace coverage. The gap between \emph{$\Delta \bm W$ stack} and \emph{Rank-1 stack} reflects merge-induced collapse.}
	\label{fig:spread_raw}
	\vspace{-10pt}
\end{figure*}

\subsection{Training Details}
\label{append:finetune}

For the vision tasks, we fine-tuned CLIP~\cite{radford2021learning} with a ViT-B/32 backbone.
All models were optimized using AdamW~\cite{loshchilov2018decoupled} with a learning rate of 3e-4 and a cosine learning rate scheduler~\cite{loshchilov2017sgdr} with a weight decay of 1e-4.

For sequence classification, we fine-tuned LLaMA-3~\cite{dubey2024llama} with 8B parameters, following the setup of \cite{stoica2025knots}.
Optimization used AdamW with a learning rate of 3e-5 and a linear scheduler.
A linear classification head with three outputs was added for natural language inference (entailment, contradiction, neutral), and in binary settings predictions for the unused class were disregarded.

To enable efficient adaptation, we applied LoRA~\cite{hu2022lora} to both CLIP and LLaMA.
Specifically, low-rank adapters were attached to the query, key, value, and output projection matrices of each transformer block.
Unless otherwise stated, the default LoRA hyperparameters were: rank = 16, $\alpha$ = 16, and dropout = 0.1.

\begin{figure*}[t]
	\centering
	\begin{subfigure}[t]{0.24\textwidth}
		\centering
		\includegraphics[width=\linewidth]{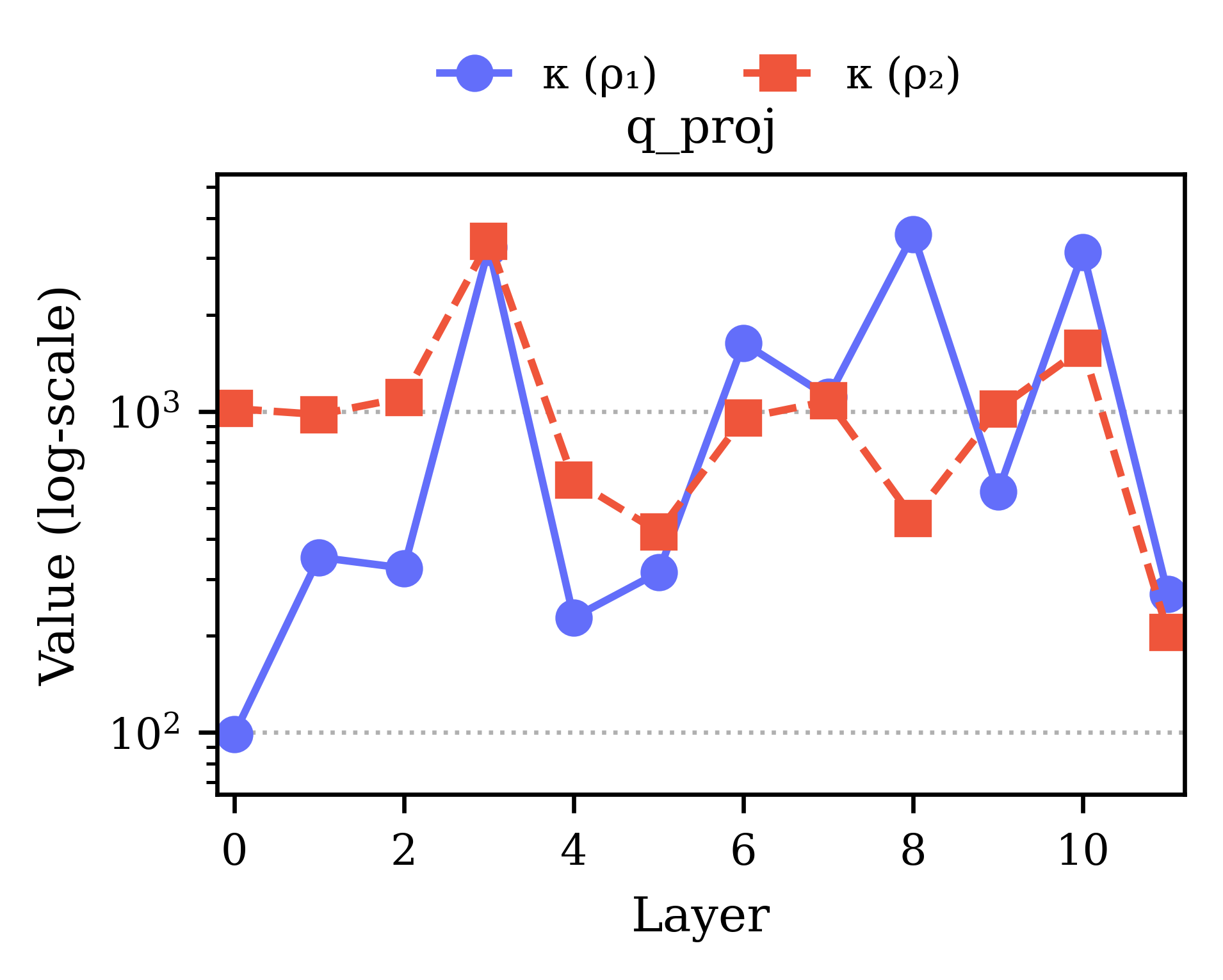}
		\caption{\texttt{q\_proj}}
	\end{subfigure}\hfill
	\begin{subfigure}[t]{0.24\textwidth}
		\centering
		\includegraphics[width=\linewidth]{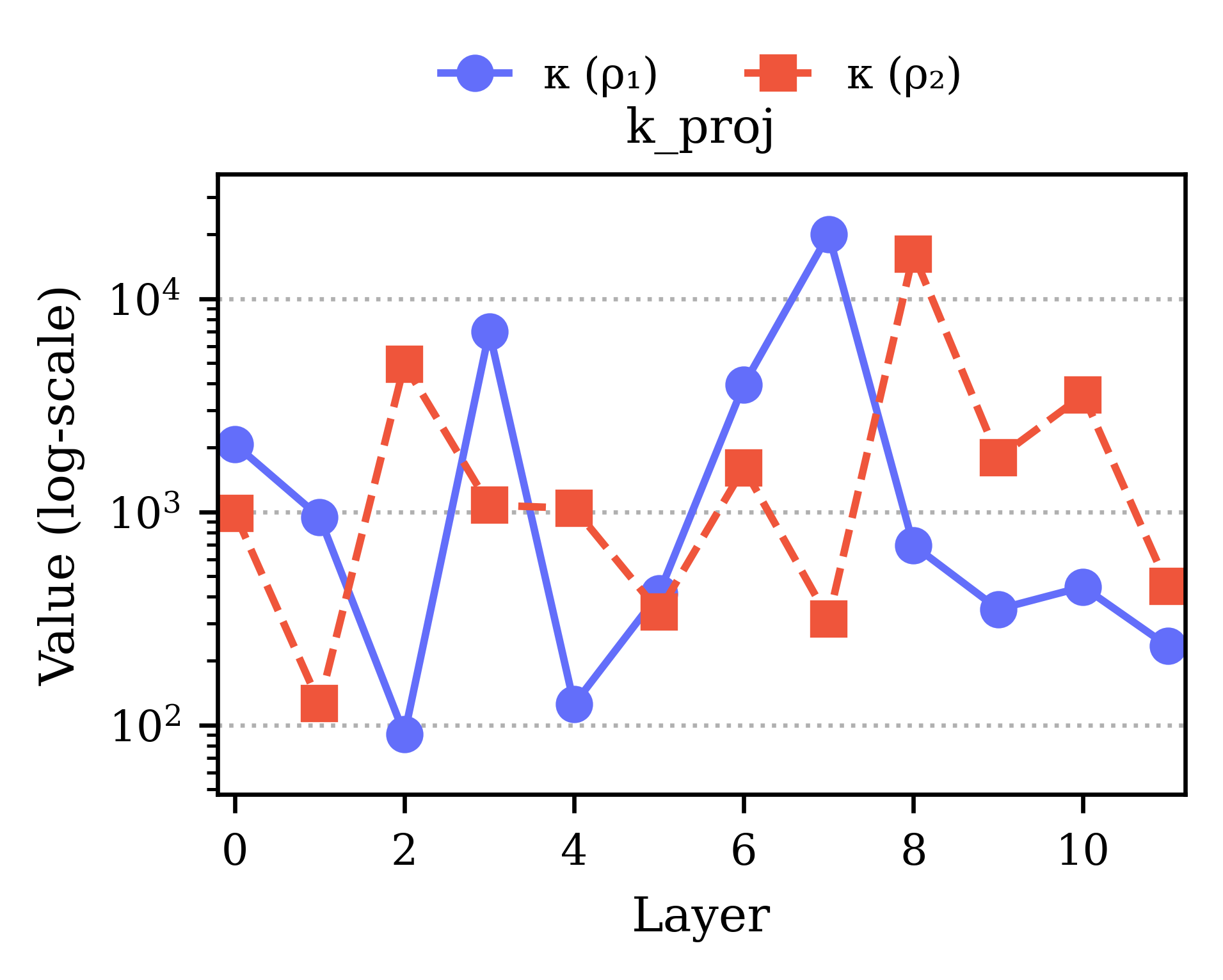}
		\caption{\texttt{k\_proj}}
	\end{subfigure}\hfill
	\begin{subfigure}[t]{0.24\textwidth}
		\centering
		\includegraphics[width=\linewidth]{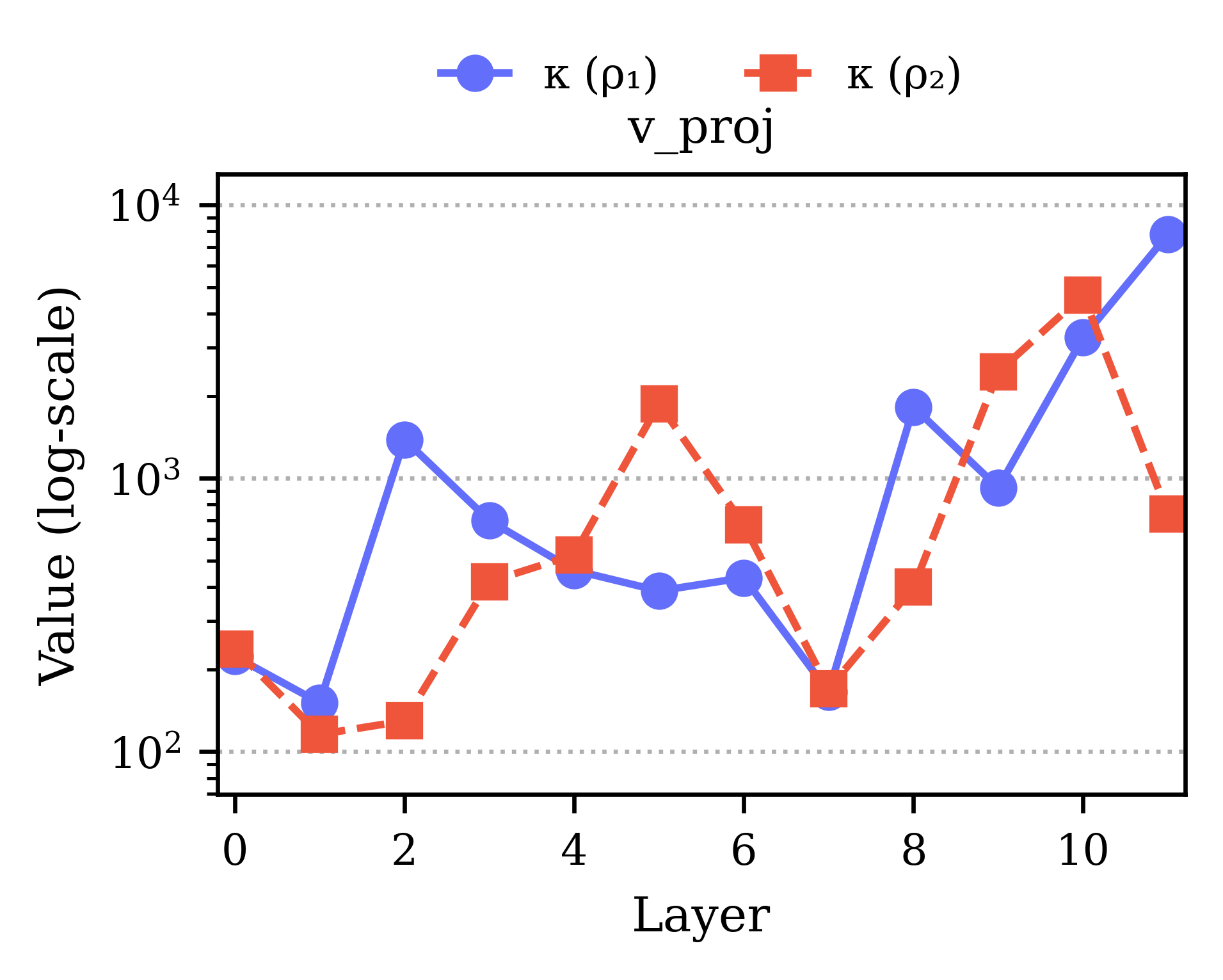}
		\caption{\texttt{v\_proj}}
	\end{subfigure}\hfill
	\begin{subfigure}[t]{0.24\textwidth}
		\centering
		\includegraphics[width=\linewidth]{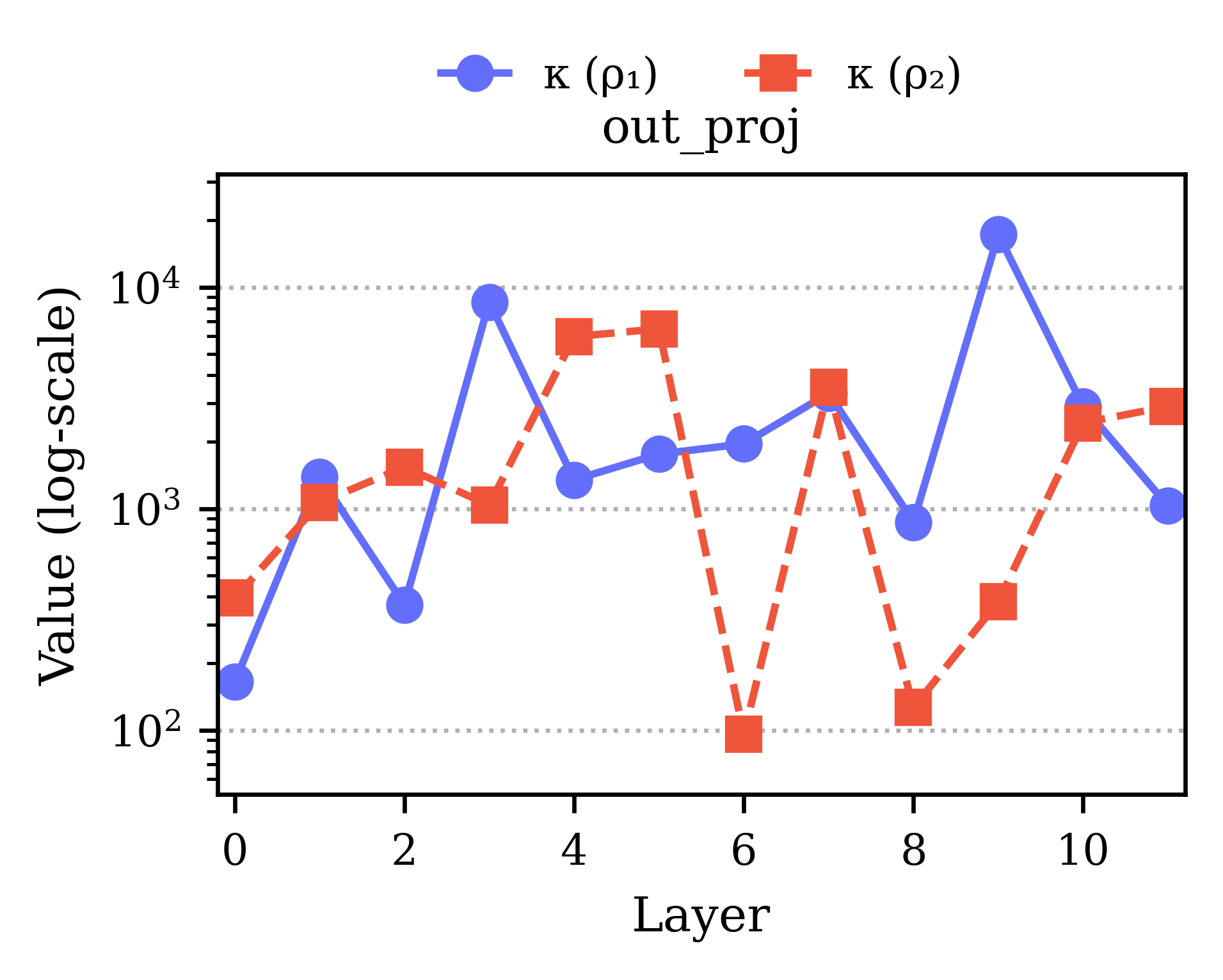}
		\caption{\texttt{out\_proj}}
	\end{subfigure}
	\caption{\textbf{Condition Number Anisotropy \(\boldsymbol{\kappa}\) (RAW Basis).}
		Layer-wise condition number \(\kappa(\bm\rho)\) per module under the non-orthogonal LoRA basis (RAW).
		Larger \(\kappa\) indicates stronger \emph{within-preference} directional concentration of loss sensitivity.
		(Here, \(\bm\rho_1\) uniformly weights all tasks, whereas \(\bm\rho_2\) assigns all weight to a single task (one-hot).)}
	\label{fig:kappa-lines-raw}
	\vspace{-10pt}
\end{figure*}
\begin{figure*}[t]
	\centering
	\begin{subfigure}[t]{0.24\textwidth}
		\centering
		\includegraphics[width=\linewidth]{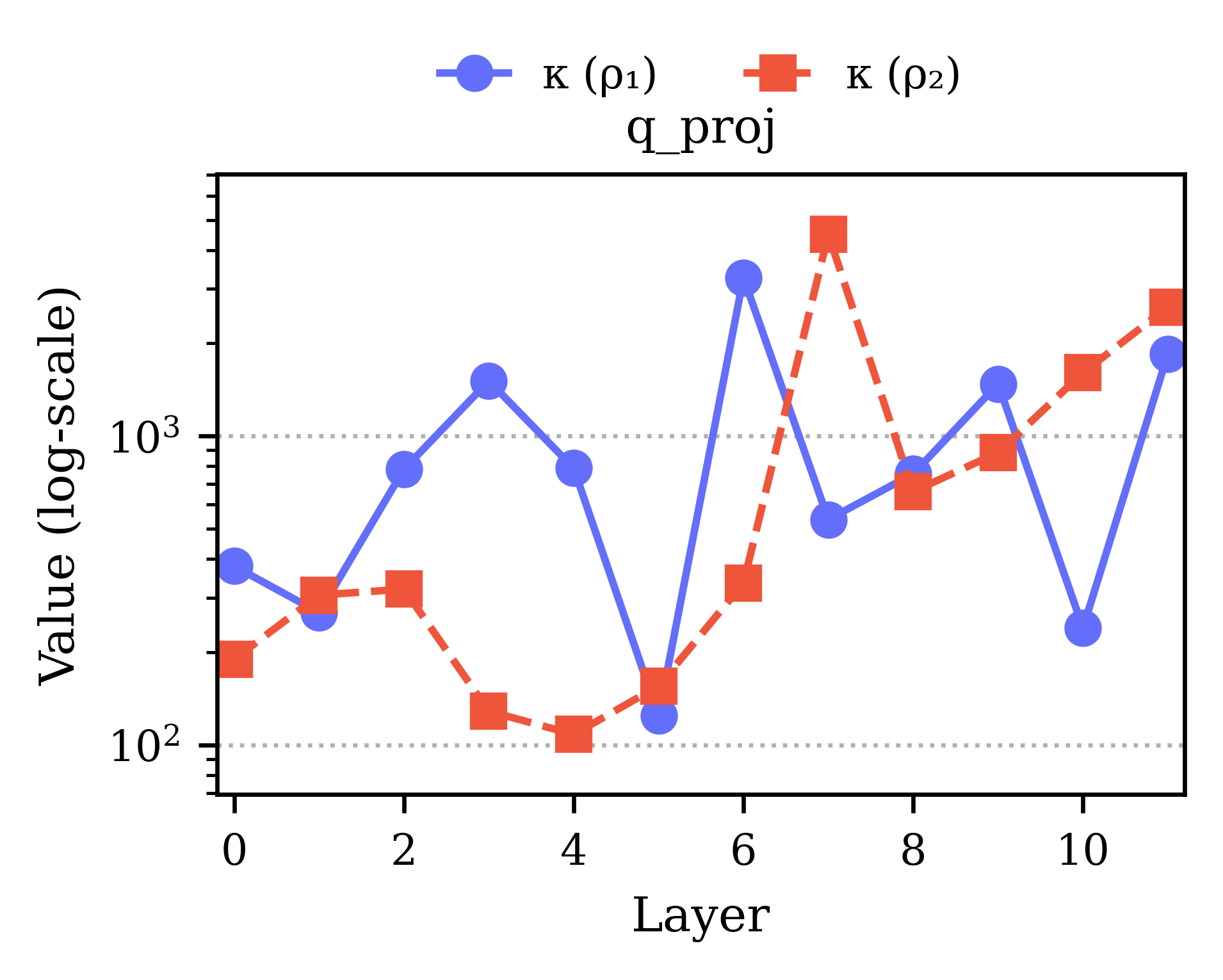}
		\caption{\texttt{q\_proj}}
	\end{subfigure}\hfill
	\begin{subfigure}[t]{0.24\textwidth}
		\centering
		\includegraphics[width=\linewidth]{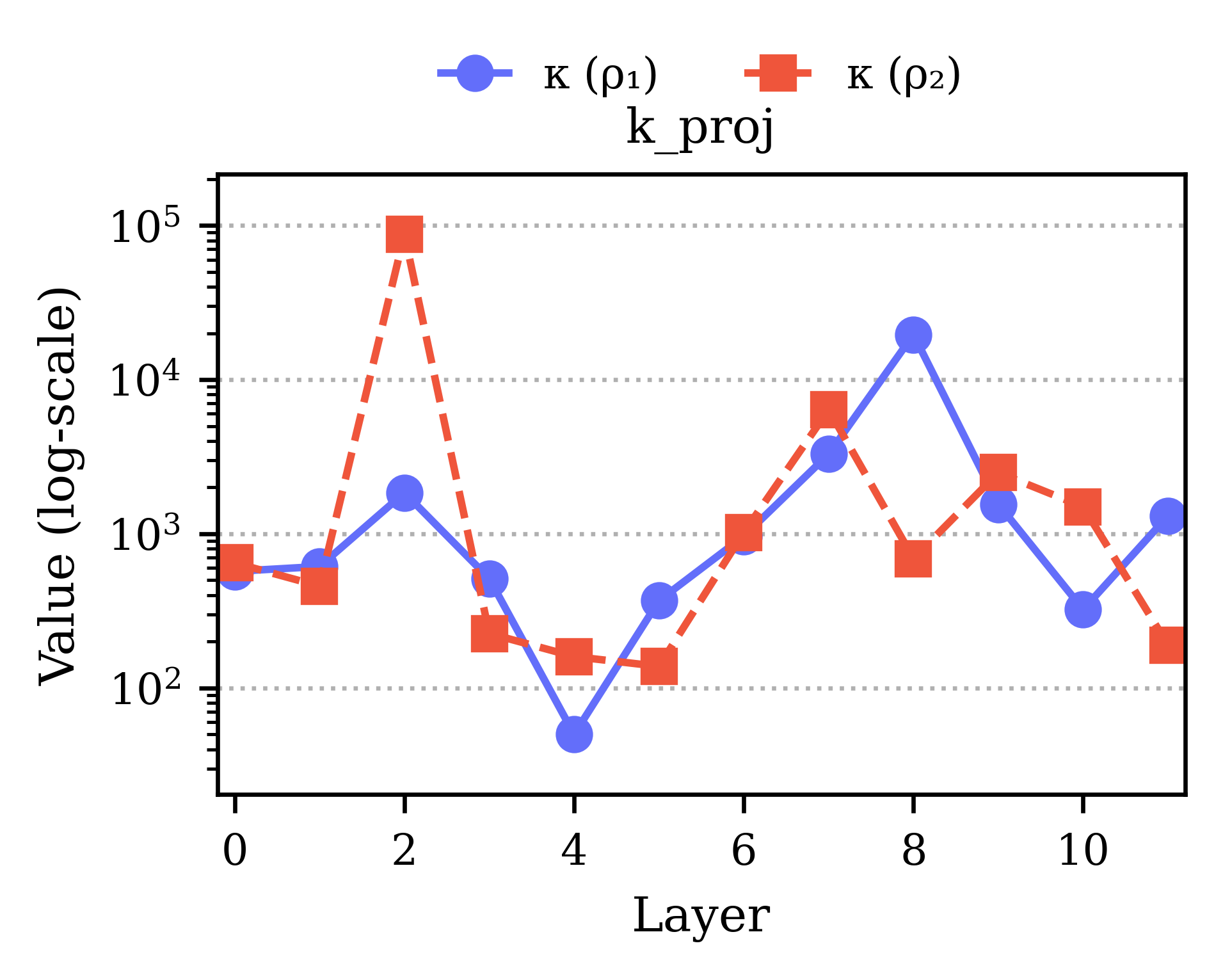}
		\caption{\texttt{k\_proj}}
	\end{subfigure}\hfill
	\begin{subfigure}[t]{0.24\textwidth}
		\centering
		\includegraphics[width=\linewidth]{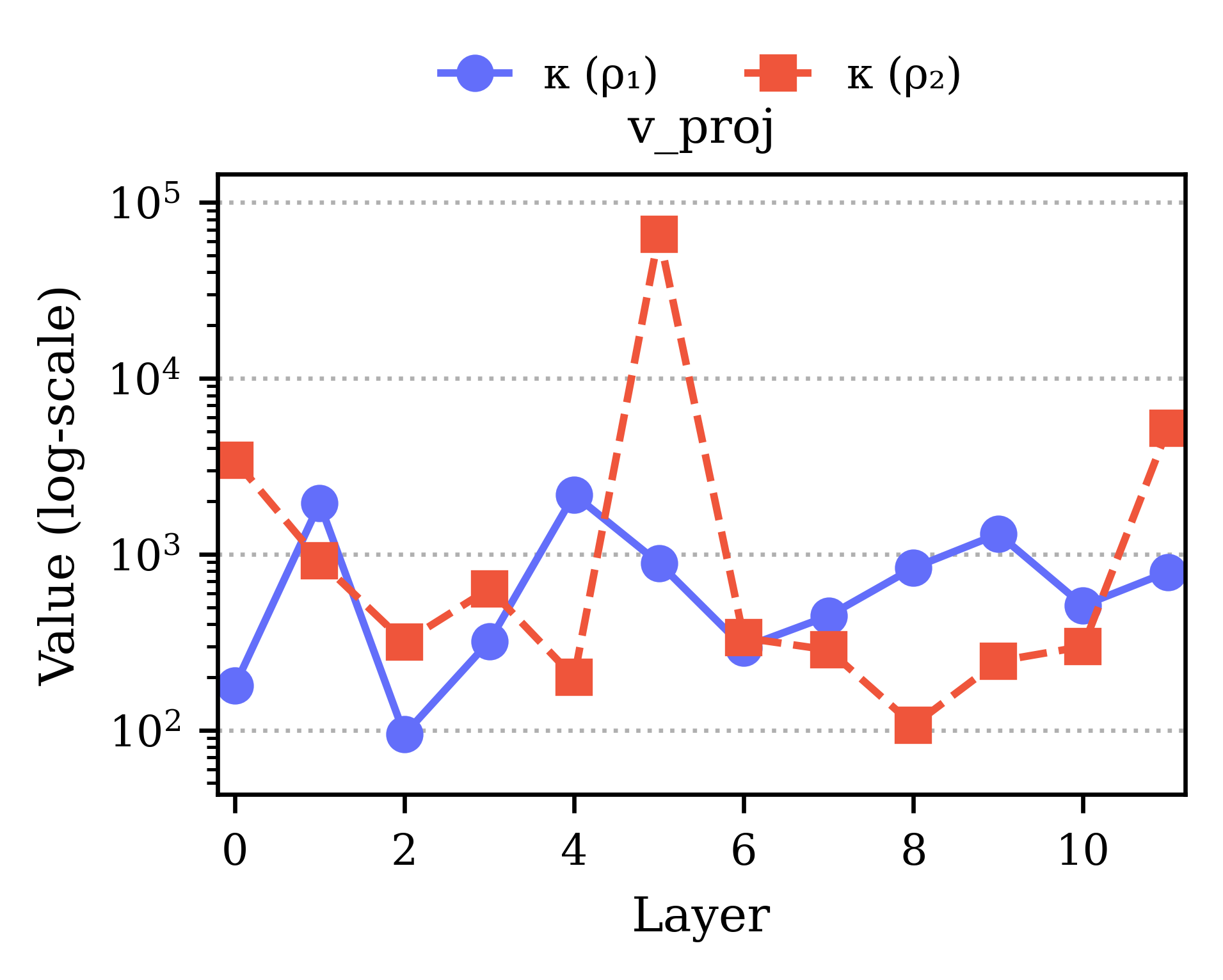}
		\caption{\texttt{v\_proj}}
	\end{subfigure}\hfill
	\begin{subfigure}[t]{0.24\textwidth}
		\centering
		\includegraphics[width=\linewidth]{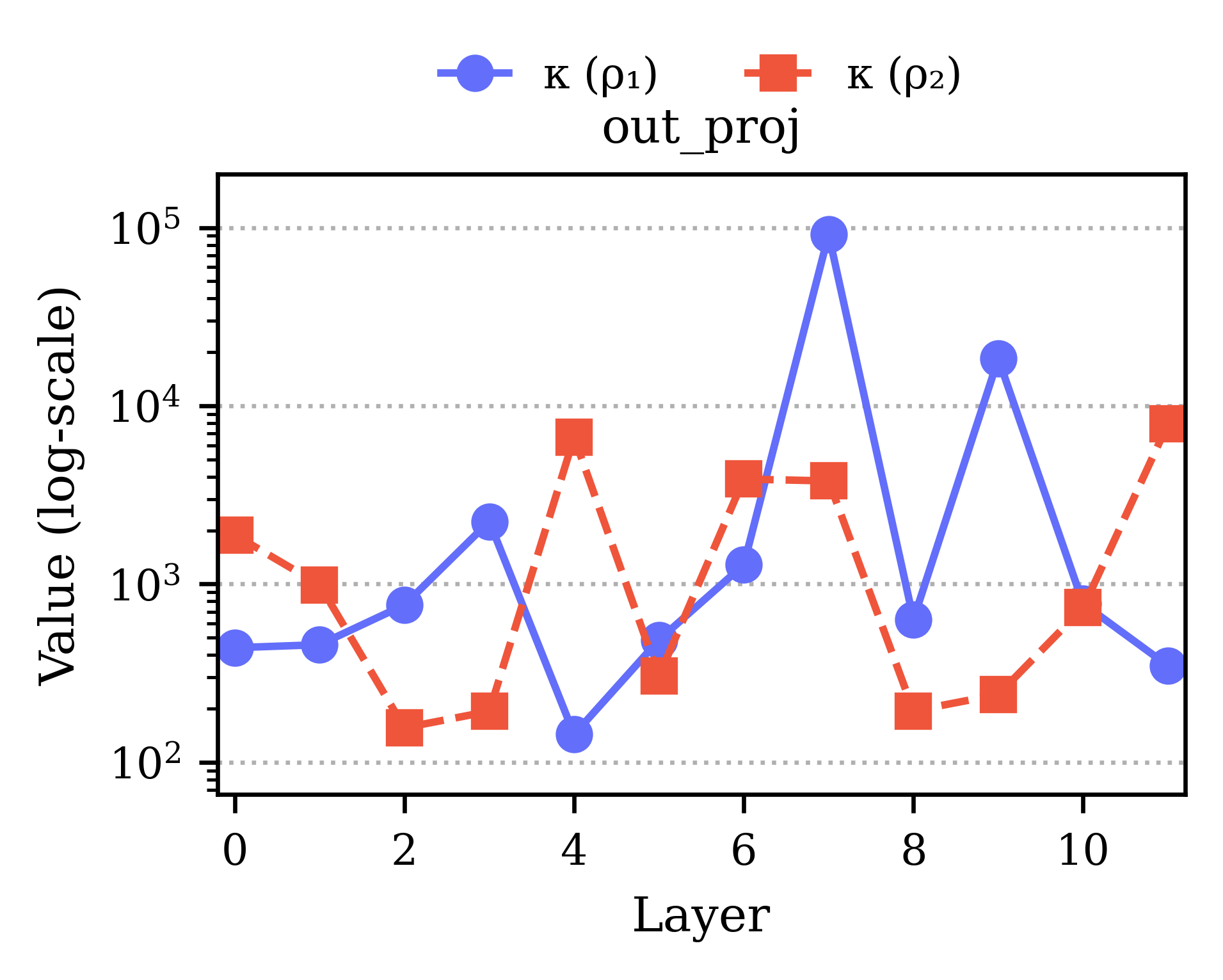}
		\caption{\texttt{out\_proj}}
	\end{subfigure}
	\caption{\textbf{Condition Number Anisotropy \(\boldsymbol{\kappa}\) (SVD Basis).}
		Layer-wise condition number \(\kappa(\bm\rho)\) per module under the orthogonal LoRA basis (SVD).
		Larger \(\kappa\) indicates stronger \emph{within-preference} directional concentration of loss sensitivity.
		(Here, \(\bm\rho_1\) uniformly weights all tasks, whereas \(\bm\rho_2\) assigns all weight to a single task (one-hot).)}
	\label{fig:kappa-lines-svd}
\end{figure*}


\subsection{Implementation Details}
\label{append:imple}

In this paper, we compare our approach against ten baselines, following the experimental protocols reported in their original works.
For TA~\cite{ilharco2022editing}, we tune the scaling coefficient of task vectors across the range of $\left[ 0.1, 0.2, \dots 1.0 \right]$.
For TIES~\cite{yadav2023ties} and DARE-TIES~\cite{yu2024language}, both the scaling coefficient $\lambda$ and pruning coefficient $\eta$ were selected via grid search. Specifically, $\lambda$ was varied over $\left[0.8, 0.9, \dots, 1.8\right]$, while $\eta$ was explored over $\left[0.1, 0.2, \dots, 0.9\right]$.
For AdaMerging~\cite{yang2023adamerging}, all coefficients $\{\lambda_k^l\}_{n=1, l=1}^{N, L}$ (where $N$ is the number of tasks and $L$ the number of layers) are initialized to 0.3 before optimization with the entropy surrogate.
For KnOTS~\cite{stoica2025knots}, we evaluate two variants: KnOTS-TIES and KnOTS-DARE-TIES, which combine KnOTS with TIES~\cite{yadav2023ties} and DARE~\cite{yu2024language}, respectively.
For LoRA-LEGO~\cite{zhao2025loralego}, we reimplement the algorithm following the original paper.
We apply $k$-means clustering to LoRA Minimal Semantic Units (MSUs), where each MSU $u=[a, b]$ consists of a row $a$ from the LoRA down-projection matrix $A$ and the corresponding column $b$ from the up-projection matrix $B$.
We experiment with cluster sizes $k \in \{8, 16, 32, 64, 128\}$, and reproduce the dual scaling strategies: parameter reweighting and output reweighting.
Parameter reweighting rescales each cluster centroid $\mu$ to match the average norm of its members, $\mu^{\prime} = \tfrac{\tfrac{1}{p} \sum_{i=1}^p \|s_i\|}{\|\mu\|}\mu$, compensating for the reduced norm after merging, while output reweighting scales the merged LoRA by $\tfrac{\sqrt{r}}{\sqrt{k}}$ (with $r$ the original LoRA rank and $k$ the merged rank) to stabilize variance. We evaluate every $k$ and both scaling strategies, reporting the best performance.
We further include the LoRA-aware RobustMerge~\cite{zeng2025parameter} and strong vanilla-merging methods EMR-Merging~\cite{huang2024emr}, FR-Merging~\cite{zheng2025free}, and Iso-CTS~\cite{marczak2025no}, and report their results in \cref{append:additional_exp}.

\begin{figure*}[t]
	\vspace{-10pt}
	\centering
	\begin{subfigure}[t]{0.47\textwidth}
		\centering
		\includegraphics[width=\linewidth]{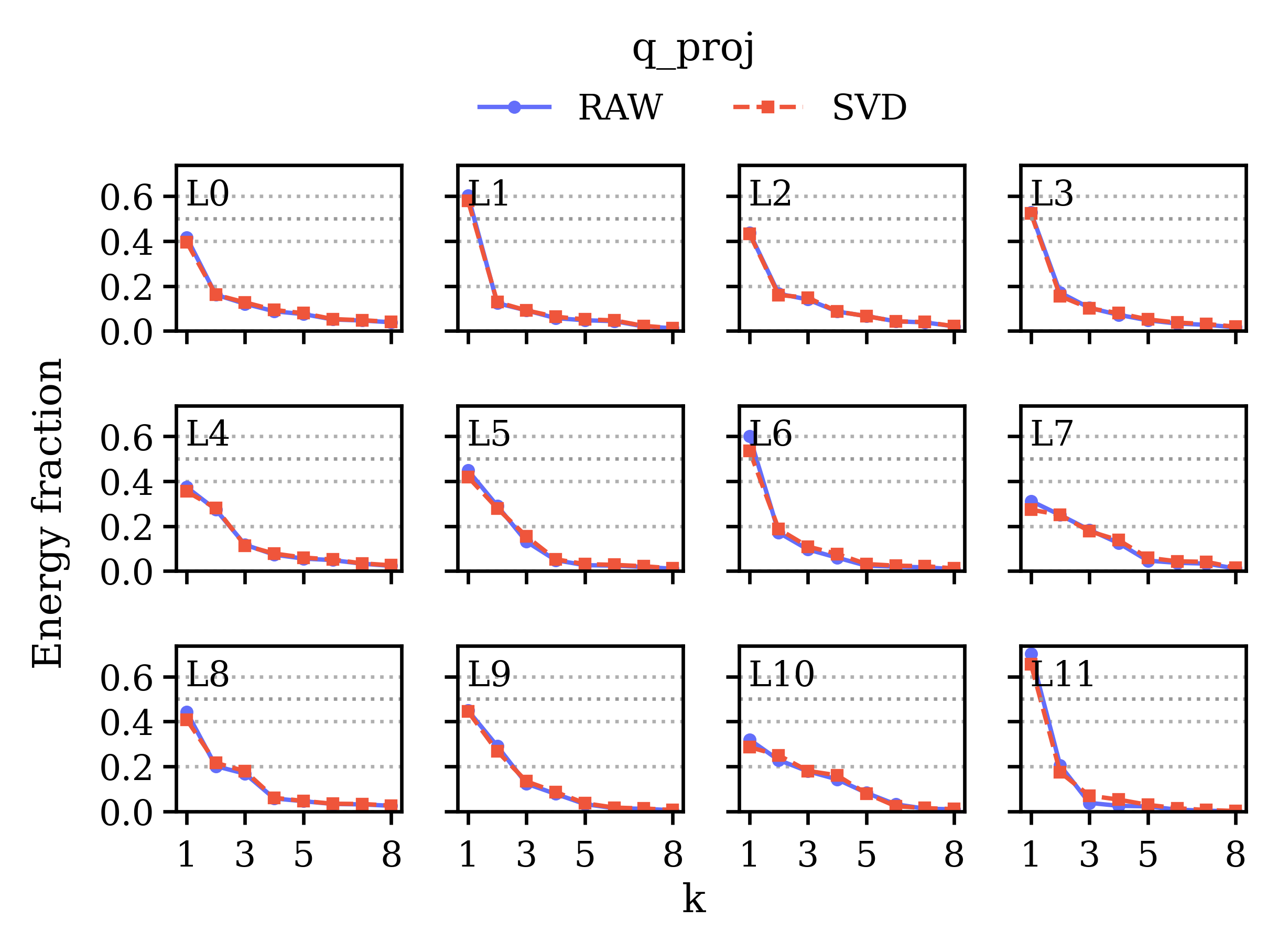}
		\caption{\texttt{q\_proj}}
	\end{subfigure}\hfill
	\begin{subfigure}[t]{0.47\textwidth}
		\centering
		\includegraphics[width=\linewidth]{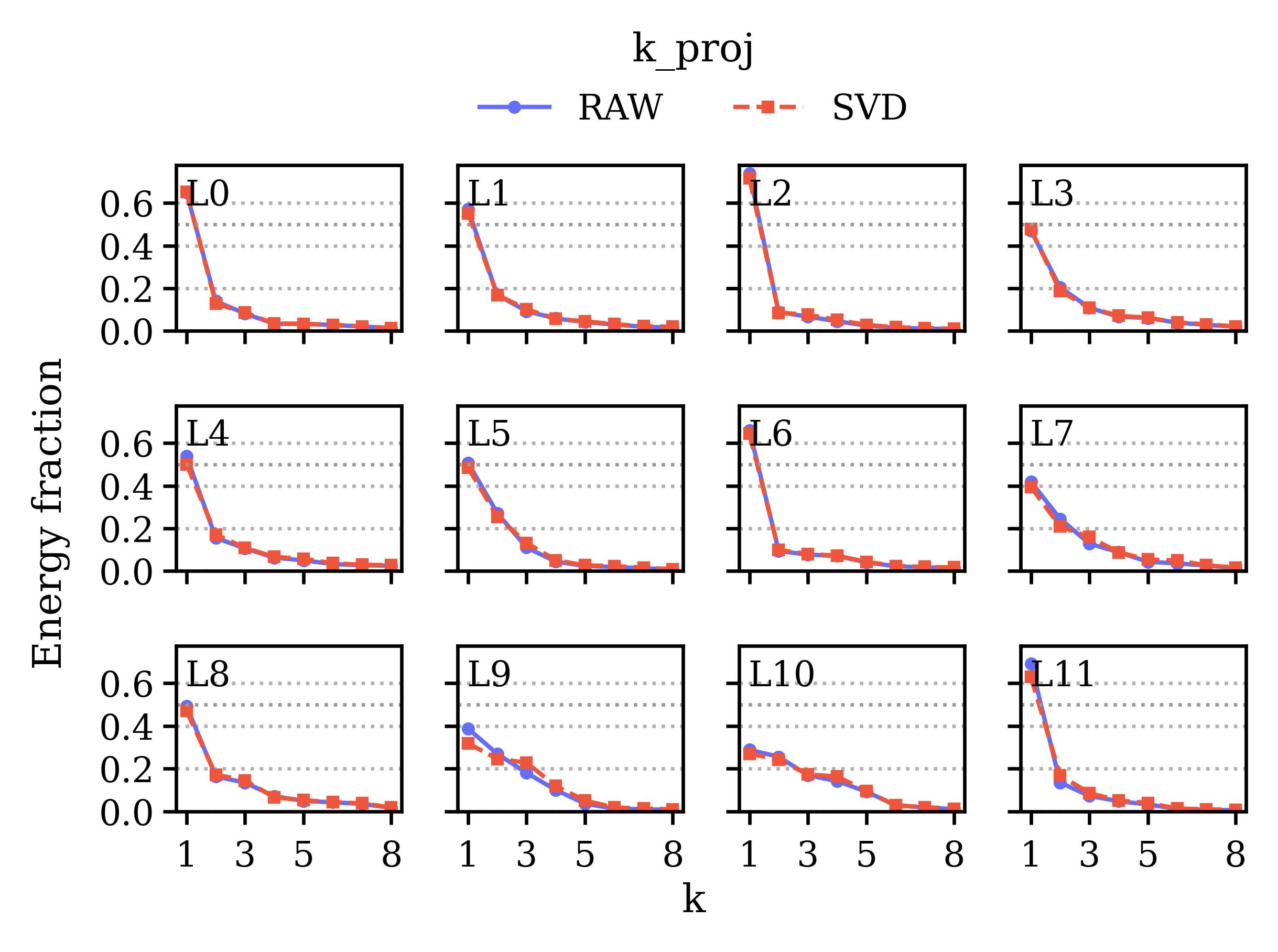}
		\caption{\texttt{k\_proj}}
	\end{subfigure}
	\vspace{4pt}
	\begin{subfigure}[t]{0.47\textwidth}
		\centering
		\includegraphics[width=\linewidth]{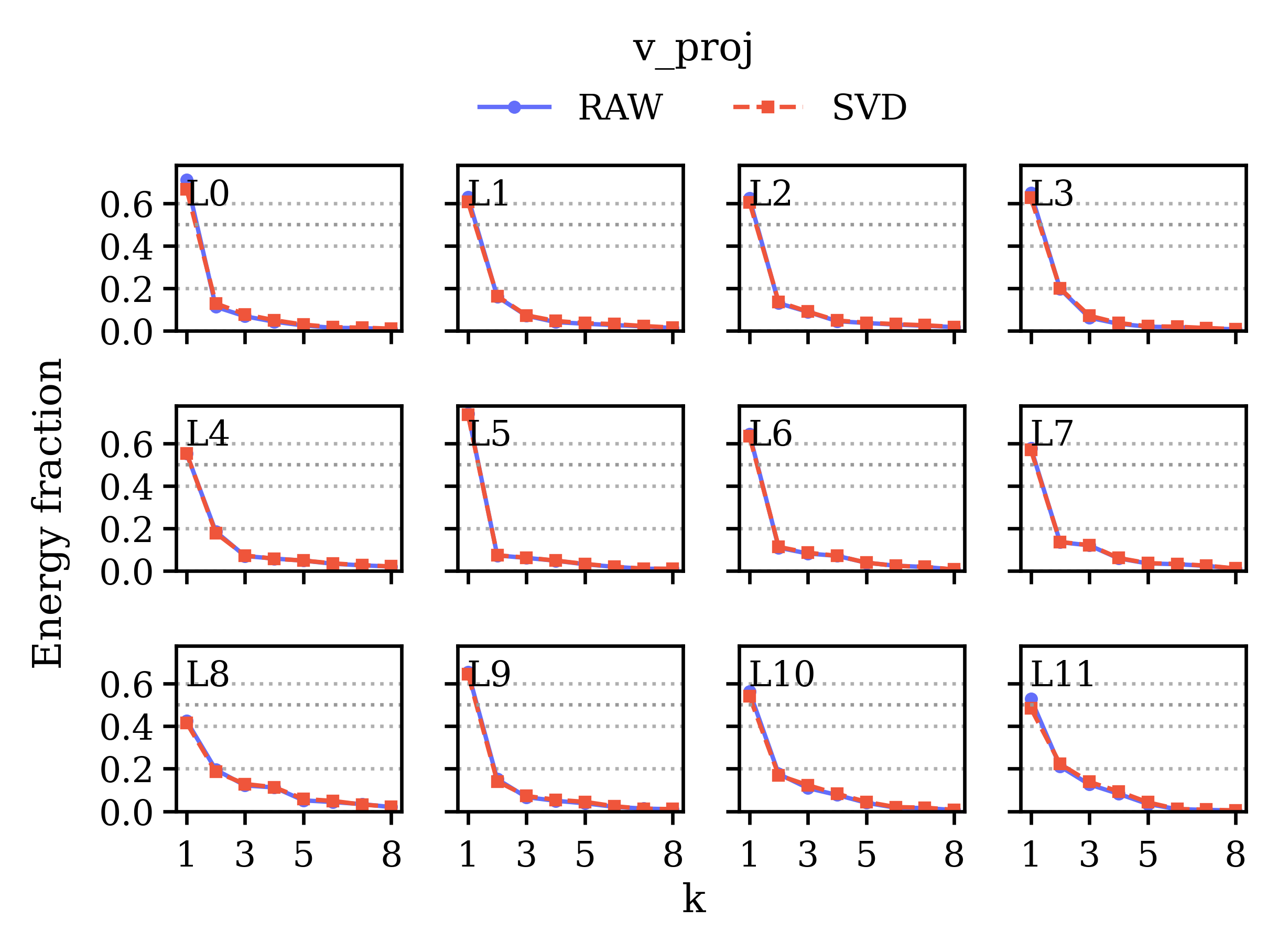}
		\caption{\texttt{v\_proj}}
	\end{subfigure}\hfill
	\begin{subfigure}[t]{0.47\textwidth}
		\centering
		\includegraphics[width=\linewidth]{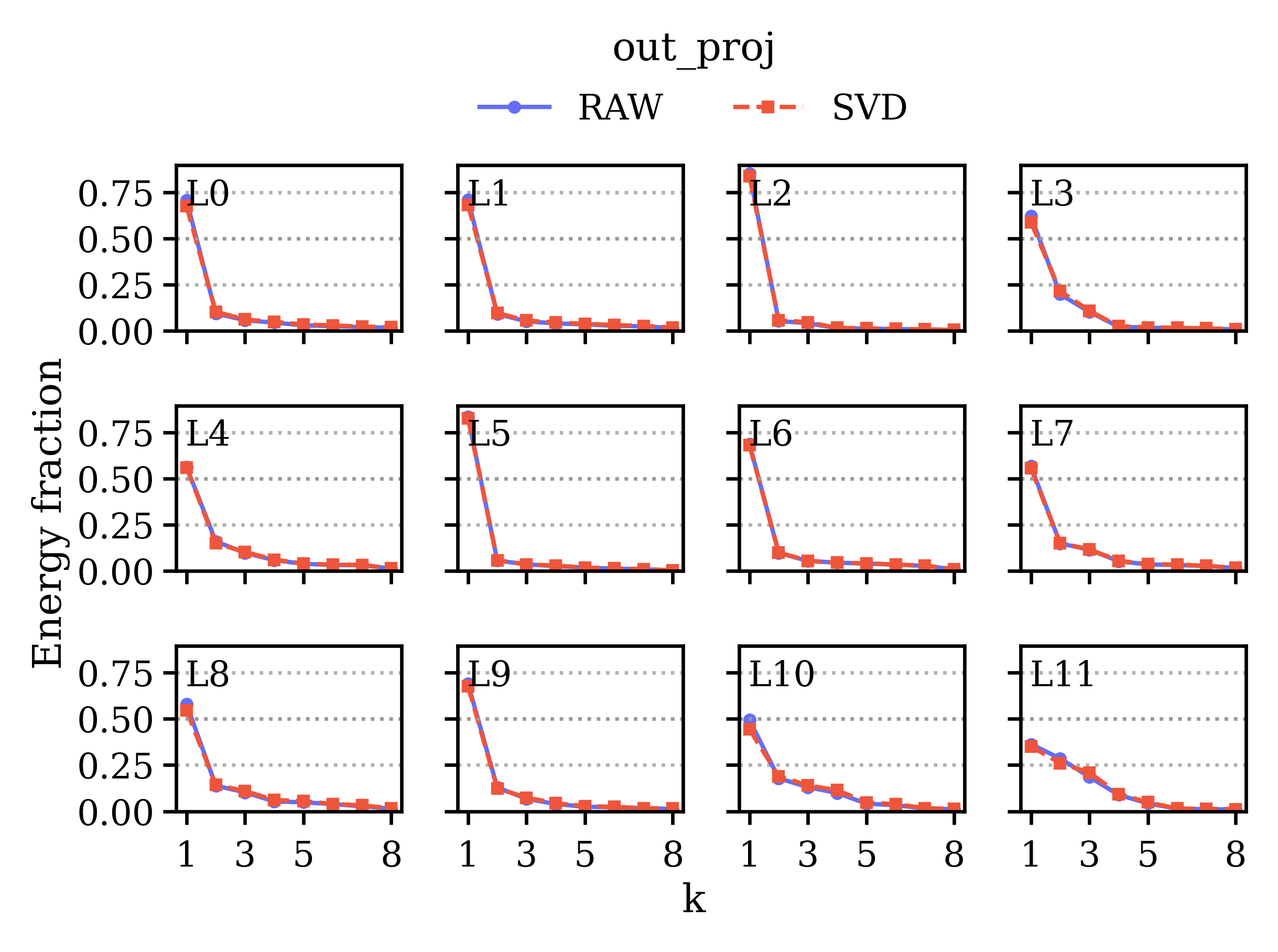}
		\caption{\texttt{out\_proj}}
	\end{subfigure}
	\vspace{-5pt}
	\caption{\textbf{Layer-wise Directional-Sensitivity Spectra (RAW vs.\ SVD).}
		For each module and layer, we form the task-direction sensitivity matrix with entries
		\(J_{i,k}=\langle\nabla f_i(W),\,S_k\rangle\) and plot the scree curve of its singular values as
		energy fractions \(\sigma_k^2/\!\sum_j \sigma_j^2\) versus component index \(k\).
		“RAW” stacks rank-1 LoRA factors directly, while “SVD” uses a shared orthonormal basis. The two spectra are overlaid in each panel.
		A larger leading mass indicates stronger concentration of sensitivity into a few modes, whereas flatter spectra indicate a more distributed use of LoRA directions.}
	\label{fig:grid-scree-raw-vs-svd}
	\vspace{-5pt}
\end{figure*}

\subsection{Per-Task and Joint-Task Evaluation}
\label{append:joint_settings}
In addition to the standard per-task benchmark, we also adopt the “joint-task” evaluation introduced by \citet{stoica2025knots}. Unlike the per-task regime, where each dataset is treated independently, the joint-task aggregates the inputs and labels from all eight vision benchmarks into a single evaluation pool. After combining all label sets, duplicate classes are removed (for example, MNIST~\cite{lecun1998mnist} and SVHN~\cite{yuval2011reading} share the same digit categories), resulting in 748 unique labels across the combined benchmark. This setup is particularly demanding because models must not only generalize within a task, but also discriminate among labels that appear across different datasets. In some cases, labels are semantically close or hierarchical, such as “islet” in SUN397~\cite{xiao2010sun} and “island” in RESISC45~\cite{cheng2017remote}, which increases the difficulty of classification. To account for such ambiguity, evaluation is reported using Hits@$k$ metrics, where Hits@1 corresponds to top-1 accuracy and higher $k$ values allow partial credit when the correct label appears among the top-$k$ predictions. Joint-task protocol serves as an important benchmark for assessing the generality of merged models, as it directly tests whether a single model can operate across the diverse label space of multiple datasets.

\section{Additional Analysis: Subspace Coverage}
\label{append:subspace_coverage}

This appendix presents the exact formulations used in the main paper’s subspace-coverage analysis.
We form all stacks by vectorizing matrices and placing one vector per row.
Let \(\bm W_0\in\mathbb{R}^{d\times m}\) be the pre-trained weight.
For task \(i\in\{1,\dots,N\}\) with LoRA rank \(r_i\), write \(\Delta \bm W_i=\sum_{j=1}^{r_i} \vb_{ij}\va_{ij}^{\top}\) with \(\vb_{ij}\in\mathbb{R}^{d}\) and \(\va_{ij}\in\mathbb{R}^{m}\).
Let \(\operatorname{vec}(\cdot):\mathbb{R}^{d\times m}\!\to\!\mathbb{R}^{dm}\) denote matrix vectorization.
For any matrix \(\bm X\), let \(\{\sigma_k\}_{k=1}^{R_{\bm X}}\) be its nonzero singular values with \(R_{\bm X}=\mathrm{rank}(\bm X)\) and define
\[
	p_k=\frac{\sigma_k^2}{\sum_{j=1}^{R_{\bm X}}\sigma_j^2},
	\qquad
	\mathrm{erank}(\bm X)=\exp\!\Big(-\sum_{k=1}^{R_{\bm X}} p_k \log p_k\Big).
\]

The following entries are used in our analysis. For each entity, we define the stacked matrix and report its subspace coverage via effective rank.
\begin{itemize}[leftmargin=*, itemsep=0.25em]
	\item \textbf{Per-task sum} — compute coverage per task and then aggregate across tasks:
	      \[
		      \bm X_i=
		      \begin{bmatrix}
			      \operatorname{vec}\!\big(\vb_{i1}\va_{i1}^{\top}\big) \\
			      \vdots                                                \\
			      \operatorname{vec}\!\big(\vb_{ir_i}\va_{ir_i}^{\top}\big)
		      \end{bmatrix}
		      \in\mathbb{R}^{r_i\times dm},
	      \]
	      \[
		      \mathrm{PerTaskSum}=\sum_{i=1}^{N}\mathrm{erank}(\bm X_i).
	      \]

	\item \textbf{\(\Delta \bm W\) stack (LoRA-agnostic)} — coverage of task updates when the LoRA rank-1 structure is ignored. We stack the task updates and use $\mathrm{erank}(\bm X_{\text{agnostic}})$ with
	      \[
		      \bm X_{\text{agnostic}}=
		      \begin{bmatrix}
			      \operatorname{vec}\!\big(\Delta \bm W_1\big) \\
			      \vdots                                       \\
			      \operatorname{vec}\!\big(\Delta \bm W_N\big)
		      \end{bmatrix}
		      \in\mathbb{R}^{N\times dm},
	      \]

	\item \textbf{Rank-1 stack (LoRA-aware)} — coverage when rank-1 directions from all adapters are retained. We stack all rank-1 directions and use $\mathrm{erank}(\bm X_{\text{aware}})$ with
	      \[
		      \bm X_{\text{aware}}=
		      \begin{bmatrix}
			      \operatorname{vec}\!\big(\vb_{11}\va_{11}^{\top}\big) \\
			      \vdots                                                \\
			      \operatorname{vec}\!\big(\vb_{N r_N}\va_{N r_N}^{\top}\big)
		      \end{bmatrix}
		      \in\mathbb{R}^{(\sum_i r_i)\times dm},
	      \]
\end{itemize}

\paragraph{Other modules.}
Supplementary results in \cref{fig:spread_raw} show the same qualitative pattern across additional modules (e.g., Attention and MLP layers).
The ordering \(\textit{Per-task sum} \ge \textit{Rank-1 stack (LoRA-aware)} \ge \Delta \bm W\ \textit{stack (LoRA-agnostic)}\) consistently holds, and the LoRA-aware stack retains roughly \(60\%\!\sim\!70\%\) of the per-task sum.
The gap between LoRA-aware and LoRA-agnostic reflects subspace collapse under interpolation-based merging.

\section{Additional Analysis: Anisotropy}
\label{append:anisotropy}

\paragraph{Within-Preference Anisotropy.}
We measure how unevenly a single preference $\bm\rho\!\in\!\Delta_{N-1}$ concentrates loss sensitivity onto LoRA directions.
Let $g(\bm\rho;\bm W)=\sum_{i=1}^{N}\rho_i\,\nabla f_i(\bm W)$ be the scalarized gradient at parameters $\bm W$ obtained by
\emph{averaging} task LoRA updates and then scaling by $0.3$ (as Task Arithmetic~\cite{ilharco2022editing}).
Given a basis of LoRA directions $\{\bm S_k\}_{k=1}^{K}$ (either the RAW stack of rank-1 factors or the shared SVD-orthonormal basis), define the directional sensitivities with \emph{condition-number}.
Larger $\kappa$ indicates stronger \emph{within-preference} concentration of sensitivity onto a few directions (greater anisotropy), while smaller $\kappa$ indicates a more balanced use of the LoRA subspace. We report layer-wise $\kappa(\bm\rho;\bm W)$ for each module (\texttt{q\_proj}, \texttt{k\_proj}, \texttt{v\_proj}, \texttt{out\_proj}) under both RAW and SVD bases in \cref{fig:kappa-lines-raw} and \cref{fig:kappa-lines-svd}. Here, $\bm\rho_1$ uniformly weights all tasks, whereas $\bm\rho_2$ assigns all weight to a single task (one-hot).

\Cref{fig:grid-scree-raw-vs-svd} compares, for each module and layer, the singular-value energy distribution of the task-direction sensitivity matrix under the RAW (rank-1 factor stack) and SVD (shared orthonormal) bases. Each curve shows the energy fraction \(\sigma_k^2/\sum_j\sigma_j^2\) versus component index \(k\). A larger leading mass indicates stronger anisotropy with sensitivity concentrated in a few directions, while flatter curves indicate a more balanced use of the LoRA subspace.

\paragraph{Directional-Sensitivity Misalignment in the SVD Basis.}
Following Sec.~\textcolor{cvprblue}{3.3},
we conduct the same analysis with directional sensitivities projected onto the rank-1 LoRA directions obtained by SVD orthogonalization.
We then compute a misalignment index $\xi(\bm\rho_1,\bm\rho_2)$ between the uniform and one-hot preferences across the LoRA layers.
As shown in \cref{fig:xi-heatmap-raw}, the resulting heatmap closely mirrors the raw-basis result in
Fig.~\textcolor{cvprblue}{2}: substantial layer- and module-wise misalignment persists, indicating preference-dependent sensitive directions that remain even with SVD basis.

\begin{figure}[ht]
	\centering
	\vspace{-5pt}
	\includegraphics[width=0.90\linewidth]{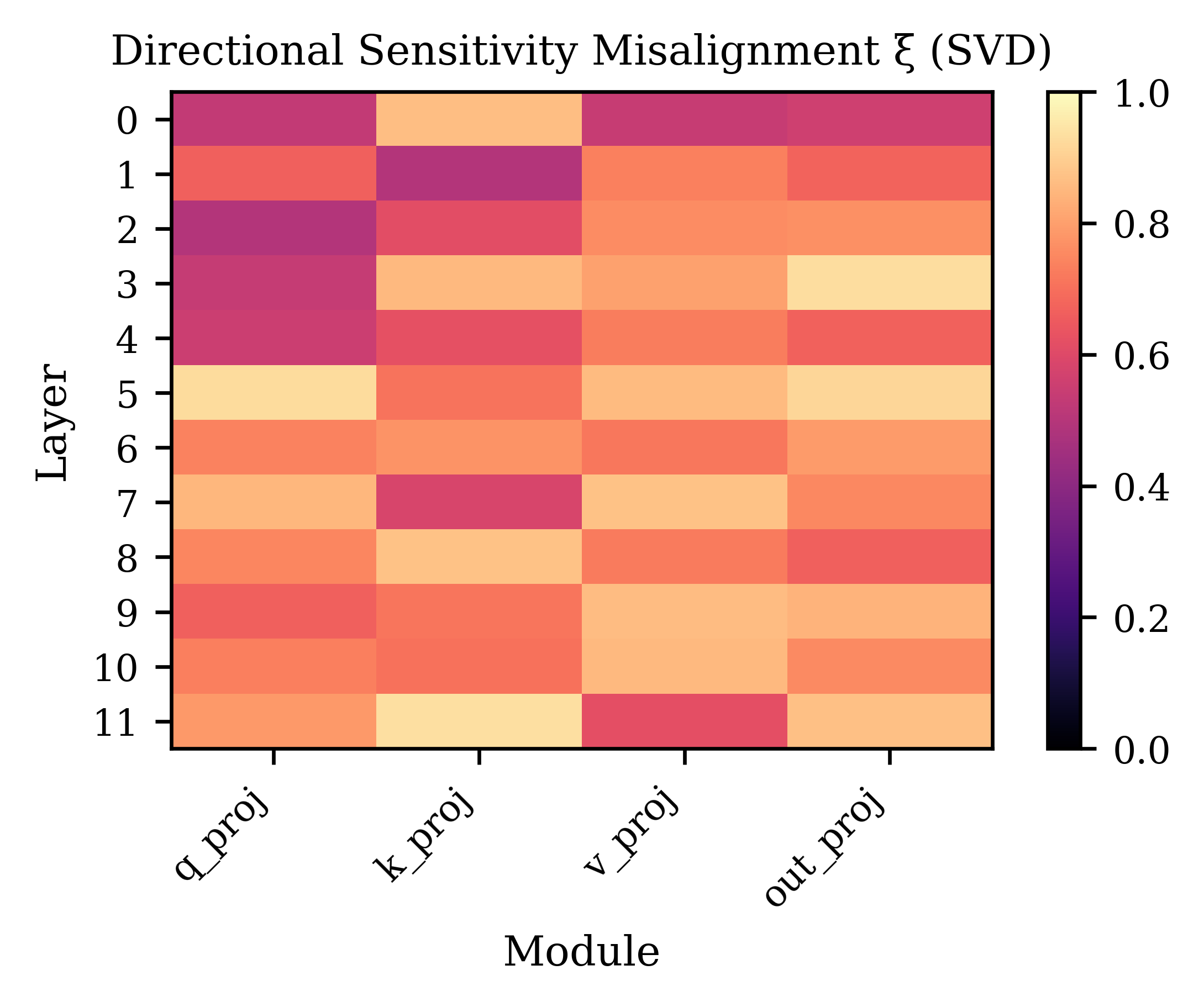}
	\vspace{-10pt}
	\caption{\textbf{Directional-sensitivity misalignment \(\boldsymbol{\xi(\rho_1,\rho_2)}\)} in the SVD basis.}
	\vspace{-5pt}
	\label{fig:xi-heatmap-raw}
\end{figure}


\section{Additional Results}
\label{append:additional_exp}

\subsection{Extended Analysis of Baseline Comparisons}
In this section, we provide a more detailed analysis of the baseline comparisons presented in the main paper. We elaborate on the per-task performance metrics across both vision and natural language inference (NLI) domains, offering an extended examination of the behavior of recent vanilla and LoRA-aware merging methods.

\begin{table*}[t]
	\caption{
		To further address concerns regarding latency coming from gradient-based optimization, we experimented our method with LLaVA~\cite{liu2023improvedllava, liu2023llava} on merging six VQA experts.
	}
	\vspace{-5pt}
	\label{tab:six_vqa_benchmarks}
	\centering
	\renewcommand{\arraystretch}{0.85}
	\scriptsize
	\setlength{\tabcolsep}{10pt}
	\resizebox{0.85\linewidth}{!}{
		\begin{tabular}{lcccccc>{\columncolor[gray]{0.9}}c}
			\toprule
			\multirow{2}{*}{Method}              & \multicolumn{7}{c}{Dataset}                                                                                                                            \\
			\cmidrule(lr){2-8}
			                                     & ScienceQA                                                                                       & VizWiz & IconQA & ImageNet & ChartQA & DocVQA & Avg  \\
			\midrule
			                                     & \multicolumn{7}{c}{\textit{Per-task absolute accuracies (\%)}}                                                                                         \\ \cmidrule(lr){2-8}
			Finetuned                            & 86.6                                                                                            & 69.7   & 65.2   & 96.0     & 39.3    & 41.2   & 66.4 \\
			\midrule
			                                     & \multicolumn{7}{c}{\textit{Per-task accuracies of merged models, normalized to finetuned (\%)}}                                                        \\ \cmidrule(lr){2-8}
			\addlinespace[2pt]
			TA~\cite{ilharco2022editing}         & 81.6                                                                                            & 71.3   & 57.2   & 43.6     & 66.6    & 79.5   & 66.6 \\
			TIES~\cite{yadav2023ties}            & 90.9                                                                                            & 69.8   & 66.5   & 54.8     & 79.1    & 81.9   & 73.8 \\
			KnOTS-TIES~\cite{stoica2025knots}    & 87.4                                                                                            & 77.0   & 60.5   & 44.8     & 76.5    & 85.6   & 72.0 \\ \hline
			\addlinespace[2pt]
			AdaMerging~\cite{yang2023adamerging} & 87.9                                                                                            & 81.0   & 66.3   & 47.0     & 74.5    & 76.1   & 72.1 \\
			\method-Variant A                    & 88.3                                                                                            & 78.7   & 69.2   & 60.5     & 78.4    & 80.4   & 75.9 \\
			\method-Variant B                    & 89.1                                                                                            & 79.2   & 69.5   & 59.6     & 78.3    & 80.0   & 76.0 \\
			\bottomrule
		\end{tabular}
	}
\end{table*}

\begin{table*}[t]
	\centering
	\caption{
		Time and memory cost analysis.
		We group baselines into gradient-free and gradient-based methods and compare their time and memory cost on LLaMA and LLaVA. For gradient-free methods, we exclude the time spent on grid search for tuning the scaling factor on the validation set. We report only \emph{time per iteration} for gradient-free methods and all times measured in minutes (min). TIES, KnOTS-TIES, and \method-Variant B require additional computation to resolve parameter conflicts or perform SVD, so we report both their total time and, in parentheses, the separate contributions from preprocessing and either one validation run (for gradient-free methods) or one optimization run (for gradient-based methods). Memory usage is reported as VRAM in MiB.
	}
	\label{tab:cost}
	\resizebox{0.60\linewidth}{!}{
		\begin{tabular}{lcccc}
			\toprule[1.2pt]
			                                       & \multicolumn{2}{c}{LLaMA-3-8B} & \multicolumn{2}{c}{LLaVA-1.5-7B}                                 \\
			\cmidrule(lr){2-3}\cmidrule(lr){4-5}
			Method                                 & Time (min)                     & VRAM (MiB)                       & Time (min)       & VRAM (MiB) \\
			\midrule
			TA~\cite{ilharco2022editing}           & 3.6                            & 16{,}150                         & 10.6             & 14{,}814   \\
			TIES~\cite{yadav2023ties}              & 8.1 (4.5+3.6)                  & 16{,}150                         & 18.3 (7.7+10.6)  & 14{,}814   \\
			KnOTS-TIES~\cite{stoica2025knots}      & 10.8 (7.2+3.6)                 & 16{,}150                         & 24.3 (13.7+10.6) & 14{,}814   \\ 
			\midrule
			AdaMerging ~\cite{yang2023adamerging}  & 8.0                            & 19{,}918                         & 13.9             & 19{,}788   \\
			\rowcolor[gray]{0.9} \method-Variant A & 8.2                            & 19{,}922                         & 14.1             & 19{,}788   \\
			\rowcolor[gray]{0.9} \method-Variant B & 15.2 (6.6+8.6)                 & 20{,}040                         & 27.7 (14.6+13.1) & 20{,}662   \\ 
			\bottomrule[1.2pt]
		\end{tabular}
	}
\end{table*}

\vspace{2pt}
\noindent\textbf{Per-Task Evaluation across Vision Tasks.} For the eight image-classification benchmarks, \cref{tab:8vis_knots} reports both the absolute per-task accuracies of the fine-tuned LoRA checkpoints and the normalized accuracies of the merged models. Strong vanilla merging methods such as Iso-CTS~\cite{marczak2025no} reach up to 73.5\% of the fine-tuned performance on average, but still lag behind the fine-tuned baselines and show large variance across datasets. Among LoRA-aware approaches, KnOTS-TIES~\cite{stoica2025knots} provides the strongest baseline, whereas RobustMerge~\cite{zeng2025parameter} yields noticeably weaker average performance and smaller gains, indicating that its robustness benefits are limited in this setting. On top of these baselines, \method-Variant A and \method-Variant B further improve the average normalized accuracy to 74.0\% and 76.3\%, respectively, with consistent gains, showing that our method better preserves per-task performance when merging multiple checkpoints than more recent merging baselines.

\vspace{2pt}
\noindent\textbf{Per-Task Evaluation on 6 NLI Tasks with LLMs.} For the six NLI benchmarks, \cref{tab:six_nli_benchmarks} shows a different picture from CLIP-based vision settings. Although Iso-C~\cite{marczak2025no} performs strongly on vision benchmarks, in the NLI setting it produces highly unbalanced results: it attains more than 100\% of the fine-tuned accuracy on SCITAIL, but suffers large drops on the other tasks, leading to the lowest average among recent vanilla baselines. EMR-Merging~\cite{huang2024emr} and FR-Merging~\cite{zheng2025free}, which are effective in conventional CLIP-based multi-task setups, also fail to transfer cleanly to LLMs and do not achieve competitive averages in this regime. LoRA-aware methods generally improve over vanilla merging, but RobustMerge offers only modest gains over simpler LoRA-aware baselines, with limited improvement in average normalized accuracy. In contrast, our \method-Variant A and \method-Variant B reach 79.7\% and 80.3\% average normalized accuracy, respectively, outperforming all vanilla and LoRA-aware baselines. These results indicate that methods tailored to CLIP-based models do not automatically carry over to LLMs, while our approach scales robustly from vision to language tasks.

\subsection{Evaluation on 6 VLM Tasks}
We further evaluate the scalability of our method on vision–language models (VLMs). Specifically, we use LLaVA~\cite{liu2023llava, liu2023improvedllava} on six benchmarks: ScienceQA~\cite{lu2022learn}, VizWiz~\cite{gurari2018vizwiz}, IconQA~\cite{lu2021iconqa}, ImageNet~\cite{ILSVRC15}, ChartQA~\cite{masry2022chartqa}, and DocVQA~\cite{mathew2021docvqa}. We fine-tune each LoRA adapter with rank 16 applied to the query, key, value and output projection modules of the attention layers, and train them for 500 iterations using AdamW~\cite{loshchilov2018decoupled} with a learning rate of \(3\times 10^{-5}\) and batch size \(1\). We compare against vanilla model merging baselines, Task Arithmetic (TA)~\cite{ilharco2022editing} and TIES~\cite{yadav2023ties}, as well as the LoRA-aware methods KnOTS-TIES~\cite{stoica2025knots} and AdaMerging~\cite{yang2023adamerging}. On LLaVA, AdaMerging underperforms LoRA-targeted merging methods such as KnOTS-TIES, indicating that its gains do not transfer well to the VLM regime. In contrast, both \method-Variant A and \method-Variant B achieve clear and stable improvements over all baselines across the six benchmarks, confirming that our approach also works reliably in multimodal VLM scenarios.

\subsection{Ablations and Time/Memory Analysis}
\Cref{tab:cost} summarizes the empirical efficiency of each method in terms of time cost and peak VRAM. We apply rank-16 LoRA and merge six LLaMA and six LLaVA models on a single NVIDIA GeForce RTX 3090. The upper panel reports gradient-free methods, and the lower panel reports gradient-based methods. A key observation is that, even when merging six fine-tuned LoRA checkpoints at foundation-model scale, the peak memory usage only increases by about 4–5 GB, so gradient-based merging does not incur prohibitive VRAM overhead.

For time, we report the cost of a single validation evaluation for gradient-free methods and the total optimization time for gradient-based methods. At first glance, gradient-free and gradient-based approaches have similar per-run time, but gradient-free methods usually require multiple validation runs to tune the scaling factor $\lambda$, which substantially increases their effective cost.
For example, tuning the Task Arithmetic scaling factor in $\mW_{\text{merge}}=\mW_{0}+\lambda\sum_{i=1}^{N}\Delta \mW_{i}$ with a grid from $0.3$ to $0.7$ in steps of $0.1$ entails five trials. If each evaluation takes $10.6$ minutes, the total time rises to about $53$ minutes.

Gradient-based methods do not require such grid search, so their total time can be more favorable.
\method has runtime comparable to AdaMerging but achieves higher accuracy.
Methods such as TIES, KnOTS-TIES, and \method-Variant B incur additional preprocessing to compute an SVD over all task vectors.
For these methods, we therefore report both their total time and, in parentheses, a breakdown into preprocessing time and either one validation run for the gradient-free setting or one optimization run for the gradient-based setting.
For six LLaMA-3-8B models, for instance, \method-Variant B spends $6.6$ of its $15.2$ minutes on SVD preprocessing and the remaining $8.6$ minutes on optimization, which is comparable to Variant~A and AdaMerging.
Even for large models such as LLaMA-3-8B and LLaVA-1.5-7B, a merge completes within 30 minutes, underscoring the practical applicability of \method.

To complement the analysis on foundation models, we also evaluate computational efficiency on relatively lightweight vision targets. \Cref{tab:efficiency} reports the time and memory costs using the ViT-B/32 backbone. Consistent with the trends observed in larger models, the runtimes for learning-free methods (e.g., KnOTS-TIES, LoRA-LEGO) represent only a single validation pass. Because practical deployment requires grid searches to find optimal scaling factors, their total operational time typically exceeds the single-run optimization time of \method. Regarding memory consumption, while a distinct gap in VRAM usage exists between learning-free and gradient-based methods on the smaller ViT-B/32 architecture, this discrepancy diminishes at the foundation-model scale (as seen with LLaMA-3). Since frozen base weights dominate the overall memory footprint of large models, the relative overhead introduced by gradient computation becomes less pronounced.

\begin{table}[h]
	\centering
	\caption{Computational Cost (ViT-B/32). Times denote a single validation pass (learning-free) vs. optimization (learning-based).}
	\vspace{-5pt}
	\label{tab:efficiency}
	\renewcommand{\arraystretch}{1.0}
	\resizebox{0.70\linewidth}{!}{
		\setlength{\tabcolsep}{3pt}
		\begin{tabular}{l|cc}
			\toprule
			\textbf{Method}  & \textbf{Time (min)} & \textbf{VRAM (MiB)} \\
			\midrule
			KnOTS-TIES~\cite{stoica2025knots}       & 2.8                 & 2262                \\
			LoRA-LEGO~\cite{zhao2025loralego}        & 4.0                 & 1416                \\
			\midrule
			AdaMerging~\cite{yang2023adamerging}       & 4.1                 & 4344                \\
			TARA (Variant A) & 4.1                 & 4344                \\
			TARA (Variant B) & 5.1                 & 5922                \\
			\bottomrule
		\end{tabular}}
\end{table}

\subsection{Additional Ablation Studies}

\paragraph{Ablation on Hyperparameter $\alpha$.}
\Cref{fig:stch_alpha_ablation} reports the effect of the STCH scaling hyperparameter $\alpha$ on normalized accuracy for both Variant~A and Variant~B.
Performance is slightly degraded at very small scaling ($\alpha = 0.1$), but quickly stabilizes once $\alpha$ reaches $0.5$.
For $\alpha \ge 0.5$, both variants show only minor fluctuations and maintain consistently strong performance across all benchmarks, indicating that our method is not sensitive to the exact choice of $\alpha$ in this range. Unless otherwise noted, we therefore fix $\alpha = 1$ in all main experiments.

\begin{figure}[ht]
	\centering
	\includegraphics[width=0.90\linewidth]{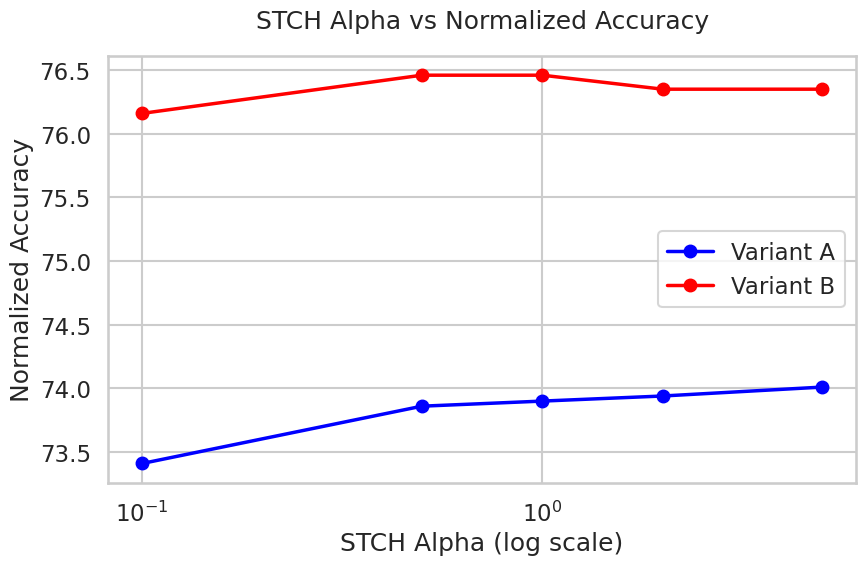}
	\caption{
		Ablation study of the hyperparameter $\alpha$.
		Normalized accuracy is reported for two model variants across
		$\alpha \in \{0.1, 0.5, 1.0, 2.0, 5.0\}$.
		Both Variant~A and Variant~B exhibit stable performance over a wide range of $\alpha$ values,
		indicating that the method is not sensitive to this hyperparameter.
		Unless otherwise stated, we use $\alpha = 1$ for all main experiments.
	}
	\label{fig:stch_alpha_ablation}
\end{figure}

\paragraph{Two-Task Pareto Fronts across Pairs.}
Figure~\ref{fig:pareto_front_full} plots accuracy trade-offs for four task pairs on CLIP ViT-B/32. Sweeping the preference with \method yields smooth Pareto fronts that are consistently dominant over AdaMerging, that is, \method attains higher accuracy for the same preference across most of the preference range, especially in the balanced region where both tasks must be retained. Baseline mergers appear as isolated points and frequently lie below or off our front. We attribute this dominance to \method’s LoRA-aware design: it preserves subspace coverage, retaining useful low-rank directions across tasks, while reweighting anisotropic directions to mitigate interference, keeping the merged model competitive near each task-optimal end without collapsing in the middle.

\begin{figure*}[t]
	\centering
	\includegraphics[width=\linewidth]{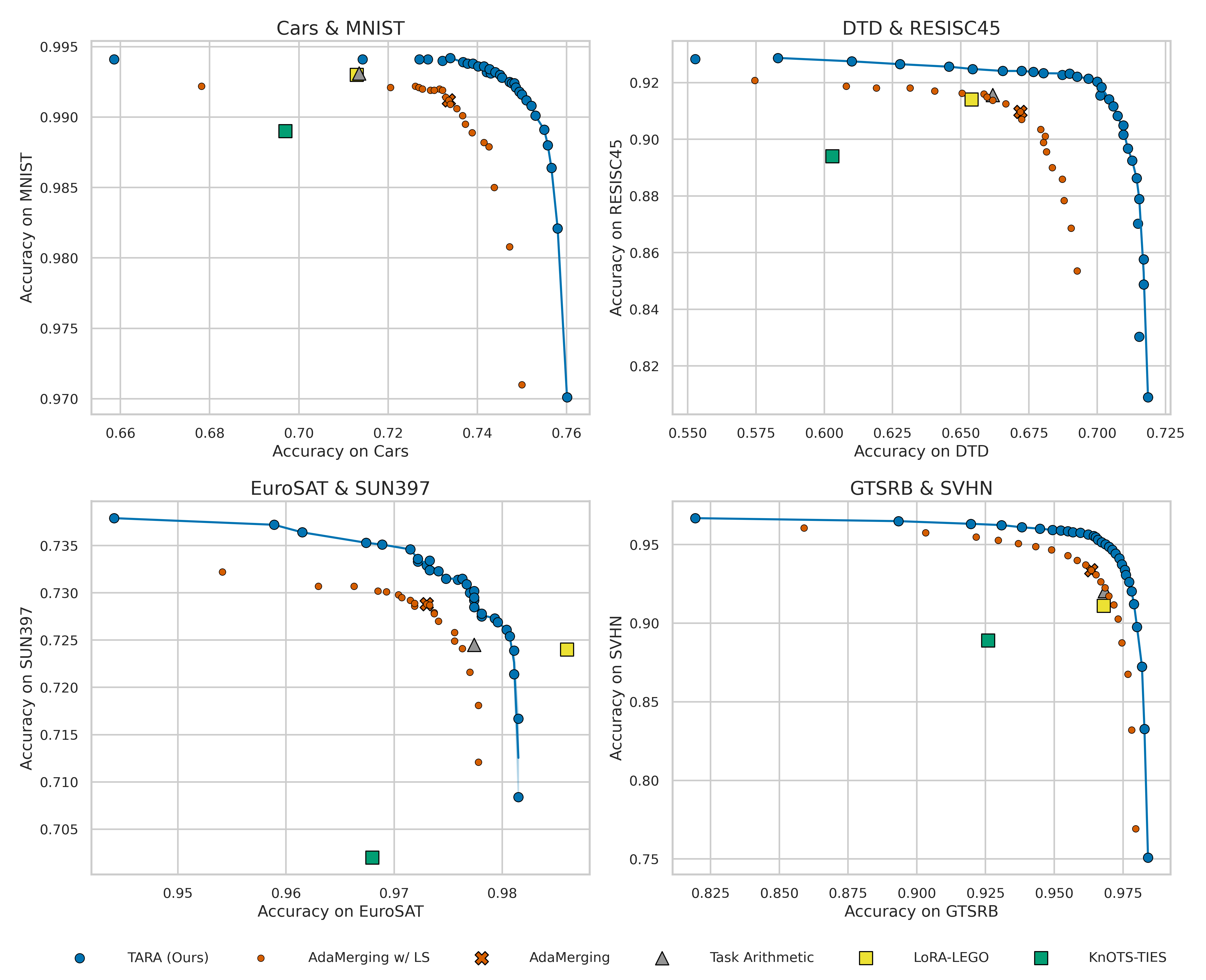}
	\caption{\textbf{Two-Task Trade-Offs on CLIP ViT-B/32.}
		Pairs shown from top-left to bottom-right: Cars-MNIST, DTD-RESISC45, EuroSAT-SUN397, GTSRB-SVHN.}
	\label{fig:pareto_front_full}
    \vspace{-5pt}
\end{figure*}

\paragraph{Robustness under Preference Perturbations.}
Figure~\ref{fig:preference_sensitivity_full} tests robustness with $8$ tasks by fixing the two focal-task weights to $0.125$ each (total $0.25$) and randomly choosing the other six weights so they are nonnegative and sum to $0.75$. Each scatter cloud shows $30$ such samplings and the ellipses summarize their empirical covariance, with the mean marked by “$\times$”. Across all pairs, \method produces tighter ellipses, indicating lower variance under preference perturbations, while AdaMerging shows larger spread. We also applied both a linear weighted-sum objective and a smooth Tchebycheff objective to \method and to AdaMerging; the choice of objective yielded nearly identical ellipses and means.

\begin{figure*}[t]
	\centering
	\includegraphics[width=\linewidth]{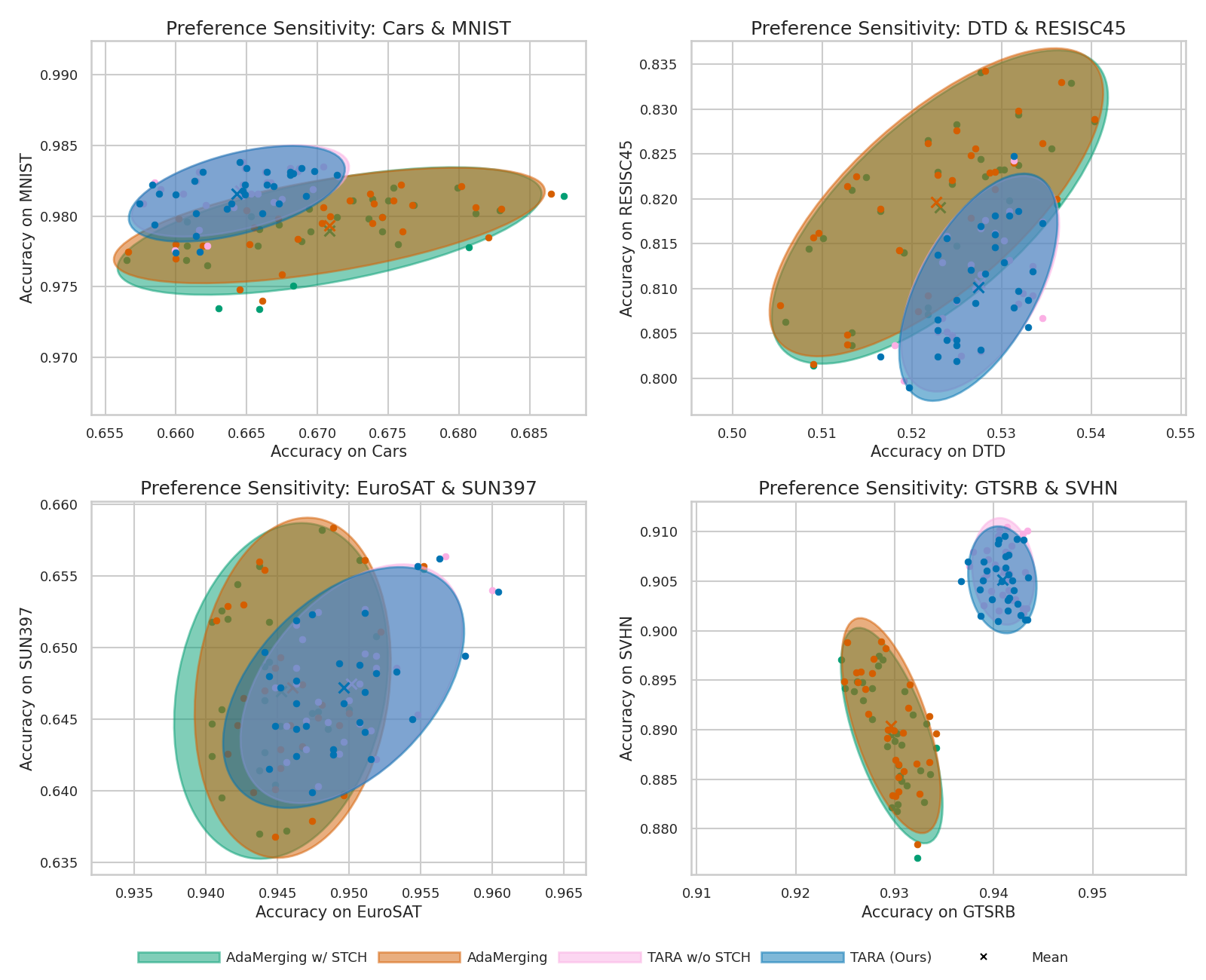}
	\caption{\textbf{Preference Sensitivity under Random Global Preferences.}
		Pairs shown from top-left to bottom-right: Cars-MNIST, DTD-RESISC45, EuroSAT-SUN397, GTSRB-SVHN.}
	\label{fig:preference_sensitivity_full}
\end{figure*}

\newcolumntype{x}[1]{>{\centering\arraybackslash}p{#1pt}}
\newcolumntype{y}[1]{>{\raggedright\arraybackslash}p{#1pt}}
\newcolumntype{z}[1]{>{\raggedleft\arraybackslash}p{#1pt}}

\begin{table*}
	\centering
	\caption{
		\textbf{Eight CLIP/ViT-B-32 Models Joint-Task Results. 
		}}
	\begin{adjustbox}{width=0.99\textwidth}
		\begin{tabular}{y{85}y{40}x{35}x{35}x{40}x{40}x{40}x{45}x{35}x{35}>{\columncolor[gray]{.9}}x{40}}
			\toprule[1.2pt]
			\multirow{2}{*}{{Method}} & \multirow{2}{*}{{Metric}} & \multicolumn{8}{c}{{Joint-Task Performances (\%)}}                                                                                  \\
			\cmidrule{3-11}
			                          &                           & {Cars}                                             & {DTD} & {EuroSAT} & {GTSRB} & {MNIST} & {RESISC45} & {SUN397} & {SVHN} & {Avg} \\
			\midrule
			\multirow{3}{*}{TA~\cite{ilharco2022editing} }
			                          & \texttt{Hits@1}           & 60.7                                               & 40.7  & 15.3      & 38.8    & 31.8    & 59.7       & 61.9     & 29.2   & 43.5  \\
			                          & \texttt{Hits@3}           & 84.9                                               & 63.7  & 23.0      & 66.1    & 48.4    & 83.6       & 83.9     & 50.1   & 65.2  \\
			                          & \texttt{Hits@5}           & 92.0                                               & 74.0  & 31.0      & 77.9    & 55.7    & 90.2       & 89.9     & 61.6   & 74.0  \\
			\midrule
			\multirow{3}{*}{TIES~\cite{yadav2023ties} }
			                          & \texttt{Hits@1}           & 60.4                                               & 39.7  & 13.0      & 35.0    & 33.4    & 58.6       & 61.3     & 32.9   & 43.6  \\
			                          & \texttt{Hits@3}           & 84.9                                               & 61.9  & 21.9      & 63.3    & 48.2    & 82.8       & 83.7     & 53.6   & 65.3  \\
			                          & \texttt{Hits@5}           & 92.1                                               & 72.4  & 29.0      & 75.3    & 54.0    & 89.4       & 89.9     & 64.1   & 73.9  \\
			\midrule
			\multirow{3}{*}{DARE-TIES~\cite{yu2024language} }
			                          & \texttt{Hits@1}           & 60.8                                               & 39.3  & 12.8      & 33.7    & 34.3    & 57.5       & 60.4     & 35.5   & 44.0  \\
			                          & \texttt{Hits@3}           & 85.3                                               & 61.4  & 18.1      & 63.6    & 50.2    & 82.3       & 82.6     & 57.8   & 66.4  \\
			                          & \texttt{Hits@5}           & 92.6                                               & 73.0  & 20.9      & 74.6    & 55.7    & 89.0       & 89.1     & 69.5   & 75.1  \\
			\midrule
			\multirow{3}{*}{\shortstack{AdaMerging~\cite{yang2023adamerging}                                                                                                                            }}
			                          & \texttt{Hits@1}           & 58.7                                               & 37.9  & 18.1      & 36.7    & 51.6    & 57.4       & 63.1     & 61.6   & 48.1  \\
			                          & \texttt{Hits@3}           & 83.1                                               & 59.5  & 56.9      & 63.6    & 71.0    & 81.3       & 84.8     & 85.1   & 73.2  \\
			                          & \texttt{Hits@5}           & 91.0                                               & 70.5  & 72.7      & 75.9    & 80.7    & 89.1       & 90.8     & 92.8   & 83.0  \\
			\midrule
			\multirow{3}{*}{KnOTS-TIES~\cite{stoica2025knots}}
			                          & \texttt{Hits@1}           & 61.7                                               & 40.5  & 16.2      & 44.2    & 39.1    & 59.0       & 60.6     & 36.7   & 46.8  \\
			                          & \texttt{Hits@3}           & 85.8                                               & 63.8  & 22.3      & 69.0    & 52.6    & 83.9       & 82.8     & 58.4   & 68.1  \\
			                          & \texttt{Hits@5}           & 92.6                                               & 74.5  & 31.1      & 79.8    & 58.4    & 90.4       & 89.1     & 68.5   & 76.3  \\
			\midrule
			\multirow{3}{*}{\shortstack{KnOTS                                                                                                                                                           \\-DARE-TIES~\cite{stoica2025knots}}}
			                          & \texttt{Hits@1}           & 60.4                                               & 40.3  & 15.9      & 41.9    & 34.6    & 58.4       & 60.4     & 34.8   & 45.2  \\
			                          & \texttt{Hits@3}           & 85.0                                               & 63.5  & 21.1      & 68.4    & 50.0    & 83.8       & 82.4     & 56.5   & 66.9  \\
			                          & \texttt{Hits@5}           & 92.2                                               & 74.5  & 26.6      & 78.9    & 55.9    & 90.4       & 88.9     & 67.4   & 75.3  \\
			\midrule
			\multirow{3}{*}{LoRA-LEGO~\cite{zhao2025loralego} }
			                          & \texttt{Hits@1}           & 60.3                                               & 39.9  & 15.9      & 41.6    & 29.8    & 57.6       & 60.1     & 30.2   & 43.1  \\
			                          & \texttt{Hits@3}           & 84.5                                               & 61.8  & 19.5      & 68.8    & 47.8    & 82.8       & 82.3     & 51.2   & 65.0  \\
			                          & \texttt{Hits@5}           & 91.8                                               & 74.0  & 21.1      & 78.3    & 56.9    & 89.7       & 88.8     & 63.5   & 73.9  \\
			\midrule
			\multirow{3}{*}{\shortstack{\method-Variant A                                                                                                                                               }}
			                          & \texttt{Hits@1}           & 60.8                                               & 39.5  & 15.2      & 39.9    & 65.2    & 59.0       & 63.2     & 65.7   & 51.1  \\
			                          & \texttt{Hits@3}           & 84.8                                               & 60.8  & 58.0      & 67.0    & 80.7    & 82.6       & 85.2     & 87.8   & 75.9  \\
			                          & \texttt{Hits@5}           & 92.2                                               & 71.8  & 74.9      & 78.2    & 86.4    & 89.9       & 91.0     & 95.0   & 84.9  \\
			\midrule
			\multirow{3}{*}{\shortstack{\method-Variant B                                                                                                                                               }}
			                          & \texttt{Hits@1}           & 63.2                                               & 40.1  & 12.1      & 41.9    & 64.6    & 62.1       & 63.6     & 46.4   & 49.3  \\
			                          & \texttt{Hits@3}           & 87.0                                               & 61.3  & 56.3      & 67.0    & 83.5    & 84.7       & 85.5     & 74.2   & 74.9  \\
			                          & \texttt{Hits@5}           & 93.4                                               & 72.5  & 76.9      & 77.2    & 88.9    & 90.9       & 91.3     & 89.9   & 85.1  \\
			\bottomrule[1.2pt]
		\end{tabular}
	\end{adjustbox}
	\label{tab:8vision_joint}
\end{table*}

\paragraph{Merging Models with LoRA Rank 16 on our Checkpoints.}

We complement the main-paper results (which use the KnOTS-released checkpoints~\cite{stoica2025knots}) with additional experiments on our own LoRA checkpoints to gauge robustness across sources.
Specifically, we fine-tune CLIP ViT-B/32 adapters with a learning rate different from that used by KnOTS, and then evaluate the same merging protocols. \Cref{tab:8vis_ours} reports per-task absolute accuracies for the fine-tuned adapters (upper panel) and, for each merger, taskwise accuracies normalized to the corresponding fine-tuned baseline (lower panel). Table~\ref{tab:general_ours} reports generalization to unseen tasks on our checkpoints. The results are consistent with the main paper, indicating that the advantages of preserving subspace coverage and addressing anisotropy hold across different fine-tuning learning rates. A comparison of checkpoints is provided in \cref{tab:pertask_eight_comp}.

\begin{table*}[ht]
	\vspace{-5pt}
	\caption{Using our checkpoints. Per‑task accuracy on eight image‑classification benchmarks.
		We merge eight ViT‑B/32 checkpoints, each finetuned with LoRA.
		The upper panel shows the per‑task absolute accuracy of the finetuned baselines; the lower panel reports accuracy of merged models, normalized by their corresponding finetuned baseline (\%).}
	\vspace{-5pt}
	\centering
	\renewcommand{\arraystretch}{0.85}
	\resizebox{0.85\linewidth}{!}{
		\begin{tabular}{lcccccccc>{\columncolor[gray]{0.9}}c}
			\toprule
			\multirow{2}{*}{Method}                & \multicolumn{9}{c}{Dataset}                                                                                                                                                                                                     \\
			\cmidrule(lr){2-10}
			                                       & Cars                                                                                            & DTD           & EuroSAT       & GTSRB         & MNIST         & RESISC45      & SUN397        & SVHN          & Avg           \\
			\midrule
			                                       & \multicolumn{9}{c}{\textit{Per‑task absolute accuracies (\%)}}                                                                                                                                                                  \\ \cmidrule(lr){2-10}
			Finetuned                              & 76.2                                                                                            & 72.5          & 98.6          & 98.3          & 99.2          & 93.1          & 73.7          & 96.7          & 88.5          \\
			\midrule
			                                       & \multicolumn{9}{c}{\textit{Per-task accuracies of merged models, normalized to finetuned (\%)}}                                                                                                                                 \\ \cmidrule(lr){2-10}
			\addlinespace[2pt]
			\textit{\textbf{Vanilla Merging}}      & \multicolumn{9}{l}{}                                                                                                                                                                                                            \\
			TA~\cite{ilharco2022editing}           & 85.6                                                                                            & 70.8          & 81.9          & 89.1          & 97.9          & 81.9          & 89.1          & 91.9          & 86.0          \\ 
			TIES~\cite{yadav2023ties}              & 72.2                                                                                            & 61.2          & 64.7          & 66.6          & 90.0          & 70.0          & 86.1          & 73.7          & 73.1          \\
			DARE-TIES~\cite{yu2024language}        & 73.0                                                                                            & 62.7          & 56.1          & 63.6          & 85.7          & 69.6          & 86.2          & 71.5          & 71.0          \\
			AdaMerging~\cite{yang2023adamerging}   & \textbf{87.7}                                                                                   & 71.2          & 96.3          & 94.4          & 98.7          & 87.3          & 86.8          & 91.9          & 89.3          \\ \hline
			\addlinespace[2pt]
			\textit{\textbf{LoRA‑aware Merging}}   & \multicolumn{9}{l}{}                                                                                                                                                                                                            \\
			SVD~\cite{tang2025lora}                & 76.0                                                                                            & 60.9          & 74.8          & 90.4          & 98.1          & 74.2          & 85.0          & 93.6          & 81.6          \\
			Linear~\cite{peft}                     & 74.9                                                                                            & 67.3          & 68.4          & 67.6          & 91.7          & 72.9          & 86.7          & 75.3          & 75.6          \\
			KnOTS-TIES~\cite{stoica2025knots}      & 85.4                                                                                            & 69.3          & 77.8          & 76.6          & 93.3          & 80.1          & 89.4          & 82.0          & 81.7          \\
			KnOTS-DARE-TIES~\cite{stoica2025knots} & 84.1                                                                                            & 68.7          & 77.9          & 78.6          & 94.4          & 80.1          & \textbf{89.8} & 83.8          & 82.2          \\
			LoRA-LEGO~\cite{zhao2025loralego}      & 84.9                                                                                            & 71.4          & 82.7          & 91.1          & 98.5          & 81.2          & 87.4          & 93.4          & 86.3          \\ \hline
			\addlinespace[2pt]
			\method-Variant A                      & 87.2                                                                                            & 72.5          & 96.5          & \textbf{95.7} & 99.0          & 87.0          & 88.0          & 93.7          & 89.9          \\
			\method-Variant B                      & 85.8                                                                                            & \textbf{72.0} & \textbf{96.8} & 96.4          & \textbf{99.1} & \textbf{88.0} & 87.4          & \textbf{95.2} & \textbf{90.1} \\
			\bottomrule
		\end{tabular}
	}
	\label{tab:8vis_ours}
	\vspace{-10pt}
\end{table*}

\begin{table*}[t]
	\caption{Comparison between merging methods between our checkpoints and KnOTS checkpoints.}
	\vspace{-5pt}
	\centering
	\renewcommand{\arraystretch}{0.80}
	\resizebox{0.80\linewidth}{!}{
		\begin{tabular}{lcccccccc>{\columncolor[gray]{0.9}}c}
			\toprule
			\multirow{2}{*}{Method}              & \multicolumn{9}{c}{Dataset}                                                                                                                                        \\
			\cmidrule(lr){2-10}
			                                     & Cars                                                                                            & DTD  & EuroSAT & GTSRB & MNIST & RESISC45 & SUN397 & SVHN & Avg  \\
			\midrule
			                                     & \multicolumn{9}{c}{\textit{Per‑task absolute accuracies (\%)}}                                                                                                     \\ \cmidrule(lr){2-10}
			Finetuned (KnOTS)                    & 74.0                                                                                            & 58.3 & 99.0    & 92.7  & 99.3  & 88.4     & 64.5   & 96.2 & 84.1 \\
			Finetuned (Ours)                     & 76.2                                                                                            & 72.5 & 98.6    & 98.3  & 99.2  & 93.1     & 73.7   & 96.7 & 88.5 \\
			\midrule
			                                     & \multicolumn{9}{c}{\textit{Per‑task accuracies of merged models, normalized to finetuned (\%)}}                                                                    \\ \cmidrule(lr){2-10}
			\addlinespace[2pt]
			TA~\cite{ilharco2022editing} (KnOTS) & 82.0                                                                                            & 73.6 & 48.8    & 42.1  & 53.1  & 71.5     & 97.5   & 41.2 & 63.7 \\
			TA~\cite{ilharco2022editing} (Ours)  & 83.1                                                                                            & 68.7 & 79.7    & 93.0  & 98.8  & 79.4     & 85.2   & 93.8 & 85.2 \\ \hline
			\addlinespace[2pt]
			\method-Variant A (KnOTS)            & 84.5                                                                                            & 76.2 & 68.9    & 39.4  & 82.2  & 72.8     & 97.5   & 70.0 & 73.9 \\
			\method-Variant A (Ours)             & 86.5                                                                                            & 72.0 & 94.4    & 95.0  & 98.8  & 85.3     & 87.4   & 93.5 & 89.1 \\ \hline
			\addlinespace[2pt]
			\method-Variant B (KnOTS)            & 86.2                                                                                            & 78.4 & 76.8    & 42.9  & 82.7  & 75.4     & 98.6   & 69.7 & 76.3 \\
			\method-Variant B (Ours)             & 86.1                                                                                            & 72.7 & 95.7    & 95.6  & 99.0  & 86.4     & 88.0   & 94.2 & 89.7 \\
			\bottomrule
		\end{tabular}
	}
	\label{tab:pertask_eight_comp}
\end{table*}

\begin{table*}[t]
	\centering
	\caption{
		Using our checkpoints.
		Generalization results on two unseen tasks when merging ViT-B/32 models trained on six tasks.
	}
	\vspace{-5pt}
	\renewcommand{\arraystretch}{0.90}
	\resizebox{0.90\linewidth}{!}{
		\begin{tabular}{l|ccccccc|ccc|c}
			\toprule
			                                          & \multicolumn{7}{c|}{{Seen Tasks}}                                                                & \multicolumn{3}{c|}{{Unseen Tasks}} & \multicolumn{1}{c}{{All Tasks}}                                                                                                                                          \\
			\midrule \midrule
			{Method}                                  & {Cars}                                                                                           & {DTD}                               & {GTSRB}                         & {RESISC45}    & {SUN397}      & {SVHN}        & \textbf{Avg Acc} & {EuroSAT}     & {MNIST}       & \textbf{Avg Acc} & \textbf{Avg Acc} \\
			\midrule
			                                          & \multicolumn{11}{c}{\textit{Per-task accuracies of merged models, normalized to finetuned (\%)}}                                                                                                                                                                                                                  \\ \cmidrule(lr){2-12}
			{TA}~\cite{ilharco2022editing}            & 85.6                                                                                             & 73.0                                & 94.7                            & 82.4          & 88.4          & 95.0          & 86.5             & 47.3          & 78.1          & 62.7             & 80.6             \\
			{TIES}~\cite{yadav2023ties}               & 76.5                                                                                             & 62.5                                & 71.4                            & 73.3          & 87.5          & 75.0          & 74.4             & 49.0          & 64.5          & 56.8             & 69.9             \\
			{KnOTS-TIES}~\cite{stoica2025knots}       & 85.0                                                                                             & 69.4                                & 83.9                            & 81.1          & 89.4          & 84.8          & 82.3             & 56.6          & 73.4          & 65.0             & 78.0             \\
			{KnOTS-DARE-TIES}~\cite{stoica2025knots}  & 85.7                                                                                             & 69.3                                & 85.0                            & 81.4          & \textbf{90.3} & 86.4          & 83.0             & 56.7          & 74.0          & 65.4             & 78.6             \\
			{LoRA-LEGO}~\cite{zhao2025loralego}       & 85.2                                                                                             & 72.9                                & 94.9                            & 82.3          & 88.3          & 94.8          & 86.4             & 46.9          & 77.8          & 62.4             & 80.4             \\
			{AdaMerging~\cite{yang2023adamerging}  }  & 87.1                                                                                             & 73.8                                & 93.4                            & \textbf{91.2} & 88.4          & 96.2          & 88.4             & 52.2          & 83.1          & 67.7             & 83.2             \\
			\rowcolor[gray]{0.9} {\method-Variant A } & \textbf{87.4}                                                                                    & \textbf{74.9}                       & 94.9                            & 89.6          & 89.2          & \textbf{96.7} & \textbf{88.8}    & 57.4          & 86.6          & 72.0             & 84.6             \\
			\rowcolor[gray]{0.9} {\method-Variant B } & 87.2                                                                                             & 74.7                                & \textbf{95.4}                   & 89.4          & 89.3          & \textbf{96.7} & \textbf{88.8}    & \textbf{63.4} & \textbf{87.7} & \textbf{75.5}    & \textbf{85.5}    \\

			\midrule
			\midrule
			{Method}                                  & {Cars}                                                                                           & {DTD}                               & {EuroSAT}                       & {GTSRB}       & {MNIST}       & {SUN397}      & \textbf{Avg Acc} & {RESISC45}    & {SVHN}        & \textbf{Avg Acc} & \textbf{Avg Acc} \\
			\midrule
			                                          & \multicolumn{11}{c}{\textit{Per-task accuracies of merged models, normalized to finetuned (\%)}}                                                                                                                                                                                                                  \\ \cmidrule(lr){2-12}
			{TA}~\cite{ilharco2022editing}            & 87.2                                                                                             & 75.6                                & 86.4                            & 95.6          & 98.4          & 90.6          & 89.0             & 60.4          & 61.6          & 61.0             & 82.0             \\
			{TIES}~\cite{yadav2023ties}               & 74.7                                                                                             & 62.8                                & 65.3                            & 74.6          & 88.3          & 88.0          & 75.6             & 65.1          & 48.9          & 57.0             & 71.0             \\
			{KnOTS-TIES}~\cite{stoica2025knots}       & 86.7                                                                                             & 70.8                                & 80.8                            & 83.3          & 94.1          & 90.4          & 84.3             & \textbf{67.5} & 57.2          & 62.4             & 78.8             \\
			{KnOTS-DARE-TIES}~\cite{stoica2025knots}  & 87.6                                                                                             & 71.0                                & 80.4                            & 85.3          & 95.1          & 90.7          & 85.0             & 66.7          & 56.2          & 61.4             & 79.1             \\
			{LoRA-LEGO}~\cite{zhao2025loralego}       & 87.0                                                                                             & 75.3                                & 86.0                            & 95.9          & 98.5          & 90.2          & 88.8             & 60.0          & 61.5          & 60.8             & 81.8             \\
			{AdaMerging~\cite{yang2023adamerging}  }  & 89.7                                                                                             & 74.7                                & 96.7                            & 95.0          & \textbf{99.4} & 90.4          & 91.0             & 60.1          & \textbf{63.7} & 61.9             & 83.7             \\
			\rowcolor[gray]{0.9} {\method-Variant A } & \textbf{89.9}                                                                                    & 75.0                                & 97.1                            & 96.2          & 99.3          & 90.9          & \textbf{91.4}    & 61.2          & 63.5          & \textbf{62.4}    & \textbf{84.1}    \\
			\rowcolor[gray]{0.9} {\method-Variant B } & 89.7                                                                                             & 74.9                                & \textbf{97.4}                   & \textbf{96.4} & 99.2          & \textbf{91.0} & \textbf{91.4}    & 60.7          & 58.6          & 59.7             & 83.5             \\
			\bottomrule
		\end{tabular}
	}
	\label{tab:general_ours}
\end{table*}

\begin{table*}[t]
	\caption{Using our checkpoints with LoRA (Rank-4). Per-task accuracy on eight image-classification benchmarks.
		We merge eight ViT-B/32 checkpoints, each finetuned with LoRA.
		The upper panel shows the per-task absolute accuracy of the finetuned baselines; the lower panel reports accuracy of merged models, normalized by their corresponding finetuned baseline (\%).}
	\vspace{-5pt}
	\centering
	\renewcommand{\arraystretch}{0.85}
	\resizebox{0.85\linewidth}{!}{
		\begin{tabular}{lcccccccc>{\columncolor[gray]{0.9}}c}
			\toprule
			\multirow{2}{*}{Method}                & \multicolumn{9}{c}{Dataset}                                                                                                                                                                                                     \\
			\cmidrule(lr){2-10}
			                                       & Cars                                                                                            & DTD           & EuroSAT       & GTSRB         & MNIST         & RESISC45      & SUN397        & SVHN          & Avg           \\
			\midrule
			                                       & \multicolumn{9}{c}{\textit{Per-task absolute accuracies (\%)}}                                                                                                                                                                  \\ \cmidrule(lr){2-10}
			Finetuned                              & 72.1                                                                                            & 70.2          & 98.4          & 98.1          & 99.2          & 93.3          & 73.4          & 96.7          & 87.7          \\
			\midrule
			                                       & \multicolumn{9}{c}{\textit{Per-task accuracies of merged models, normalized to finetuned (\%)}}                                                                                                                                 \\ \cmidrule(lr){2-10}
			\addlinespace[2pt]
			\textit{\textbf{Vanilla Merging}}      & \multicolumn{9}{l}{}                                                                                                                                                                                                            \\
			TA~\cite{ilharco2022editing}           & 87.4                                                                                            & 76.2          & 74.0          & 78.1          & 98.1          & 83.7          & 89.7          & 92.3          & 84.9          \\
			TIE~\cite{yadav2023ties}               & 81.3                                                                                            & 64.4          & 46.5          & 51.9          & 77.4          & 69.1          & 86.8          & 64.1          & 67.7          \\
			DARE-TIES~\cite{yu2024language}        & 81.2                                                                                            & 65.4          & 50.9          & 48.4          & 76.3          & 69.8          & 86.7          & 63.5          & 67.8          \\
			AdaMerging~\cite{yang2023adamerging}   & 87.9                                                                                            & 75.1          & 94.7          & 92.0          & 98.2          & 86.9          & 89.5          & 91.1          & 89.4          \\ \hline
			\addlinespace[2pt]
			\textit{\textbf{LoRA-aware Merging}}   & \multicolumn{9}{l}{}                                                                                                                                                                                                            \\
			SVD~\cite{tang2025lora}                & 80.5                                                                                            & 62.7          & 40.6          & 73.7          & 94.4          & 69.3          & 87.3          & 92.4          & 75.1          \\
			Linear~\cite{peft}                     & 44.0                                                                                            & 25.6          & 12.4          & 37.8          & 71.5          & 23.3          & 44.2          & 51.7          & 38.8          \\
			KnOTS-TIES~\cite{stoica2025knots}      & 88.2                                                                                            & 72.2          & 72.5          & 62.3          & 89.7          & 79.3          & 89.5          & 77.0          & 78.8          \\
			KnOTS-DARE-TIES~\cite{stoica2025knots} & 87.7                                                                                            & 73.0          & 73.4          & 65.0          & 90.4          & 79.9          & 89.5          & 78.8          & 79.7          \\
			LoRA-LEGO~\cite{zhao2025loralego}      & 88.2                                                                                            & 75.8          & 74.8          & 76.9          & 97.5          & 83.8          & \textbf{90.4} & 91.5          & 84.8          \\ \hline
			\method-Variant A                      & 88.7                                                                                            & 76.4          & \textbf{94.9} & 92.1          & 98.3          & 86.8          & 90.0          & 92.0          & 89.9          \\
			\method-Variant B                      & \textbf{89.8}                                                                                   & \textbf{76.7} & 94.6          & \textbf{92.8} & \textbf{98.4} & \textbf{88.0} & 90.1          & \textbf{93.2} & \textbf{90.5} \\
			\bottomrule
		\end{tabular}
	}
	\label{tab:8vis_rank4}
\end{table*}


\paragraph{Merging Models with LoRA Rank 4.}
LoRA is known to exhibit weaker cross-task alignment than full-rank fine-tuning, so LoRA-aware merging becomes especially important at low ranks. Using our CLIP ViT-B/32 checkpoints fine-tuned with rank-4 adapters, we evaluate vanilla and LoRA-aware merging in \Cref{tab:8vis_rank4}. Vanilla methods (e.g., TIES) degrade substantially, LoRA-aware baselines (KnOTS-TIES, LoRA-LEGO) recover more accuracy, and \method\ achieves the strongest results overall: Variant B (500 iters) attains the best average normalized accuracy at \textbf{90.5}\% and leads on most datasets. These rank-4 outcomes mirror the trends at higher ranks and underscore the benefit of modeling subspace coverage and anisotropy in low-rank LoRA merging.

\clearpage
\clearpage

\paragraph{Ablation Study on LoRA Rank.}
We evaluate the impact of varying LoRA ranks $r$ on the merging performance using the ViT-B/32 backbone. \Cref{tab:rank_ablation} presents the absolute and normalized accuracy across different rank configurations. TARA achieves higher accuracy than the baselines across all evaluated ranks. Notably, in the `Full' setting, where the LoRA rank matches the ViT embedding dimension to emulate full-parameter fine-tuning, TARA does not simply converge to the FFT merging solution, rather, it maintains a distinct performance margin.

\begin{table}[h!]
	\centering
    \vspace{-5pt}
	\caption{Absolute (Normalized) Accuracy across LoRA Ranks.}
	\label{tab:rank_ablation}
	\vspace{-5pt}
	\setlength{\tabcolsep}{4pt}
	\renewcommand{\arraystretch}{1.0}
	\resizebox{0.99\linewidth}{!}{
		\begin{tabular}{l|ccccc}
			\toprule
			\textbf{Method} & \textbf{r=4} & \textbf{r=16} & \textbf{r=64} & \textbf{r=256} & \textbf{Full} \\
			\midrule
			TA~\cite{ilharco2022editing}              & 71.9(82.9)   & 76.2(86.2)    & 77.9(86.2)    & 77.1(85.3)     & 72.0(80.8)    \\
			Adamerging~\cite{yang2023adamerging}      & 76.3(87.2)   & 78.7(88.5)    & 80.2(88.7)    & 79.8(88.6)     & 79.1(88.2)    \\
			TARA            & 77.5(88.8)   & 80.5(90.6)    & 81.9(90.5)    & 80.9(89.7)     & 80.5(89.5)    \\
			\bottomrule
		\end{tabular}}
    \vspace{-5pt}
\end{table}

\paragraph{Efficiency Analysis with Recent Baselines} 
To assess computational efficiency, we compare TARA against recent gradient-based merging methods, CALM~\cite{yan2025calm} and FW-Merging~\cite{chen2025fw}, using the CLIP-ViT-B/32 backbone. \Cref{tab:baselines} reports the average accuracy, runtime, and peak memory (VRAM) usage. The results indicate that TARA requires significantly less runtime and memory overhead while maintaining competitive accuracy compared to other gradient-based approaches.

\begin{table}[h!]
	\centering
	\caption{Comparison with Recent Baselines.}
	\vspace{-5pt}
	\label{tab:baselines}
	\renewcommand{\arraystretch}{1.0}
	\resizebox{0.99\linewidth}{!}{
		\setlength{\tabcolsep}{3pt}
		\begin{tabular}{c|cccc}
			\toprule
			\textbf{Method} & \textbf{Adamerging}~\cite{yang2023adamerging} & \textbf{CALM}~\cite{yan2025calm} & \textbf{FW-Merging}~\cite{chen2025fw} & \textbf{TARA-variant B} \\
			\midrule
			Avg. Acc.       & 78.7                & 80.0          & 77.9                & 80.5                    \\
			Time (min)      & 4.1                 & 20.0          & 13.4                & 5.1                     \\
			VRAM (MiB)      & 4344                & 8066          & 7542                & 5922                    \\
			\bottomrule
		\end{tabular}}
    \vspace{-5pt}
\end{table}

\paragraph{Extended Evaluation on Vision Benchmarks.}
To verify the scalability of our approach, we extend the evaluation to different model scales (ViT-B/32 and ViT-L/14) and varying numbers of tasks (8, 14, and 20) following settings similar to those in FW-Merging~\cite{chen2025fw}. As detailed in \cref{tab:vision_bench}, TARA consistently demonstrates comparable or superior absolute and normalized accuracy across these configurations relative to recent baselines. This performance gap is particularly evident in large-scale settings, such as merging 20 tasks with the ViT-L/14 backbone.

\begin{table}[h!]
	\centering
	\caption{Absolute (Normalized) accuracy on vision benchmark.}
	\label{tab:vision_bench}
	\vspace{-5pt}
	\renewcommand{\arraystretch}{1.0}
	\resizebox{0.99\linewidth}{!}{
		\setlength{\tabcolsep}{3pt}
		\begin{tabular}{l|cccc|c}
			\toprule
			\textbf{Backbone (\# Tasks)} & \textbf{TA}~\cite{ilharco2022editing} & \textbf{AdaMer.}~\cite{yang2023adamerging} & \textbf{CALM}~\cite{yan2025calm}  & \textbf{FW-Mer.}~\cite{chen2025fw} & \textbf{TARA} \\
			\midrule
			ViT-B/32 (8)                 & 76.2(86.2)  & 78.7(88.5)       & 80.0(90.6)    & 77.9(87.6)       & 80.5(90.6)    \\
			ViT-B/32 (14)                & 73.6(83.6)  & 75.0(84.8)       & 74.7(84.6)    & 75.4(85.2)       & 75.4(85.2)    \\
			ViT-B/32 (20)                & 64.6(72.7)  & 66.6(74.8)       & 66.9(75.2)    & 68.2(77.2)       & 68.7(77.2)    \\
			\midrule
			ViT-L/14 (8)                 & 88.0(95.8)  & 88.9(96.7)       & 83.0(90.3)    & 87.0(94.7)       & 88.9(96.7)    \\
			ViT-L/14 (14)                & 85.0(92.7)  & 85.2(93.0)       & 80.5(87.7)    & 85.7(93.6)       & 85.9(93.7)    \\
			ViT-L/14 (20)                & 80.7(87.2)  & 81.3(87.8)       & 78.5(84.5)    & 77.4(84.1)       & 81.5(88.0)    \\
			\bottomrule
		\end{tabular}
	}
    \vspace{-5pt}
\end{table}

\end{document}